\documentclass[twoside,11pt]{article}

% Any additional packages needed should be included after jmlr2e.
% Note that jmlr2e.sty includes epsfig, amssymb, natbib and graphicx,
% and defines many common macros, such as 'proof' and 'example'.
%
% It also sets the bibliographystyle to plainnat; for more information on
% natbib citation styles, see the natbib documentation, a copy of which
% is archived at http://www.jmlr.org/format/natbib.eps

\usepackage{jmlr2e}

\usepackage{graphicx}
\usepackage{algorithm}
\usepackage{algorithmic}
\usepackage{tabularx}
\usepackage{mymacro}
\usepackage{amsmath,txfonts,bm}
\usepackage{multirow}

\usepackage{mathrsfs}

% Definitions of handy macros can go here

% Heading arguments are {volume}{year}{pages}{submitted}{published}{author-full-names}

\jmlrheading{1}{2000}{1-48}{4/00}{10/00}{Katsuhiko Ishiguro, Issei Sato,
and Nanori Ueda}

% Short headings should be running head and authors last names

\ShortHeadings{CVB Inference of IRM}{Ishiguro, Sato, and Ueda}
\firstpageno{1}

\begin{document}

\title{Collapsed Variational Bayes Inference of Infinite Relational Model}

\author{\name Katsuhiko Ishiguro \email ishiguro.katsuhiko@lab.ntt.co.jp \\
       \addr NTT Communication Science Laboratories\\
       NTT Corporation\\
       Kyoto 619-0237, Japan
       \AND
       \name Issei Sato \email sato@r.dl.itc.u-tokyo.ac.jp \\
       \addr Information Technology Center\\
       The University of Tokyo\\
       Tokyo 113-0033, Japan
	   \AND
	   \name Naonori Ueda \email ueda.naonori@lab.ntt.co.jp \\
       \addr NTT Communication Science Laboratories\\
       NTT Corporation\\
       Kyoto 619-0237, Japan}

\editor{Unknown Editor}

\maketitle

\begin{abstract}%   <- trailing '%' for backward compatibility of .sty file
The Infinite Relational Model (IRM) is a probabilistic model for relational data clustering 
that partitions objects into clusters based on observed relationships. 
This paper presents Averaged CVB (ACVB) solutions for IRM, 
convergence-guaranteed and practically useful fast Collapsed Variational Bayes (CVB) inferences. 
We first derive ordinary CVB and CVB0 for IRM based on the lower bound maximization. 
CVB solutions yield deterministic iterative procedures 
for inferring IRM given the truncated number of clusters. 
Our proposal includes CVB0 updates of hyperparameters including the concentration parameter of the
Dirichlet Process, which has not been studied in the literature. 
To make the CVB more practically useful, we further study the CVB inference in two aspects. 
First, we study the convergence issues and 
develop a convergence-guaranteed algorithm for any CVB-based 
inferences called ACVB, which enables automatic 
convergence detection and frees non-expert practitioners from 
difficult and costly manual monitoring of inference processes. 
Second, we present a few techniques for speeding up IRM inferences. 
In particular, we describe the linear time inference of CVB0, allowing the IRM for larger relational data uses. 
The ACVB solutions of IRM showed comparable or better performance 
compared to existing inference methods in experiments, 
and provide deterministic, faster, and easier convergence detection. 
\end{abstract}

\begin{keywords}
nonparametric Bayes, 
infinite relational models, 
collapsed variational Bayes inference, 
averaged CVB, 
relational data analysis
\end{keywords}

\section{Introduction}
Analysis of pairwise relational data, such as
friend-links on social network services (SNS), 
customer records of purchases in online shops, 
and bibliographic citations between scientific articles, 
is useful in many ways. 
Many statistical models for relational data
have been presented in the literature~\citep{Clauset08,Erosheva04,LibenNowell_Kleinberg03,Zhu09}. 
Among them, 
the infinite relational model (IRM) proposed by~\citet{Kemp06} achieves simultaneous bi-clustering 
on the row and column dimensions of a given pairwise relational data matrix. 
For example, in the case of customer records, rows and columns correspond 
to users and items. In such a case, the row and column clusters are 
interpreted as latent user groups and item topics, respectively. 
IRM adopts nonparametric Bayes modeling and so can automatically estimate the number of clusters. 
This makes IRM a convenient tool for 
relational data analysis without the need for careful model selection. 

Two Bayesian inference algorithms are frequently used for 
probabilistic generative models including IRM: 
the Gibbs sampler and variational Bayes. 
The former guarantees asymptotic convergence to the true
posteriors of random variables given infinitely many stochastic samples.  
Variational Bayes (VB) solutions often enjoy faster convergence
with deterministic iterative computations and 
massively parallel computation thanks to the factorization. 
The VB approaches also allow easy and automatic detection of convergence.  
Instead of these favorable properties, VB only yields local optimal 
solutions due to factorized approximated posteriors. 

We can improve these inference methods by developing collapsed estimators,
which integrate out some parameters from inferences. 
Collapsed Gibbs samplers are one of the best inference solutions as they achieve faster convergence and better estimation than the original Gibbs samplers. 
Recently, collapsed variational Bayes (CVB) solutions have been
intensively studied, especially for 
topic models such as latent Dirichlet allocation 
(LDA)~\citep{Teh07NIPS,Asuncion09,Sato_Nakagawa12} 
and HDP-LDA~\citep{Sato12}. 
The original paper~\citep{Teh07NIPS} examined a 2nd-order Taylor approximation of the variational expectation. A simpler 0th order-approximated CVB 
(CVB0) solution also has been developed; it is an optimal solution in the sense of minimized $\alpha$-divergence ~\citep{Sato_Nakagawa12}. 
These papers report that CVB and CVB0 yield better inference results than VB solutions, even slightly better than exact collapsed Gibbs, 
in data modeling ~\citep{Kurihara07,Teh07NIPS,Asuncion09}, link 
predictions, and neighborhood search~\citep{Sato12}. 

Most IRM papers to date ~\citep{Kemp06,AISTATS12,Morup10,Albers13} 
rely on (collapsed) Gibbs samplers. 
However, the automatic convergence detection of stochastic sampling-based 
Gibbs is difficult to achieve~\citep{Cowles_Carlin96}. 
This is not preferable for non-expert users to employ IRM in practical uses. 
Further, \citep{Albers13} reported that the naive implementation of (collapsed) Gibbs 
is very slow in mixing for IRM applications. 
However, interestingly, 
there has been no attempt to use VB for IRM to the best of our knowledge, 
even though VB allows easy and automatic detection of convergence, plus fast deterministic computations. 
One reason is that VB may perform poorly for IRM because
IRM solves difficult partitioning problems with many local optima. 
CVB and CVB0 are promising alternatives to VB, but 
most CVB studies have focused on topic models, 
which well suits simple Bag-of-Word style data sets. 
The only exceptions are CVB for Probabilistic Context-Free Grammars~\citep{Wang_Blunsom13ACL} and Hidden Markov
Models~\citep{Wang_Blunsom13}. 

In this paper, we first formulate and derive the CVB inference of IRM for relational
data analysis as fast, deterministic and precise inference algorithms, 
which replace naive VB. 
Furthermore, we derive update rules of hyperparameters based on CVB0; 
thus we can automatically optimize all hyperparameters. 
In particular, the update of the concentration parameter of DP 
has not been studied in the literature, which plays an important role in nonparametric Bayes.  
In \mytabref{tab:IRM_works}, we summarize the existing inference algorithms used in the 
previous IRM studies and this paper. 

\begin{table}[t]
\caption{Inference algorithms presented in existing IRM works and this paper. }
\label{tab:IRM_works}
\begin{center}
\begin{tabularx}{150mm}{X||c|c|c|c|l}
Paper & (collapsed) Gibbs & VB & CVB & CVB0 & Comments \\ \hline 
\citet{Kemp06} & $\checkmark$ & - & - & - & The seminal paper \\
\citet{Morup10} & $\checkmark$ & - & - & - &  Application to fMRI \\
\citet{Hansen11} & $\checkmark$ & - & - & - & GPU impl. \\
\citet{Albers13} & $\checkmark$ & - & - & - & OpenMP impl. \\ \hline
\citet{AISTATS12} & $\checkmark$ & - & - & - & Noise filtering extension \\ \hline
This paper & $\checkmark$ & $\checkmark$ & $\checkmark$ & $\checkmark$ & Fully covers
\end{tabularx}
\end{center}
\end{table}

Next, we study CVB inference of IRM in two aspects 
to make the CVB inference easier to use for practitioners. 
% convergence
The first aspect is the convergence. 
The convergence behavior of CVB inference is still difficult to analyze theoretically, 
and there is no convergence guarantee for the general CVB inferences. 
This problem, interestingly, has not been much discussed in the literature. 
The sole exception is \citep{Foulds13}, which uses online stochastic learning valid for LDA. 
However, this is a tricky and problematic issue for practitioners who are not familiar with but want to try state-of-the-art machine learning techniques. 
Users are required to determine the convergence of CVB inference manually: 
this is not an easy task for non-expert users. 
In that sense, CVB is not as favorable as naive VB and EM algorithms. 
In this paper, we empirically study the convergence behaviors of CVB inference for IRM. 
We first monitor the naive variational lower bound and the pseudo leave-one-out 
training log likelihood, and empirically show that both may serve as
convergence detectors. 
Then, we develop a simple and effective technique that assures convergence of CVB 
for any probabilistic model including IRM. 
The proposed annealing technique called 
\textbf{Averaged CVB (ACVB)} guarantees the convergence of CVB and allows automatic convergence detection. 
ACVB has two advantages. 
First, ACVB posterior update offers assured convergence thanks to its simple annealing mechanism. 
Second, the stationary point of the CVB lower bound is equivalent to the converged solution of ACVB, 
if the lower bound has a stationary point (an issue unresolved in the literature). 
Our formulation is applicable to any model, and is equally valid for CVB and CVB0. 
Convergence-guaranteed ACVB is the preferred choice for practitioners who want to apply state-of-the-art inference to their problems. 
In \mytabref{tab:CVB_works},  we summarize the existing CVB works and this paper, 
based on applied models and the convergence issue. 

\begin{table}[t]
\caption{CVB-related studies summary: in terms of applied models and convergence. }
\label{tab:CVB_works}
\begin{center}
\begin{tabularx}{150mm}{X||c|c|p{10em}}
Paper & Applied model & Convergence & Comments \\ \hline 
\citet{Teh07NIPS} & LDA-CVB & - & The seminal paper \\
\citet{Asuncion09} & LDA-CVB, CVB0 & - & Introduces CVB0 \\
\citet{Sato_Nakagawa12} & LDA-CVB, CVB0 & - & Optimality analysis by $\alpha$-divergence \\
\citet{Foulds13} & LDA-CVB0 & $\checkmark$ & Stochastic approx. valid for LDA  \\ \hline
\citet{Kurihara07} & DPM-CVB & - & First attempt at DPM \\
\citet{Teh08NIPS_HDP} & HDP-CVB & - & First attempt at HDP \\
\citet{Sato12} & HDP-CVB, CVB0 & - & Approx. solution \\ \hline
\citet{Wang_Blunsom13} & HMM-CVB0 & - & First attempt at HMM \\ 
\citet{Wang_Blunsom13ACL} & PCFG-CVB0 & - & First attempt at PCFG \\ \hline
This paper & IRM-CVB, CVB0 & $\checkmark$ & First attempt at IRM, convergence assurance for any model
\end{tabularx}
\end{center}
\end{table}

% speed up
The second aspect is computational cost. 
In naive implementation, CVB inference of IRM requires square time over the number of objects. 
For example, if the buy-product relations $\bm{X}$ are observed between 
$N_{1}$ users and $N_{2}$ items, and we assume $K_{1}$ and $K_{2}$ latent clusters among 
users and items respectively, then the inference costs $O\left(K_{1} K_{2} N_{1} N_{2}\right)$ per iteration. 
This makes the IRM an impractical solution for large relational data. 
In this paper, we describe how to mitigate this computational cost, especially for the CVB0 solution. 
As a result, we show that we can solve the CVB0 solution 
by $O\left(K_{1} K_{2} L (N_{1} + N_{2})\right)$, 
linear to the number of users and items, where $L$ denotes the average degree. 
Combining these techniques, we propose 
a practically useful \textbf{ACVB0} solution for IRM, 
with easy detection of guaranteed convergence and 
linear-time computation. 

We experimentally show that ACVB solutions for IRM
offer comparable, or even better performance 
in terms of data modeling (test likelihood) than naive variational Bayes on multiple synthetic and real-world relational data sets.  
ACVB0 convergences are magnitude faster than the VB and ACVB in terms of CPU times, 
presented stable and nice convergence behaviors throughout the datasets. 
%and it is easy to detect convergence in our experiments. 
In addition, we demonstrate the scalability of 
the proposed ACVB0 solution to larger relational data 
by employing the linear time inference algorithm. 
Based on these findings, we conclude that 
the ACVB0 inference solution on IRM is convenient and appealing for 
practitioners who work with relational data; 
it shows good modeling performance, assures automatic convergence, 
and is fast by linear time inference. 

The contributions of this paper are three-fold. 
\begin{enumerate}
 \item We first present Collapsed  Bayes solutions (CVB and CVB0)
  for inference of IRM, which is used for relational data analysis.  
  The CVB solutions are fast, precise and deterministic inference algorithms. 
  We also present update rules of hyperparameters, including the concentration parameter of the Dirichlet Process. 
 \item We empirically study the convergence behaviors of CVB
 solutions. Along with that, we propose a simple but effective
 annealing technique called Averaged CVB (ACVB) that assures the convergence of the CVB inference for any model. 
 \item We show techniques to speed up the (A)CVB inference.  
  In particular, one of them allows us to solve (A)CVB0 of IRM in linear time to the number of objects. This linear time algorithm is more effective when the relational data is sparse, which is typical in real datasets. 
\end{enumerate}

The rest of this paper is organized as follows. 
In the next section, we introduce the IRM model. 
In the third section, we briefly review collapsed Gibbs sampler
solutions of IRM and related works.  
We introduce a naive VB solution of IRM in the fourth section. 
As stated, the VB solution is not good for IRM, 
but it serves as a stepping stone for the CVB. 
Section 5 presents our CVB solutions.
Section 6 presents convergence issues, including convergence-assured
annealing technique (ACVB). 
Section 7 discusses the speeding-up technique, including linear time inference of (A)CVB0. 
The eighth section is devoted to experimental evaluations,  
and the final section concludes the paper. 

\section{Infinite Relational Models and Related Works}
First, we introduce the infinite relational model (IRM)~\citep{Kemp06}, 
which estimates the unknown number of hidden clusters within a set of relational
data. Then, we review some related works. 

\subsection{Dirichlet Process Mixture}
In the infinite relational model (IRM)~\citep{Kemp06}, 
the Dirichlet process (DP) is used as a prior for clusters of an unknown
number; intuitively, it is equivalent to an infinite-dimensional Dirichlet distribution.
The Dirichlet Process Mixture (DPM) is a probabilistic generative model 
that uses DP for the prior of mixture proportions. 
We can implement DPM by using either a Stick-Breaking
Process (SBP)~\citep{Sethuraman94} or a 
Chinese restaurant process (CRP)~\citep{Blackwell_MacQueen73}, 
which is a marginalized form of SBP. 
CRP is employed for the (collapsed) Gibbs sampler, 
and SBP is employed for (collapsed) variational Bayes solutions typically. 

First, let us start by explaining CRP. 
CRP is introduced as a probability distribution over a partitioning of $N$ objects. 
Let $z_{i} = k, i \in \{1, \dots, N\}, k \in \{1, \dots, K\}$ 
denote the assignment of $i$th object to the $k$th partition (cluster)
among the total of $K$ partitions. 
Then, the CRP is defined by the following equations: 
\begin{alignat}{2}
 \text{CRP}(z_{1:N} | \alpha) &= \alpha^{K} 
 \frac{
 \prod_{k=1}^{K} \left(m_{k} - 1\right)!
 }{
 \prod_{i=1}^{N}\left(\alpha+i-1\right)
 }, 
 \label{eq:CRP_def} \\
 p(z_{i}=k | z^{\setminus i}, \alpha) &=
 \begin{cases}
  \frac{m_{k}^{\setminus i}}{N-1+\alpha} & m^{\setminus i}_{k} > 0, \\
  \frac{\alpha}{N-1+\alpha} & m_{k}^{\setminus i} = 0 . 
 \end{cases}
 \label{eq:CRP_construction_def}
\end{alignat}
$\alpha > 0$ is a hyperparameter called a concentration parameter. 
Equation \ref{eq:CRP_def} shows the joint probability of $K$ partitions. 
The equation is rewritten as \myeqref{eq:CRP_construction_def}, 
which gives the probability of object $i$ being allocated to 
partition $k$ given the assignments of other objects. 
$m_{k}$ denotes the number of objects assigned to partition $k$, 
and $m_{k}^{\setminus i}$ denotes the same number excluding object $i$. 
The first part of \myeqref{eq:CRP_construction_def} indicates that 
object $i$ will be assigned to existing cluster $k$ with a probability
proportional to its membership. 
The second part of \myeqref{eq:CRP_construction_def} indicates that 
object $i$ will be assigned to a new cluster with a probability proportional to $\alpha > 0$. 
Repeatedly applying \myeqref{eq:CRP_construction_def} for each object, 
we can randomly generate cluster partitions of $N$ objects. 
For each run of CRP, we will have different clustering
of objects, and will also have a different number of resulting
clusters. 

% % %
% Stick-breaking
% % %
Next, we explain SBP. 
SBP is another construction of DPM. 
In SBP, we explicitly sample an infinite-dimensional vector of 
mixture proportions, while CRP directly samples cluster assignment $z$ 
without such a vector. 
SBP construction of DPM is described as follows: 
\begin{alignat}{2}
 v_{k} 
 &\sim 
 \text{Beta}\left(1, \alpha \right) \, ,
 \quad k = 1, \dots, \infty 
 \label{eq:stick_v_k_def} 
 \\
 \pi_{k} 
 &= 
 v_{k} \prod_{l=1}^{v-1} \left(1 - v_{l}\right) \, ,
 \quad k = 1, \dots, \infty
 \label{eq:stick_pi_k_def}
 \\
 z_{i} 
 &\sim 
 \text{Multinomial}\left(\bm{\pi} \right) \, .
 \quad i = 1, \dots, N
 \label{eq:stick_z_i_def}
\end{alignat}
Computing $\bm{\pi} = \left\{ \pi_{1}, \pi_{2}, \dots \right\}$, 
a mixing proportion vector of the mixture, 
is explained by an analogy of "stick breaking". 
We first assume a stick of length $1$, which is the full portion of clusters. 
Then, we break the stick into a stick of length $v_{1}$ (\myeqref{eq:stick_v_k_def}) 
and the remainder. 
The proportion of the first cluster $\pi_{1}$ is the length of this stick, that is,  
$v_{1}$ (\myeqref{eq:stick_pi_k_def}). 
Then, we again break the remaining stick, which has a length of $1 - v_{1}$, 
based on the ratio of $v_{2}$ (\myeqref{eq:stick_v_k_def}). 
The proportion of the second cluster $\pi_{2}$ is equivalent to the broken stick, 
that is,  $v_{2} \left(1 - v_{1}\right)$ (\myeqref{eq:stick_pi_k_def}).  
Repeating this process, we obtain the infinite-dimensional vector $\bm{\pi}$, whose sum 
equals $1$, implying an infinite number of mixture components. 
Using $\bm{\pi}$ as a mixing proportion, we can sample cluster assignments as in 
\myeqref{eq:stick_z_i_def}. 
%This SBP is a good representation to explain how DP(M) is able to represent 
%a countably infinite number of partitions (clusters): 
%the mixing proportion vector $\bm{\pi}$ is the infinite-dimensional vector. 

\subsection{Generative Models of IRMs} 
Next, we describe a probabilistic generative model of IRM. 
We assume two-place relations throughout the paper, but
extension to cover higher-order relations is straightforward. 

IRM is an application of DP for relational data. 
Let us first assume a binary two-place relation on the two sets (domains) of objects, 
namely $D_{1} \times D_{2} \rightarrow \{0, 1\}$, where
$D_{1} = \left\{ 1, \dots, i, \dots, N_{1}\right\}$ and 
$D_{2} = \left\{ 1, \dots, j, \dots, N_{2}\right\}$. 
IRM divides the set of objects into 
multiple clusters based on the observed relational data matrix of 
$\bm{X}=\{x_{i,j}\in\{0,1\}\}$. 
Data entry $x_{i,j} \in \{0, 1\}$ denotes 
the existence of 
a relation between a row (the first domain) object 
$i \in \{1, 2, \ldots, N_{1}\}$ 
and a column (the second domain) object 
$j \in \{1, 2, \ldots, N_{2}\}$. 
In an online purchase record case, 
the first domain corresponds to a user list, 
and an object $i$ denotes a specific user $i$. 
The second domain corresponds to a list of product items, 
and an object $j$ denotes a specific item $j$. 
The data entry $x_{i,j}$ represents a relation between the user $i$ and
the item $j$: namely, the purchase record. 

For such data, we define an IRM as follows: 
\begin{alignat}{2}
 \theta_{k,l} | a_{k,l}, b_{k,l} 
 &\sim
 \text{Beta}\left(a_{k,l}, b_{k,l}\right), 
 \label{eq:IRM_2dom_theta_kl_def}
 \\
 z_{1,i} | \alpha_{1} 
 &\sim 
 \text{CRP}\left( \alpha_{1} \right), 
 \label{eq:IRM_2dom_z_1i_def}
 \\
 z_{2,j} | \alpha_{2} 
 &\sim 
 \text{CRP}\left( \alpha_{2} \right), 
 \label{eq:IRM_2dom_z_2j_def}
 \\
 x_{i,j} | \bm{Z}_{1}, \bm{Z}_{2}, \{\theta\} 
 &\sim
 \text{Bernoulli}
 \left( 
 \theta_{z_{1,i}, z_{2,j}}
 \right).
 \label{eq:IRM_2dom_x_ij_def}
\end{alignat}
In \myeqref{eq:IRM_2dom_theta_kl_def}, $\theta_{k,l}$ is the strength of the relation between cluster $k$ of the first domain and cluster $l$ in the
second domain. 
$z_{1,i}$ in \myeqref{eq:IRM_2dom_z_1i_def} and 
$z_{2,j}$ in \myeqref{eq:IRM_2dom_z_2j_def} denote 
the cluster assignments in the first domain and in the second domain,
respectively. 
Throughout the paper, we interchangeably choose the 1-of-$K$
representation of $Z$: $z_{1,i} = k$ is equivalently represented by 
$z_{1,i,k} = 1, z_{1,i,l \neq k} = 0$. 
For each domain, we have a CRP prior; 
this indicates that each domain may have a different number of clusters. 
Generating the observed relational data $x_{i,j}$ follows
\myeqref{eq:IRM_2dom_x_ij_def} conditioned by the cluster assignments 
$\bm{Z}_{1} = \{z_{1,i}\}_{i=1}^{N_{1}}$, 
$\bm{Z}_{2} = \{z_{2,j}\}_{j=1}^{N_{2}}$ and
the strengths, $\theta$. 
A typical example of IRM is shown in \myfigref{fig:IRM_example}. 
IRM infers appropriate cluster assignments of objects 
$\bm{Z}_{1} = \{ z_{1,i}\}$ and $\bm{Z}_{2} = \{ z_{2,j} \}$ given 
the observation relation matrix $\bm{X} = \{ x_{i,j} \}$. 
We can interpret the clustering as the permutation of object indices so as
to discover the ``block'' structure as in \myfigref{fig:IRM_example} (b). 

\begin{figure*}[t]
 \begin{center}
  \includegraphics[width=120mm]{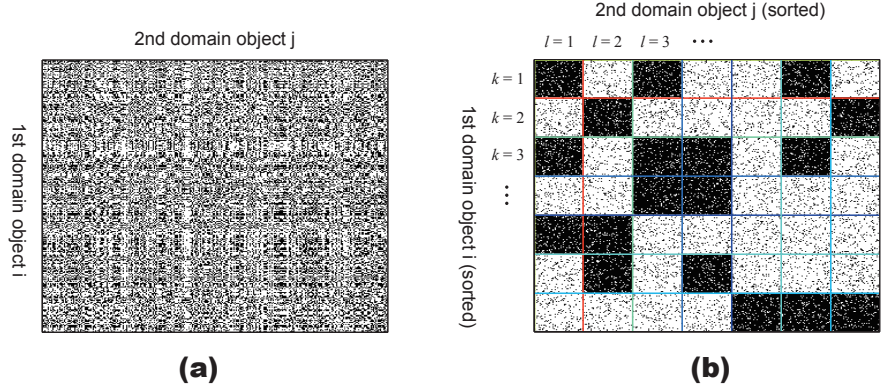}
  \caption{Example of Infinite Relational Models (IRM). (a) Input
  observation $\bm{X}$. (b) A visualization of inferred clusters $\bm{Z}$.  }
\label{fig:IRM_example}
 \end{center}
\end{figure*}

%% T1 x T1 
\subsubsection{Single-domain Model}
As a special case, we can build an IRM 
for a binary two-place relation between the same domain objects 
%$D = \left\{ 1, \dots, i, \dots, N\right\}$ as 
$D \times D \rightarrow \{0, 1\}$. 

The probabilistic generative model of the {\it single-domain} IRM is 
described as follows:  
\begin{alignat}{2}
 \theta_{k,l} | a_{k,l}, b_{k,l} 
 &\sim
 \text{Beta}\left(a_{k,l}, b_{k,l}\right), 
 \label{eq:IRM_theta_kl_def}
 \\
 z_{i} | \alpha 
 &\sim 
 \text{CRP}\left( \alpha \right), 
 \label{eq:IRM_z_i_def}
 \\
 x_{i,j} | \bm{Z}, \{\theta\} 
 &\sim
 \text{Bernoulli}
 \left( 
 \theta_{z_{i}, z_{j}}
 \right). 
 \label{eq:IRM_x_ij_def}
\end{alignat}
The generative model clearly shows the difference of the 
{\it multi-domain} IRM~(Eqs. (\ref{eq:IRM_2dom_theta_kl_def}-\ref{eq:IRM_2dom_x_ij_def}))
and the single-domain IRM~(Eqs. (\ref{eq:IRM_theta_kl_def}-\ref{eq:IRM_x_ij_def})). 
In the single-domain IRM, 
there are only $N$ objects in the domain $D$, 
and they serve as either from-nodes or to-nodes 
in the network. 
Object indices $i$ and $j$ point to the same domain. 
On the other hand, the multi-domain IRM distinguishes 
the first domain object $i$ and the second domain object $j$. 

Let us explain the difference by a simple SNS example. 
Imagine we are given the SNS relation data where 
$N$ users are mutually interconnected to others. 
An observed relation is a binary value $x_{i,j}$, which indicates 
there is a link from user $i$ to user $j$. 
In the case of (multi-domain) IRM, the first domain is a collection of
$N$ users who act as from-nodes. The second domain is a collection of 
$N$ users who act as to-nodes. 
We assume that each user has a different ``role'' in 
reaching a link to others (the first domain) and 
accepting a link from others (the second domain). 
Contrarily, in the case of the single-domain IRM, 
each user $i$ is assigned with a single cluster assignment. 
The user has her own ``role'' in the network, and this role 
is used in either reaching a link or accepting a link. 

Obviously, the single-domain IRM is not applicable when the 
number of from-nodes and to-nodes are different. 
Thus, the model has a specific and limited applicability compared to the 
multi-domain IRM. 
Afterward, we focus on the multi-domain IRM in this paper, 
but whole discussions are also valid for single-domain IRM. 

\subsection{Related Works}

% % %
% stochastic block models: IRM studies: dIRM, Subuset
% % %
IRM~\citep{Kemp06} is a rather old model for formulating general relationship observations. 
One drawback of IRM is that it allows a node to have only one cluster assignment. 
Mixed Membership Stochastic Blockmodel (MMSB)~\citep{Airoldi08} is a finite-cluster model that allows the nodes 
to have multiple cluster assignment, and change the clusters edge by edge. 
\citep{Miller09} employs the Indian Buffet Process (IBP, see ~\citep{Griffiths_Ghahramani11} for a review) to 
handle countably infinite binary factors for each node. 
The Infinite Latent Attribute Model~\citep{Palla12ICML} further allows flexible modeling of networks where 
an infinitely many ``views'' have their own clustering of nodes. 
Compared to these models, IRM is inferior in the potential modeling capability but is superior in 
easier interpretation of the clustering results: 
after all, ``multiple clusters for a single node'' is somewhat counter-intuitive for non-expert users. 
Of course, IRM is a simpler model; thus the inference scheme is also much simpler than these advanced models. 

% % %
% Network studies: mixed memberships, Infinite latent attributes, etc 
% % %
Recently, probabilistic models for pure networks have attracted much attention in machine learning. 
By ``pure'' networks, we mean that 
observed relations are limited to the single-domain case: that is,  $T \times T \rightarrow \{1, 0\}$. 
%Because of the rapid growth of several social network services, 
% Ho11AISTATS - hierarhcy
\citep{Ho11} introduces a nested Chinese Restaurant Process (nCRP)~\citep{Blei10} to 
incorporate multiscale membership for the MMSB. 
% Yin13NIPS - triangular
\citep{Ho12} proposed a bag of triangular representations of a network. 
The representation is based on the triplet of nodes. Possible connections among three nodes are 
(i) all three nodes are connected in a circuit, 
(ii) all three nodes are connected in a line (no link between a specific pair of nodes), 
(iii) two nodes are connected and one is isolated, 
and 
(iv) all nodes are separated. 
\citep{Yin13} combines the triangular representations with a simpler probabilistic generative model to 
achieve a scalable algorithm for large networks, which limits the cardinality of the triangle ``cluster assignments'' variety 
in the likelihood function. 
% Yang13ICDM: 
\citep{Yang13} employ the edge structure and node attributes to 
find communities within large networks. The model is called CESNA, 
consisting of a soft-max-based binary node attribute model and 
an affiliated network model~\citep{Yang_Leskovec13}. 
These recent works make the model scalable against very large networks consisting of 
millions of nodes. 
However, none of these works consider the cross-domain $T_{1} \times T_{2}$ relational observations 
that are mainly discussed in this IRM paper. 

% % %
% CVB works
% % %
As briefly described in the introduction, 
collapsed variational Bayes (CVB) solutions have been intensively studied, especially for 
topic models such as latent Dirichlet allocation (LDA)~\citep{Teh07NIPS,Asuncion09,Sato_Nakagawa12} 
and HDP-LDA~\citep{Sato12}. 
\citep{Hensman12} introduced a different view of CVB in a wide scope of exponential families. 
Only a few researchers have examined CVB in 
models for structured data such as Probabilistic Context-Free Grammars (PCFG)~\citep{Wang_Blunsom13ACL} and Hidden Markov
Models (HMM)~\citep{Wang_Blunsom13}. 
As stated, CVB solutions of IRM are first introduced to the literature 
to the best of our knowledge.  

%%%%%%%%%%%%%%%%%
% Gibbs solution
%%%%%%%%%%%%%%%%%
\section{Collapsed Gibbs Sampler Solution}
Before deriving CVB solutions, we review the collapsed Gibbs sampler 
here to facilitate the derivation of CVB solutions.  
Let us define the counting statistics that are maintained during sampling. 
$n_{k,l}$ and $N_{k,l}$ %in \myeqref{eq:IRM_2dom_Gibbs_n_N_kl_def} 
denote the number of positive ($x=1$) and
negative ($x=0$) relation observations in the $(k,l)$-cluster pairs, respectively. 
$m_{k}$ is the same quantity as used in 
Eqs. (\ref{eq:CRP_def}, \ref{eq:CRP_construction_def}). 
\begin{alignat}{2}
 n_{k,l} 
 &=
 \sum_{i} \sum_{j} z_{1,i,k} z_{2,j,l} x_{i,j}
 \, , \quad 
 N_{k,l} 
 =
 \sum_{i} \sum_{j} z_{1,i,k} z_{2,j,l} \left(1 - x_{i,j}\right) \, , 
 \label{eq:IRM_2dom_Gibbs_n_N_kl_def} 
 \\
 m_{1,k} 
 &=
 \sum_{i} z_{1,i,k}
 \, , \quad
 m_{2,l} 
 =
 \sum_{j} z_{2,j,l} \, .
 \label{eq:IRM_2dom_Gibbs_m_1k_2l_def} 
\end{alignat}

%%%%%%%%%%%%%%% 
% 1st domain
%%%%%%%%%%%%%%%
\subsection{Sampling $z_{1,i}$}
We review the collapsed Gibbs solution for inference of 
$\bm{Z}_{1} = \{z_{1,i}\}$: the solution is completely symmetric for the second domain, $\bm{Z}_{2}$. 

In the Gibbs sampler approach, 
we repeat the following process. 
First, we select one object $(1, i)$ (or $(2, j)$) from the data and 
take the object out from the model. More specifically, 
a clustering assignment of the object $z_{1,i}$ is temporarily set empty (undefined).   
Then, we reassign (sample) 
$z_{1,i} = k$ based on the posterior $p\left( z_{1,i} = k \right)$. 
%given all remaining cluster assignments in the first domain $\bm{Z}_{1}$, 
%all cluster assignments in the second domain $\bm{Z}_{2}$, and the whole observation. 

To start, we divide the observations into two parts. 
Let us denote $\bm{X}^{(1,i)} = \{ x_{i, \cdot} \}$ 
as the set of all observations 
concerning object $i$ of the first domain. 
The remaining observations, hidden variables excluding $z_{1,i}$, 
and statistics computed on these data are denoted by $\setminus (1,i)$. 
Our target posterior is formulated as follows: 
\begin{multline}
 p\left( 
 z_{1,i} = k | \bm{X}, \bm{Z}_{1}^{\setminus (1,i)}, \bm{Z}_{2} 
 \right) 
 \propto
 p\left( z_{1,i} | \bm{Z}_{1}^{\setminus (1,i)} \right) 
 p\left( 
 \bm{X} | z_{1,i} = k, \bm{Z}_{1}^{\setminus (1,i)}, \bm{Z}_{2} 
 \right) 
 \\
 =
 p\left( 
 z_{1,i} = k | \bm{Z}_{1}^{\setminus (1,i)} 
 \right) 
 \int
 p\left( 
 \bm{X}^{(1,i)} | z_{1,i} = k, \bm{Z}_{1}^{\setminus (1,i)}, 
 \bm{Z}_{2}, \bm{\Theta} 
 \right) 
 p\left(
 \bm{\Theta} | \bm{Z}_{1}^{\setminus (1,i)}, 
 \bm{Z}_{2}, \bm{X}^{\setminus (1,i)} \right)
 d \bm{\Theta} \, .
 \label{eq:IRM_2dom_Gibbs_z_1_def}
\end{multline}

The first term of the right-hand side of 
\myeqref{eq:IRM_2dom_Gibbs_z_1_def} becomes: 
\begin{equation}
 p\left( z_{1,i} = k | \bm{Z}_{1}^{\setminus (1,i)} \right) 
  \propto
  \begin{cases}
   m_{1,k}^{\setminus (1,i)} & \text{existing clusters}, \\ 
   \alpha_{1} & \text{a new cluster}, \\ 
  \end{cases}
  \label{eq:IRM_2dom_Gibbs_z_1_1}
\end{equation}
as in \myeqref{eq:CRP_construction_def}. 

Thanks to the conjugacy of 
\myeqref{eq:IRM_2dom_theta_kl_def} and \myeqref{eq:IRM_2dom_x_ij_def}, 
we can easily evaluate the second integral term of the r.h.s. of 
\myeqref{eq:IRM_2dom_Gibbs_z_1_def}. 
First, Eq. (\ref{eq:IRM_2dom_Gibbs_z_1_2_1}) is the 
posterior of the strength parameters $\bm{\Theta}$ 
given all information excepting the object $i$ in the first domain. 
\begin{alignat}{2}
 p\left(\bm{\Theta} 
 | \bm{Z}_{1}^{\setminus (1,i)}, \bm{Z}_{2}, \bm{X}^{\setminus (1,i)}\right)
 &= 
 \prod_{k} \prod_{l} 
 \text{Beta}
 \left( \theta_{k,l}; 
 a_{k,l} + n^{\setminus (1,i)}_{k,l}, 
 b_{k,l} + N^{\setminus (1,i)}_{k,l}
 \right) \, ,
 \label{eq:IRM_2dom_Gibbs_z_1_2_1} 
 \\
 p\left( \bm{X}^{(1,i)} | z_{1,i} = k, \bm{Z}_{1}^{\setminus (1,i)}, \bm{Z}_{2}, \bm{\Theta} \right) 
 &= 
 \prod_{l} 
 \theta_{k,l}^{n_{k,l}^{+(1,i,k)}}
 \left(1 - \theta_{k,l}\right)^{N_{k,l}^{+(1,i,k)}}
 \, .
 \label{eq:IRM_2dom_Gibbs_z_1_2_2}
\end{alignat}
$n^{+(1,i,k)}$ and $N^{+(1,i,k)}$ denote the statistics computed solely 
on $\bm{X}^{(1,i)}$ given $z_{1,i} = k$. 

Combining \myeqref{eq:IRM_2dom_Gibbs_z_1_2_1} and 
\myeqref{eq:IRM_2dom_Gibbs_z_1_2_2}, 
we obtain the following. 
\begin{multline}
 \int
 p\left( \bm{X}^{(1,i)} | z_{1,i} = k, \bm{Z}_{1}^{\setminus (1,i)}, \bm{Z}_{2}, 
 \bm{\Theta} \right) 
 p\left(\bm{\Theta} 
 | \bm{Z}_{1}^{\setminus (1,i)}, \bm{Z}_{2}, \bm{X}^{\setminus (1,i)}\right)
 d \bm{\Theta} 
 \\
 \propto
 \prod_{l} 
  \frac{
 B\left( 
   a_{k,l} + n^{\setminus (1,i)}_{k,l} + n_{k,l}^{+(1,i,k)}, 
   b_{k,l} + N^{\setminus (1,i)}_{k,l} + N_{k,l}^{+(1,i,k)}
  \right)
 }{
 B\left( 
   a_{k,l} + n^{\setminus (1,i)}_{k,l}, 
   b_{k,l} + N^{\setminus (1,i)}_{k,l}
  \right)
 } \, ,
 \label{eq:IRM_2dom_Gibbs_z_1_2}  
\end{multline}
where $B(\cdot,\cdot)$ denotes the beta function. 

Plugging \myeqref{eq:IRM_2dom_Gibbs_z_1_1} and
\myeqref{eq:IRM_2dom_Gibbs_z_1_2} into \myeqref{eq:IRM_2dom_Gibbs_z_1_def} yields 
the posterior probability for sampling the cluster assignment of object $i$ in the
first domain: 
\begin{equation}
 p\left( z_{1,i} = k | \bm{X}, \bm{Z}_{1}^{\setminus (1,i)}, \bm{Z}_{2} \right) 
 \propto 
 \begin{cases}
  m_{1,k}^{\setminus (1,i)}
  \prod_{l} 
  \frac{
  B\left( 
  a_{k,l} + n^{\setminus (1,i)}_{k,l} + n_{k,l}^{+(1,i,k)}, 
  b_{k,l} + N^{\setminus (1,i)}_{k,l} + N_{k,l}^{+(1,i,k)}
  \right)
  }{
  B\left( 
  a_{k,l} + n^{\setminus (1,i)}_{k,l}, 
  b_{k,l} + N^{\setminus (1,i)}_{k,l}
  \right)
  } 
  & 
  k~\text{is an existing cluster} \, ,
  \\
  \alpha_{1} 
  \prod_{l} 
  \frac{
  B\left( 
  a_{k,l} + n^{\setminus (1,i)}_{k,l} + n_{k,l}^{+(1,i,k)}, 
  b_{k,l} + N^{\setminus (1,i)}_{k,l} + N_{k,l}^{+(1,i,k)}
  \right)
  }{
  B\left( 
  a_{k,l} + n^{\setminus (1,i)}_{k,l}, 
  b_{k,l} + N^{\setminus (1,i)}_{k,l}
  \right)
  } 
  & 
  k~\text{is a new cluster} \, .
 \end{cases}
 \label{eq:IRM_2dom_Gibbs_z_1i_posterior}
\end{equation}

We iteratively take out $z_{1,i}$ from the statistics, 
compute the posterior and sample $z_{1,i}$ stochastically, 
then put $z_{1,i}$ back into the statistics.  

%%%%%%%%%%%%%%% 
% 2nd domain
%%%%%%%%%%%%%%%
\subsection{Sampling $z_{2,j}$}

We also present the final result for the second domain. 
The derivation is symmetric to $\bm{Z}_{1}$; 
thus, we omit details. 
Let us denote $\bm{X}^{(2,j)} = \{ x_{\cdot, j} \}$ 
as the set of all observations 
concerning the object $j$ of the second domain. 
The remaining observations, hidden variables, 
and statistics computed on these data are denoted by 
$\setminus (2,j)$. 

Our goal is to sample $z_{2,j}$ based on the following equation: 
\begin{multline}
 p\left( z_{2,j} = l | \bm{X}, \bm{Z}_{1}, \bm{Z}_{2}^{\setminus (2,j)} \right) 
 \\ 
 \propto
 p\left( z_{2,j} = l | \bm{Z}_{2}^{\setminus (2,j)} \right) 
 \int
 p\left( \bm{X}^{(2,j)} | z_{2,j} = l, \bm{Z}_{1}, 
 \bm{Z}_{2}^{\setminus (2,j)}, \bm{\Theta} \right) 
 p\left(\bm{\Theta} 
 | \bm{Z}_{1}, \bm{Z}_{2}^{\setminus (2,j)}, \bm{X}^{\setminus (2,j)}\right)
 d \bm{\Theta} \,. 
 \label{eq:IRM_2dom_Gibbs_z_2_def} 
\end{multline}
Each term in the right-hand side of 
\myeqref{eq:IRM_2dom_Gibbs_z_2_def} is calculated as follows: 
\begin{equation}
 p\left( z_{2,j} = l | \bm{Z}_{2}^{\setminus (2,j)} \right) 
  \propto
  \begin{cases}
   m_{2,l}^{\setminus (2,j)} & \text{existing clusters}, \\ 
   \alpha_{2} & \text{a new cluster}, \\ 
  \end{cases}
  \label{eq:IRM_2dom_Gibbs_z_2_1}
\end{equation}
\begin{multline}
 \int
 p\left( \bm{X}^{(2,j)} | z_{2,j} = l, \bm{Z}_{1}, \bm{Z}_{2}^{\setminus (2,j)}, 
 \bm{\Theta} \right) 
 p\left(\bm{\Theta} 
 | \bm{Z}_{1}, \bm{Z}_{2}^{\setminus (2,j)}, \bm{X}^{\setminus (2,j)}\right)
 d \bm{\Theta}
 \\
 \propto
 \prod_{l} 
  \frac{
 B\left( 
   a_{k,l} + n^{\setminus (2,j)}_{k,l} + n_{k,l}^{+(2,j,l)}, 
   b_{k,l} + N^{\setminus (2,j)}_{k,l} + N_{k,l}^{+(2,j,l)}
  \right)
 }{
 B\left( 
   a_{k,l} + n^{\setminus (2,j)}_{k,l}, 
   b_{k,l} + N^{\setminus (2,j)}_{k,l}
  \right)
 } \, . 
 \label{eq:IRM_2dom_Gibbs_z_2_2}  
\end{multline}
In the above equation, 
$n^{+(2,j,l)}$ and $N^{+(2,j,l)}$ denote the statistics computed solely 
on $\bm{X}^{(2,j)}$ given $z_{2,j} = l$. 
Thus, we obtain the Gibbs posterior probability for $z_{2,j}$ as follows: 
\begin{equation}
 p\left( z_{2,j} = l | \bm{X}, \bm{Z}_{1}, \bm{Z}_{2}^{\setminus (2,j)} \right) 
 \propto 
 \begin{cases}
  m_{2,l}^{\setminus (2,j)}
  \prod_{k} 
  \frac{
  B\left( 
  a_{k,l} + n^{\setminus (1,i)}_{k,l} + n_{k,l}^{+(2,j,l)}, 
  b_{k,l} + N^{\setminus (1,i)}_{k,l} + N_{k,l}^{+(2,j,l)}
  \right)
  }{
  B\left( 
  a_{k,l} + n^{\setminus (1,i)}_{k,l}, 
  b_{k,l} + N^{\setminus (1,i)}_{k,l}
  \right)
  } 
  & 
  l~\text{is an existing cluster} \, ,
  \\
  \alpha_{2} 
  \prod_{k} 
  \frac{
  B\left( 
  a_{k,l} + n^{\setminus (1,i)}_{k,l} + n_{k,l}^{+(2,j,l)}, 
  b_{k,l} + N^{\setminus (1,i)}_{k,l} + N_{k,l}^{+(2,j,l)}
  \right)
  }{
  B\left( 
  a_{k,l} + n^{\setminus (1,i)}_{k,l}, 
  b_{k,l} + N^{\setminus (1,i)}_{k,l}
  \right)
  } 
  & 
  l~\text{is a new cluster} \, .
 \end{cases}
 \label{eq:IRM_2dom_Gibbs_z_2j_posterior}
\end{equation}%
In practice, it is better to sample new assignments of objects in 
a domain-interleaving manner (
$z_{1,i} \rightarrow z_{2,j} \rightarrow z_{1,i^{\prime}} \rightarrow z_{2,j^{\prime}} \rightarrow \dots $
)
for faster convergence. 

We can also sample hyperparameters and parameters by putting in hyper
priors, or by solving marginal likelihood maximization. 
We omit these procedures because they are out of our scope. 

One difficulty in employing a Gibbs sampler is detection of convergence. 
A Gibbs sampler assures asymptotic convergence to the true posteriors 
as the number of samples is infinitely many. 
In practice, however, we will never have an infinite number of samples, 
so it is difficult to detect convergence 
in a theoretically valid manner~\citep{Cowles_Carlin96}. 

Another difficulty is the very slow mixing nature of collapsed Gibbs sampler on IRM, 
which has been recently reported by \citep{Albers13}. 
They showed that the several million iterations (sweeps)  \textit{are not enough} 
to mix the sampler, on $1000$-nodes, real-world network data. 
One possible reason is that one observed relation $x_{i,j}$ requires two hidden variables $z_{1,i}$ and $z_{2,j}$, 
unlike topic models. 
To alleviate the slow mixing, we need to introduce much more sophisticated samplers such as \citep{Williamson13}. 
But such techniques would make it difficult to implement the sampler. 

These two reasons motivate us to develop deterministic and fast VB-based inference solutions, 
though most of the existing IRM works rely on collapsed Gibbs sampler. 

%%%%%%%%%%%%%%%%%
% VB solution
%%%%%%%%%%%%%%%%%
\section{Variational Bayes Solution}
No report to date has described a variational Bayes (VB) solution for 
IRM. However, it is beneficial to quickly derive a VB solution for 
comparison with the proposed collapsed VB inference. 
In a general VB inference, we maximize the VB lower bound, 
which is defined as: 
\begin{equation}
 \mathscr{L} = 
  \int q\left(\bm{Z}_{1}, \bm{Z}_{2}, \bm{\Phi}\right) 
  \log \frac{
  p\left(\bm{X}, \bm{Z}_{1}, \bm{Z}_{2}, \bm{\Phi}\right)
  }{
  q\left(\bm{Z}_{1}, \bm{Z}_{2}, \bm{\phi}\right)
  }
  d \bm{Z}_{1} d \bm{Z}_{2} d \bm{\Phi}\, ,
  \label{eq:VB_lowerbound_def}
\end{equation}
where $\bm{Z}_{1}$ and $\bm{Z}_{2}$ denote hidden variables, 
$\bm{\Phi}$ denotes all associated parameters, 
$\bm{X}$ denotes all observations, 
and $q(\cdot)$ are the {\it variational} posteriors 
that approximate the true posteriors; 
all variational posteriors are assumed independent from each other. 
Maximizing the above lower bound is equivalent with minimizing the
Kullback-Leibler divergence between the true posteriors $p$ and the 
{\it variational} posteriors $q$. 

Generally speaking, the VB solution is analogous to the iterative
process of the EM algorithm. First, we maximize the VB lower bound w.r.t. 
the variational posteriors of hidden variables. Then, 
we maximize the VB lower bound w.r.t. remaining parameters. 
This iteration monotonically increases the VB lower bound
in~\myeqref{eq:VB_lowerbound_def}; therefore, the VB solution 
halts automatically when it reaches a local optimal point. 

\subsection{Generative models}
We alter the generative model of IRM in two points. 
First, we use a Stick-Breaking Process (SBP)
construction~\citep{Sethuraman94} 
of the DPM~(Eqs. (\ref{eq:stick_v_k_def}, \ref{eq:stick_pi_k_def}, \ref{eq:stick_z_i_def})), 
instead of CRP~(Eqs. (\ref{eq:CRP_def}, \ref{eq:CRP_construction_def})). 
Second, we ``truncate'' the maximum number of clusters, $K$, beforehand. 
Therefore, the VB solution of IRM is doubly approximated: 
the independence of variational posteriors and the finite number of
clusters. 
Fixing the number of clusters seems to destroy the virtue
of nonparametric Bayes: automatic model selection. 
In practice, the SBP prior makes the unrepresented (unnecessary) 
clusters very small (very small weights) 
after inference. Therefore, it is easy to infer the true
number of clusters even if we ``truncate'' the infinite
 cluster representation. 

Here is the generative model of IRM for VB: 
\begin{alignat}{2}
 v_{1,k} | \alpha_{1} 
 &\sim
 \text{Beta}\left(1, \alpha_{1}\right) 
 \, ,
 \label{eq:IRM_2dom_VB_v_1k_def}
 \\
 \pi_{1,k}
 &= 
 v_{1,k} \prod_{m=1}^{k-1} \left(1 - v_{1,m}\right), 
 \pi_{1,K_{1}} = 1 - \sum_{m=1}^{K_{1}-1} \pi_{1,m} 
 \, ,
 \label{eq:IRM_2dom_VB_pi_1k_def} 
 \\
 z_{1,i} | \bm{\pi}_{1}
 &\sim 
 \text{Multinomial}\left( \bm{\pi}_{1} \right) 
 \, ,
 \label{eq:IRM_2dom_VB_z_1i_def}
 \\
 v_{2,l} | \alpha_{2} 
 &\sim
 \text{Beta}\left(1, \alpha_{2}\right)
 \, ,
 \label{eq:IRM_2dom_VB_v_2l_def}
 \\
 \pi_{2,l}
 &= 
 v_{2,l} \prod_{m=1}^{l-1} \left(1 - v_{2,l}\right), 
 \pi_{2,K_{2}} = 1 - \sum_{m=1}^{K_{2}-1} \pi_{2,m} 
 \, ,
 \label{eq:IRM_2dom_VB_pi_2l_def} 
 \\
 z_{2,j} | \bm{\pi}_{2}
 &\sim 
 \text{Multinomial}\left( \bm{\pi}_{2} \right) 
 \, ,
 \label{eq:IRM_2dom_VB_z_2j_def}
 \\
 \theta_{k,l} | a_{k,l}, b_{k,l} 
 &\sim
 \text{Beta}\left(a_{k,l}, b_{k,l}\right) 
 \, ,
 \label{eq:IRM_2dom_VB_theta_kl_def}
 \\
 x_{i,j} | \bm{Z}_{1}, \bm{Z}_{2}, \{\theta\} 
 &\sim
 \text{Bernoulli}
 \left( 
 \theta_{z_{1,i}, z_{2,j}}
 \right)
 \, .
 \label{eq:IRM_2dom_VB_x_ij_def}
\end{alignat}
The stick-breaking process is described in
Eqs. (\ref{eq:IRM_2dom_VB_v_1k_def}-\ref{eq:IRM_2dom_VB_z_2j_def}). 
There are mainly two different parts compared to the original
generative models 
(\myeqref{eq:IRM_2dom_z_1i_def} and \myeqref{eq:IRM_2dom_z_2j_def}) 
formalized by CRP. 
The first one is the introduction of cluster mixing proportional vectors 
$\bm{\pi}_{1}$ and $\bm{\pi}_{2}$. 
The second one is the truncated cluster numbers: 
$K_{1}$ and $K_{2}$ indicate the maximum number of truncated
clusters for the first domain and the second domain, respectively. 
Because the numbers of clusters are finite, 
we simply sample the cluster assignment variables $\bm{Z}_{1}$ and
$\bm{Z}_{2}$ from the multinomial distributions as in 
\myeqref{eq:IRM_2dom_VB_z_1i_def} and
\myeqref{eq:IRM_2dom_VB_z_2j_def}. 

\subsection{Variational posteriors and the lower bound}
Given the generative models, 
we introduce variational posteriors that are assumed independent from each other. 
Thanks to the model conjugacy, we can specify the form of 
the variational posteriors, which are denoted by $q(\cdot)$ below: 
\begin{alignat}{2}
 q\left(\bm{v}_{1}; \hat{\alpha}_{1}, \hat{\beta}_{1} \right)
 &= 
 \prod_{k=1}^{K_{1}} \text{Beta}\left(v_{1,k}; \hat{\alpha}_{1,k},
 \hat{\beta}_{1,k}\right) 
 \, ,
 \label{eq:IRM_2dom_VB_v_1_q}
 \\
 q\left(\bm{Z}_{1}; \hat{\bm{\pi}}_{1}\right) 
 &= 
 \prod_{i=1}^{N_{1}} \text{Multinomial}\left(z_{1,i};
 \hat{\bm{\pi}}_{1,i}\right) 
 \, ,
 \label{eq:IRM_2dom_VB_z_1_q}  
 \\
 q\left(\bm{v}_{2}; \hat{\alpha}_{2}, \hat{\beta}_{2} \right)
 &= 
 \prod_{l=1}^{K_{2}} \text{Beta}\left(v_{2,l}; \hat{\alpha}_{2,l},
 \hat{\beta}_{2,l}\right) 
 \, ,
 \label{eq:IRM_2dom_VB_v_2_q}
 \\
 q\left(\bm{Z}_{2}; \hat{\bm{\pi}}_{2}\right) 
 &= 
 \prod_{j=1}^{N_{2}} \text{Multinomial}\left(z_{2,j};
 \hat{\bm{\pi}}_{2,j}\right) 
  \, ,
 \label{eq:IRM_2dom_VB_z_2_q}  
 \\
 q\left( \bm{\Theta} ; \hat{a}, \hat{b}\right)
 &= 
 \prod_{k=1}^{K_{1}} \prod_{l=1}^{K_{2}} 
 \text{Beta}\left(\theta_{k,l}; \hat{a}_{k,l}, \hat{b}_{k,l}\right) 
 \, .
 \label{eq:IRM_2dom_VB_theta_q}
\end{alignat}

Following the definition, 
we obtain the following VB lower bound: 
\begin{alignat}{2}
 \mathscr{L} 
 &=
 \iint 
 q\left(\bm{Z}_{1}, \bm{Z}_{2}, \bm{v}_{1}, \bm{v}_{2},
 \bm{\Theta}\right) 
 \log \frac{
 \log p\left(\bm{X}, \bm{Z}_{1}, \bm{Z}_{2}, \bm{v}_{1}, \bm{v}_{2},
 \bm{\Theta} \right) 
 }{
 \log  q\left(\bm{Z}_{1}, \bm{Z}_{2}, \bm{v}_{1}, \bm{v}_{2},
 \bm{\Theta} \right) 
 }
 d \bm{Z}_{1} d \bm{Z}_{2} d \bm{v}_{1} d \bm{v}_{2} d \bm{\Theta} d \alpha
 \notag 
 \\
 &=
 \mathbb{E}_{ \bm{Z}_{1}, \bm{Z}_{2}, \bm{\Theta} }\left[
 \log p\left(\bm{X} | \bm{Z}_{1}, \bm{Z}_{2}, \bm{\Theta}\right)
 \right] 
 \label{eq:IRM_2dom_VB_lowerbound_p_x} 
 \\
 &+
 \mathbb{E}_{ \bm{Z}_{1}, \bm{v}_{1} }\left[
 \log p\left(\bm{Z}_{1} | \bm{v}_{1}\right)
 \right]
 \label{eq:IRM_2dom_VB_lowerbound_p_z_1} 
 \\
 &+
 \mathbb{E}_{ \bm{Z}_{2}, \bm{v}_{2} }\left[
 \log p\left(\bm{Z}_{2} | \bm{v}_{2}\right)
 \right]
 \label{eq:IRM_2dom_VB_lowerbound_p_z_2} 
 \\
 &+
 \mathbb{E}_{ \bm{v}_{1} }\left[
 \log p\left(\bm{v}_{1} | \alpha_{1}\right)
 \right]
 \label{eq:IRM_2dom_VB_lowerbound_p_v_1} 
 \\
 &+
 \mathbb{E}_{ \bm{v}_{2} }\left[
 \log p\left(\bm{v}_{2} | \alpha_{2}\right)
 \right]
 \label{eq:IRM_2dom_VB_lowerbound_p_v_2} 
 \\
 &+
 \mathbb{E}_{ \bm{\Theta} }\left[
 \log p\left(\bm{\Theta} \right)
 \right]
 \label{eq:IRM_2dom_VB_lowerbound_p_theta} 
 \\
 &-
 \mathbb{E}_{ \bm{Z}_{1} }\left[
 \log q\left(\bm{Z}_{1}\right)
 \right]
 \label{eq:IRM_2dom_VB_lowerbound_q_z_1} 
 \\
 &-
 \mathbb{E}_{ \bm{Z}_{2} }\left[
 \log q\left(\bm{Z}_{2}\right)
 \right]
 \label{eq:IRM_2dom_VB_lowerbound_q_z_2} 
 \\
 &-
 \mathbb{E}_{ \bm{v}_{1} }\left[
 \log q\left(\bm{v}_{1}\right)
 \right]
 \label{eq:IRM_2dom_VB_lowerbound_q_v_1} 
 \\
 &-
 \mathbb{E}_{ \bm{v}_{2} }\left[
 \log q\left(\bm{v}_{2}\right)
 \right]
 \label{eq:IRM_2dom_VB_lowerbound_q_v_2} 
 \\
 &-
 \mathbb{E}_{ \bm{\Theta} }\left[
 \log q\left(\bm{\Theta}\right)
 \right] 
 \, .
 \label{eq:IRM_2dom_VB_lowerbound_q_theta} 
\end{alignat}
In the above equations, 
$\mathbb{E}_{x}$ indicates the expectation of the predicate 
computed over the variational posterior of $x$. 

\subsection{Variational posteriors of $\bm{Z}$}
We obtain the VB solution of the IRM by 
taking the derivative of the lower bound with respect to the 
variational posterior parameters in 
Eqs. (\ref{eq:IRM_2dom_VB_v_1_q}-\ref{eq:IRM_2dom_VB_theta_q}). 
Since the naive VB solution is not the primary interest of this paper, 
we omit the derivations and simply present the final results. 

For the VB E-step, we compute the variational posteriors of 
hidden cluster assignment variables $\bm{Z}_{1}$ and $\bm{Z}_{2}$. 
The variational posterior parameters in 
\myeqref{eq:IRM_2dom_VB_z_1_q} 
and 
\myeqref{eq:IRM_2dom_VB_z_2_q} 
are shown below.  

\begin{alignat}{2}
 \log \hat{\pi}_{1,i,k} 
 &= 
 \psi \left( \hat{\alpha}_{1,k} \right) 
 + 
 \sum_{m=1}^{k-1} \psi \left( \hat{\beta}_{1,m} \right)
 -
 \sum_{m=1}^{k} \psi \left( \hat{\alpha}_{1,m} + \hat{\beta}_{1,m} \right)
 \notag 
 \\
 &- 
 \sum_{j=1}^{N_{2}} \sum_{l=1}^{K_{2}} 
 \hat{\pi}_{2,j,l} \psi \left( \hat{a}_{k,l} + \hat{b}_{k,l} \right)
 \notag
 \\
 &+ 
 \sum_{j=1}^{N_{2}} \sum_{l=1}^{K_{2}} 
 \hat{\pi}_{2,j,l} \left[ 
 x_{i,j} \psi \left( \hat{a}_{k,l} \right)
 +
 \left( 1 - x_{i,j} \right) \psi \left( \hat{b}_{k,l} \right)
 \right] 
 + \text{Const.}
 \label{eq:IRM_2dom_VB_posterior_pihat_1ik}
\end{alignat}
\begin{alignat}{2}
 \log \hat{\pi}_{2,j,l} 
 &= 
 \psi \left( \hat{\alpha}_{2,l} \right) 
 + 
 \sum_{m=1}^{l-1} \psi \left( \hat{\beta}_{2,m} \right)
 -
 \sum_{m=1}^{l} \psi \left( \hat{\alpha}_{2,m} + \hat{\beta}_{2,m} \right)
 \notag 
 \\
 &- 
 \sum_{i=1}^{N_{1}} \sum_{k=1}^{K_{1}} 
 \hat{\pi}_{1,i,k} \psi \left( \hat{a}_{k,l} + \hat{b}_{k,l} \right)
 \notag
 \\
 &+ 
 \sum_{i=1}^{N_{1}} \sum_{k=1}^{K_{1}} 
 \hat{\pi}_{1,i,k} \left[ 
 x_{i,j} \psi \left( \hat{a}_{k,l} \right)
 +
 \left( 1 - x_{i,j} \right) \psi \left( \hat{b}_{k,l} \right)
 \right]
 + \text{Const.}
 \label{eq:IRM_2dom_VB_posterior_pihat_2jl  }
\end{alignat}
In the above equations, $\psi$ indicates the digamma function. 
$\pi_{1,i,\cdot}$ and $\pi_{2,j,\cdot}$ are to be normalized so as to
make the sum over cluster indices equal to 1. 

\subsection{Variational posteriors of parameters}
In the VB-M step, we compute the posteriors of remaining parameters, 
\myeqref{eq:IRM_2dom_VB_v_1_q}, 
\myeqref{eq:IRM_2dom_VB_v_2_q},
and 
\myeqref{eq:IRM_2dom_VB_theta_q}. 
We again omit the derivations and present the final results. 
\begin{equation}
 \hat{\alpha}_{1,k} = 1 + \sum_{i=1}^{N_{1}} \hat{\pi}_{1,i,k} 
 \, , 
  \label{eq:IRM_2dom_VB_posterior_alphahat_1k}
\end{equation}
\begin{equation}
 \hat{\alpha}_{2,l} = 1 + \sum_{j=1}^{N_{2}} \hat{\pi}_{2,j,l} 
 \, ,
  \label{eq:IRM_2dom_VB_posterior_alphahat_2l}
\end{equation}
\begin{equation}
 \hat{\beta}_{1,k} 
 = 
 \alpha_{1} 
 + 
 \sum_{i=1}^{N_{1}} \sum_{m=k+1}^{K_{1}} \hat{\pi}_{1,i,m} 
\, ,
  \label{eq:IRM_VB_posterior_betahat_1k}
\end{equation}
\begin{equation}
 \hat{\beta}_{2,l} 
 = 
 \alpha_{2} 
 + 
 \sum_{j=1}^{N_{2}} \sum_{m=l+1}^{K_{2}} \hat{\pi}_{2,j,m} 
 \, ,
  \label{eq:IRM_VB_posterior_betahat_2l}
\end{equation}
\begin{equation}
 \hat{a}_{k,l} = a_{k,l} 
  + \sum_{i=1}^{N_{1}} \sum_{j=1}^{N_{2}} 
  \hat{\pi}_{1,i,k} \hat{\pi}_{2,j,l} x_{i,j} 
  \, ,
  \label{eq:IRM_2dom_VB_posterior_ahat_kl}
\end{equation}
\begin{equation}
 \hat{b}_{k,l} = b_{k,l} 
  + \sum_{i=1}^{N_{1}} \sum_{j=1}^{N_{2}} 
  \hat{\pi}_{1,i,k} \hat{\pi}_{2,j,l} (1 - x_{i,j}) 
  \, .
  \label{eq:IRM_2dom_VB_posterior_bhat_kl}
\end{equation}

\subsection{Optimizing hyperparameters}
Optimization of the hyperparameters are also formulated as the 
maximization of the VB lower bound. 

It is easy to obtain update rules for hyperparameters by 
taking derivatives of the lower bound. 
Employing the fixed-point methods presented
in~\citep{Iwata12,Minka00,Wallach08}, we have the following 
update rules for hyperparameters. 

\begin{alignat}{2}
 \alpha_{1} 
 &= 
 \frac{
 K_{1}
 }{
 \sum_{k=1}^{K_{1}} \left[
 \psi \left( \hat{\alpha}_{1,k} + \hat{\beta}_{1,k} \right)
 -
 \psi \left( \hat{\beta}_{1,k} \right)
 \right] 
 }
 \, ,
 \label{eq:IRM_2dom_VB_posterior_lowerbound_alpha_1_update} 
 \\
 \alpha_{2} 
 &= 
 \frac{
 K_{2}
 }{
 \sum_{l=1}^{K_{2}} \left[
 \psi \left( \hat{\alpha}_{2,l} + \hat{\beta}_{2,l} \right)
 -
 \psi \left( \hat{\beta}_{2,l} \right)
 \right] 
 }
 \, ,
 \label{eq:IRM_2dom_VB_posterior_lowerbound_alpha_2_update} 
\\
 \tilde{a}_{k,l} 
 &= 
a_{k,l} 
 \frac{
 \psi\left(a_{k,l} + b_{k,l}\right) 
 -
 \psi\left(a_{k,l}\right) 
 }{
 \psi\left(\hat{a}_{k,l} + \hat{b}_{k,l}\right) 
 -
 \psi\left(\hat{a}_{k,l}\right) 
 }
 \, ,
 \label{eq:IRM_2dom_VB_posterior_lowerbound_a_kl_update} 
 \\
 \tilde{b}_{k,l} 
 &= 
 b_{k,l} 
 \frac{
 \psi\left(a_{k,l} + b_{k,l}\right) 
 -
 \psi\left(b_{k,l}\right) 
 }{
 \psi\left(\hat{a}_{k,l} + \hat{b}_{k,l}\right) 
 -
 \psi\left(\hat{b}_{k,l}\right) 
 }
 \, .
 \label{eq:IRM_2dom_VB_posterior_lowerbound_b_kl_update}  
\end{alignat}

%%%%%%%%%%%%%%%%%%%%%%%%%%%
% CVB solutions
%%%%%%%%%%%%%%%%%%%%%%%%%%
\section{Collapsed Variational Bayes (CVB) Solution of IRM}

\subsection{General Idea}
The general idea of CVB inferences for hierarchical probabilistic models
~\citep{Kurihara07,Teh07NIPS,Teh08NIPS_HDP,Asuncion09,Sato_Nakagawa12,Sato12} 
is to assume variational posteriors of hidden variables of 
the model where {\it parameters are marginalized out beforehand}. 

In \myeqref{eq:VB_lowerbound_def}, 
parameters $\bm{\Phi}$ are not marginalized (collapsed) out in 
ordinary VB inference~\citep{Attias00,Bishop06}. 
Thus, we need to compute the variational posteriors of the parameters as well. 
The variational posteriors of the parameters impact the inference results, and this may increase the danger of being trapped at a bad local
optimal point. 

CVB inference first marginalizes out 
the parameters in an exact way (as in a collapsed Gibbs sampler). 
After that, remaining hidden variables are assumed to be independent 
from each other.  
We can avoid the effect of parameter estimations 
and can reduce the number of quantities to be inferred 
because parameters are already marginalized. 
Further, it is known that 
the lower bound of CVB is always tighter than that of the 
original VB~\citep{Teh07NIPS}. 
The formal definition of CVB lower bound for IRM is: 
\begin{equation}
  \mathscr{L}\left(\bm{Z}_{1}, \bm{Z}_{2}\right)
 =
 \int q\left(\bm{Z}_{1},\bm{Z}_{2}\right) 
 \log 
 \frac{
 p\left(\bm{X}, \bm{Z}_{1}, \bm{Z}_{2}\right)
 }{
 q\left(\bm{Z}_{1}, \bm{Z}_{2}\right)
 }
 d \bm{Z} .
 \label{eq:IRM_2dom_CVB_lowerbound_def}
\end{equation}
As evident, this is the same formulation as 
\myeqref{eq:VB_lowerbound_def} except for the 
marginalized parameters. 

To the best of our knowledge, this is the first work to 
formulate and derive CVB solutions for IRM. 

\subsection{Generative model}
Similar to VB, CVB also employs the truncated version of the 
IRM generative model. 
Let us denote the truncated number of clusters of 
the first domain and the second domain as $K_{1}$ and $K_{2}$, 
respectively. 
For readers' convenience, we present the SBP presentation 
of IRM again: 
\begin{alignat}{2}
 v_{1,k} | \alpha_{1} 
 &\sim
 \text{Beta}\left(1, \alpha_{1}\right)
 \, , 
 \label{eq:IRM_2dom_CVB_v_1k_def}
 \\
 v_{2,l} | \alpha_{2} 
 &\sim
 \text{Beta}\left(1, \alpha_{2}\right)
  \, , 
 \label{eq:IRM_2dom_CVB_v_2l_def}
 \\
 \pi_{1,k}
 &= 
 v_{1,k} \prod_{m=1}^{k-1} \left(1 - v_{1,m}\right), 
 \, \, 
 \pi_{1,K_{1}} = 1 - \sum_{m=1}^{K_{1}-1} v_{1,m} 
  \, , 
 \label{eq:IRM_2dom_CVB_pi_1k_def} 
 \\
 \pi_{2,l}
 &= 
 v_{2,l} \prod_{m=1}^{l-1} \left(1 - v_{2,m}\right), 
 \, \, 
 \pi_{2,K_{2}} = 1 - \sum_{m=1}^{K_{2}-1} v_{2,m} 
  \, , 
 \label{eq:IRM_2dom_CVB_pi_2l_def} 
 \\
 z_{1,i} | \bm{\pi}_{1}
 &\sim 
 \text{Multinomial}\left( \bm{\pi}_{1} \right) 
  \, , 
 \label{eq:IRM_2dom_CVB_z_1i_def}
 \\
 z_{2,j} | \bm{\pi}_{2}
 &\sim 
 \text{Multinomial}\left( \bm{\pi}_{2} \right) 
  \, , 
 \label{eq:IRM_2dom_CVB_z_2j_def}
 \\
 \theta_{k,l} | a_{k,l}, b_{k,l} 
 &\sim
 \text{Beta}\left(a_{k,l}, b_{k,l}\right) 
  \, , 
 \label{eq:IRM_2dom_CVB_theta_kl_def}
 \\
 x_{i,j} | \bm{Z}_{1}, \bm{Z}_{2}\{\theta\} 
 &\sim
 \text{Bernoulli}
 \left( 
 \theta_{z_{1,i}, z_{2,j}}
 \right). 
 \label{eq:IRM_2dom_CVB_x_ij_def}
\end{alignat}

\subsection{Counting statistics}
The statistics required by the CVB solutions 
are defined in the same way as for the Gibbs samplers, 
except for 
\myeqref{eq:IRM_2dom_CVB_M_1k_2l}, 
which represents a kind of {\it negative}
membership. 

\begin{alignat}{2}
 m_{1,k} 
 &= 
 \sum_{i=1}^{N_{1}} \mathbb{I}(z_{1,i} = k) 
 = 
 \sum_{i=1}^{N_{1}} z_{1,i,k} 
 \, , \quad
 m_{2,l} 
 = 
 \sum_{j=1}^{N_{2}} \mathbb{I}(z_{2,j} = l) 
 = 
 \sum_{j=1}^{N_{2}} z_{2,j,l} 
 \, ,
 \label{eq:IRM_2dom_CVB_m_1k_2l}
 \\
 M_{1,k} 
 &= 
 \sum_{i=1}^{N_{1}} \mathbb{I}(z_{1,i} > k) 
 = 
 \sum_{k^{\prime}=k+1}^{K_{1}} m_{1,k^{\prime}}
 \, , \quad 
 M_{2,l} 
 = 
 \sum_{j=1}^{N_{2}} \mathbb{I}(z_{2,j} > l) 
 = 
 \sum_{l^{\prime}=l+1}^{K_{2}} m_{2,l^{\prime}}
 \, ,
 \label{eq:IRM_2dom_CVB_M_1k_2l}
 \\
 n_{k, l} 
 &= 
 \sum_{i=1}^{N_{1}} \sum_{j=1}^{N_{2}} z_{1,i,k} z_{2,j,l} x_{i,j}
 \, , \quad
 N_{k, l} 
 = 
 \sum_{i=1}^{N_{1}} \sum_{j=1}^{N_{2}} z_{1,i,k} z_{2,j,l} (1 - x_{i,j})
 \, .
 \label{eq:IRM_2dom_CVB_n_N_kl}
\end{alignat}

\subsection{Variational posterior of $\bm{Z}$}
Before deriving the variational posteriors, 
there are two points to note concerning the difference with 
the original VB. 

First, we assume the form of the 
variational posterior distributions beforehand in the case of VB. 
Then, we directly compute the variational posterior parameters. 
This is possible because the whole generative model retains the
conjugacy. 
In the case of CVB inference, however, the conjugacy does not hold because we marginalize
out parameters ($\bm{v}_{1}, \bm{v}_{2}, \bm{\Theta}$) in the inference. 
Therefore, we cannot assume specific forms of variational posteriors, 
$q\left(\bm{Z}_{1}\right)$ and $q\left(\bm{Z}_{2}\right)$. 

Second, in the case of naive VB, we write down the 
actual lower bound \myeqref{eq:VB_lowerbound_def} 
for VB(Eqs. (\ref{eq:IRM_2dom_VB_lowerbound_p_x}-\ref{eq:IRM_2dom_VB_lowerbound_q_theta}). 
In the case of CVB, we do not explicitly write down the lower bound. 
The reason is that convenient forms of the lower bound are different for 
the hidden variables $\bm{Z}$ and the hyperparameters $\alpha, a, b$. 
Both forms are equivalent, but in practice it is easier to 
choose different representations to derive inference algorithms. 

In fact, the procedure of CVB inference resembles the collapsed Gibbs samplers
more than ordinary VB inferences. 
We take one object out from the model, 
recompute the posterior of the object cluster assignment, 
and put the object back in the model. 
A difference is that CVB computes soft cluster assignments of 
$\bm{Z} = \{ \bm{Z}_{1}, \bm{Z}_{2}\}$ while 
the Gibbs sampler samples hard assignments for each process. 
We repeat this process on all objects, and one iteration of CVB inference is done. 

Let us derive the update rule of the hidden cluster assignment of the
first domain $z_{1,i}$. 
%We omit the discussion of $\bm{Z}_{2}$ because of the page
%imit, but the procedure and the final results are 
%completely symmetric to the first domain. 
First, we modify the representation of the CVB lower bound 
\myeqref{eq:IRM_2dom_CVB_lowerbound_def}. 
The integral is replaced by the summation because $\bm{Z}$ is
discrete. 
\begin{alignat}{2}
 \mathscr{L} (z_{1,i}, \bm{Z}_{1}^{\setminus (1,i)}, \bm{Z}_{2})
 &= 
 \sum_{z_{1,i}} \sum_{\bm{Z}_{1}^{\setminus (1,i)}, \bm{Z}_{2}} 
 \left[
 q\left(z_{1,i}\right) q\left(\bm{Z}_{1}^{\setminus (1,i)}\right) 
 q\left(\bm{Z}_{2}\right)
 \log 
 \frac{
 p\left( 
 \bm{X}^{(1,i)}, \bm{X}^{\setminus (1,i)}, z_{1,i}, 
 \bm{Z}_{1}^{\setminus (1,i)} \bm{Z}_{2}
 \right)
 }{
 q\left(z_{1,i}\right) q\left(\bm{Z}_{1}^{\setminus (1,i)}\right) 
 q\left(\bm{Z}_{2}\right)
 }
 \right] 
 \notag 
 \\
 &=
 \sum_{z_{1,i}} 
 \mathbb{E}_{\bm{Z}_{1}^{\setminus (1,i)}, \bm{Z}_{2}} 
 \biggl[
 q\left(z_{1,i}\right) 
 \Big\{
 \log  
 p\left( 
 \bm{X}^{(1,i)} | \bm{X}^{\setminus (1,i)}, z_{1,i}, 
 \bm{Z}_{1}^{\setminus (1,i)}, \bm{Z}_{2} 
 \right)
 \notag 
 \\
 &\qquad \qquad \quad
 +
 \log  
 p\left( 
 z_{1,i} | \bm{Z}_{1}^{\setminus (1,i)} 
 \right) 
 -
 \log q\left(z_{1,i}\right) 
 \notag 
 \\
 &\qquad \qquad \quad
 + (\text{Terms that are not related to} \, \, z_{1,i})
 \Bigr\}
 \biggr] \, . 
 \label{eq:IRM_2dom_CVB_lowerbound_z_1i}
\end{alignat}
As in the case of the Gibbs sampler solution, 
$\bm{X}^{(1,i)} = \{ x_{i, \cdot} \}$ denotes 
the set of all observations 
concerning object $i$ of the first domain. 
The remaining observations, hidden variables excluding $z_{1,i}$, 
and statistics computed on these data are denoted by 
$\setminus (1,i)$. 
$\mathbb{E}_{x}[y]$ indicates 
the expectation of $y$ on the variational posterior of $x$. 

The above rewriting yields a 
{\it Gibbs-like} likelihood and prior terms in \myeqref{eq:IRM_2dom_Gibbs_z_1_def}, 
averaged over other posteriors. 
Taking the derivative of 
\myeqref{eq:IRM_2dom_CVB_lowerbound_z_1i} 
w.r.t. $q(z_{1,i})$ and equating it to zero, 
we have the following update rule for $q(z_{1,i})$: 
\begin{equation}
 q\left(z_{1,i}\right)
 \propto  
 \exp \left\{
 \mathbb{E}_{\bm{Z}_{1}^{\setminus (1,i)}, \bm{Z}_{2}} \left[
 \log  
 p\left( 
 \bm{X}^{(1,i)} | \bm{X}^{\setminus (1,i)}, z_{1,i}, 
 \bm{Z}_{1}^{\setminus (1,i)}, \bm{Z}_{2} 
 \right)
 \right]
 +
 \mathbb{E}_{\bm{Z}_{1}^{\setminus (1,i)}, \bm{Z}_{2}} \left[
 \log  
 p\left( 
 z_{1,i} | \bm{Z}_{1}^{\setminus (1,i)} 
 \right)
 \right] 
\right\} \, .
 \label{eq:IRM_2dom_CVB_lowerbound_z_1i_posterior}
\end{equation}

Before evaluating the expectations, 
we need to derive two terms inside $\mathbb{E}$. 
Let us start from the prior part. 
Using \myeqref{eq:IRM_2dom_CVB_v_1k_def} 
and \myeqref{eq:IRM_2dom_CVB_z_1i_def}, we readily obtain the following: 
\begin{alignat}{2}
 p\left(\bm{Z}_{1}, \bm{v}_{1} | \alpha_{1}\right) 
 &= 
 \alpha_{1}^{K_{1}} \prod_{k=1}^{K_{1}} 
 \text{Beta}\left(v_{1,k}; m_{1,k} + 1, M_{1,k} + \alpha_{1}\right) 
 \text{B}\left( m_{1,k} + 1, M_{1,k} + \alpha_{1}\right). 
 \notag
\\
 p\left(\bm{Z}_{1} | \alpha_{1}\right) 
 &= 
 \int p\left(\bm{Z}_{1}, \bm{v}_{1} | \alpha_{1} \right) 
 d \bm{v}_{1} 
 = 
 \alpha_{1}^{K_{1}} \prod_{k=1}^{K_{1}} 
 \frac{ 
 \Gamma\left( m_{1,k} + 1 \right) 
 \Gamma\left( M_{1,k} + \alpha_{1} \right)
 }{
 \Gamma\left( m_{1,k} + M_{1,k} + \alpha_{1} + 1 \right)
 }. 
 \label{eq:IRM_2dom_CVB_z_1_marginal}
\end{alignat}

Now we are ready to evaluate 
$p \left( z_{1,i} = k | \bm{Z}_{1}^{\setminus (1,i)} \right)$. 
\begin{alignat}{2}
 p\left( z_{1,i} = k | \bm{Z}_{1}^{\setminus (1,i)}, \alpha_{1}\right) 
 &= 
 \frac{
 p\left(\bm{Z}_{1} | \alpha_{1} \right)
 }{
 p\left(\bm{Z}_{1}^{\setminus (1,i)} | \alpha_{1} \right)
 }
 \notag 
 \\
 &= 
 \frac{
 m_{1,k}^{\setminus (1,i)} + 1
 }{
 m_{1,k}^{\setminus (1,i)} + M_{1,k}^{\setminus (1,i)} + \alpha_{1} + 1
 }
 \prod_{k^{\prime}=1}^{k-1} 
 \frac{
 M_{1,k^{\prime}}^{\setminus (1,i)} + \alpha_{1}
 }{
 m_{1,k^{\prime}}^{\setminus (1,i)} + M_{1,k^{\prime}}^{\setminus (1,i)} + \alpha_{1} + 1
 }. 
 \label{eq:IRM_2dom_CVB_z_1_prior}
\end{alignat}

The likelihood term is also easily available thanks to conjugacy:  
\begin{multline}
 p\left(\bm{X}^{(1,i)} | z_{1,i} = k, 
 \bm{Z}_{1}^{\setminus (1,i)}, \bm{Z}_{2}, \bm{X}^{\setminus (1,i)}\right)
 \\
 = 
 \prod_{l=1}^{K_{2}}
 \frac{
 \Gamma\left( a_{k,l} + b_{k,l} + n_{k,l}^{\setminus (1,i)} + N_{k,l}^{\setminus (1,i)} \right)
 }{
 \Gamma\left( a_{k,l} + n_{k,l}^{\setminus (1,i)} \right)
 \Gamma\left( b_{k,l} + N_{k,l}^{\setminus (1,i)} \right)
 }
 \frac{
 \Gamma\left( a_{k,l} + n_{k,l}^{\setminus (1,i)} + n_{k,l}^{+(1,i,k)} \right)
 \Gamma\left( b_{k,l} + N_{k,l}^{\setminus (1,i)} + N_{k,l}^{+(1,i,k)} \right)
 }{
 \Gamma\left( a_{k,l} + b_{k,l} 
 + n_{k,l}^{\setminus (1,i)} + N_{k,l}^{\setminus (1,i)} 
 + n_{k,l}^{+(1,i,k)} + N_{k,l}^{+(1,i,k)}\right)
 }.  
 \label{eq:IRM_2dom_CVB_z_1_likelihood} 
\end{multline}
$n^{+(1,i,k)}$ and $N^{+(1,i,k)}$ denotes the statistics computed solely 
on $\bm{X}^{(1,i)}$ given $z_{1,i} = k$. 

Plugging 
\myeqref{eq:IRM_2dom_CVB_z_1_prior}
and
\myeqref{eq:IRM_2dom_CVB_z_1_likelihood} 
into 
\myeqref{eq:IRM_2dom_CVB_lowerbound_z_1i_posterior}, 
then we obtain the variational posterior $q(z_{1,i})$. 
$q(z_{1,i})$ must be normalized so that the summation for $K_{1}$ clusters equals one. 

In the same manner, we obtain the update rule for $q(z_{2,j})$. 
\begin{equation}
 q\left(z_{2,j}\right)
 \propto  
 \exp \left\{
 \mathbb{E}_{\bm{Z}_{1}, \bm{Z}_{2}^{\setminus (2,j)}} \left[
 \log  
 p\left( 
 \bm{X}^{(2,j)} | \bm{X}^{\setminus (2,j)}, 
 \bm{Z}_{1}, z_{2,j}, \bm{Z}_{2}^{\setminus (2,j)} 
 \right)
 \right]
 +
 \mathbb{E}_{\bm{Z}_{1}, \bm{Z}_{2}^{\setminus (2,j)}} \left[
 \log  
 p\left( 
 z_{2,j} | \bm{Z}_{2}^{\setminus (2,j)} 
 \right)
 \right] 
\right\} \, .
 \label{eq:IRM_2dom_CVB_lowerbound_z_2j_posterior}
\end{equation}

\begin{equation}
 p\left( z_{2,j} = l | \bm{Z}_{2}^{\setminus (2,j)}, \alpha_{2}\right) 
  =
  \frac{
  m_{2,l}^{\setminus (2,j)} + 1
  }{
  m_{2,l}^{\setminus (2,j)} + M_{2,l}^{\setminus (2,j)} + \alpha_{2} + 1
  }
  \prod_{l^{\prime}=1}^{l-1} 
  \frac{
  M_{2,l^{\prime}}^{\setminus (2,j)} + \alpha_{2}
  }{
  m_{2,l^{\prime}}^{\setminus (2,j)} + M_{2,l^{\prime}}^{\setminus (2,j)} + \alpha_{2} + 1
  } 
  \, . 
 \label{eq:IRM_2dom_CVB_z_2_prior}  
\end{equation}
\begin{multline}
 p\left(\bm{X}^{(2,j)} | z_{2,j} = l, 
 \bm{Z}_{1}, \bm{Z}_{2}^{\setminus (2,j)}, \bm{X}^{\setminus (2,j)}\right)
 \\
 = 
 \prod_{k=1}^{K_{1}}
 \frac{
 \Gamma\left( a_{k,l} + b_{k,l} + n_{k,l}^{\setminus (2,j)} + N_{k,l}^{\setminus (2,j)} \right)
 }{
 \Gamma\left( a_{k,l} + n_{k,l}^{\setminus (2,j)} \right)
 \Gamma\left( b_{k,l} + N_{k,l}^{\setminus (2,j)} \right)
 }
 \frac{
 \Gamma\left( a_{k,l} + n_{k,l}^{\setminus (2,j)} + n_{k,l}^{+(2,j,l)} \right)
 \Gamma\left( b_{k,l} + N_{k,l}^{\setminus (2,j)} + N_{k,l}^{+(2,j,l)} \right)
 }{
 \Gamma\left( a_{k,l} + b_{k,l} 
 + n_{k,l}^{\setminus (2,j)} + N_{k,l}^{\setminus (2,j)} 
 + n_{k,l}^{+(2,j,l)} + N_{k,l}^{+2jjl}\right)
 }
 \, .
 \label{eq:IRM_2dom_CVB_z_2_likelihood} 
\end{multline}

\subsection{Computing Variational Expectations}
The posterior of $z_{1,i}$ in 
\myeqref{eq:IRM_2dom_CVB_lowerbound_z_1i_posterior} 
requires an expectation computation over the 
cluster assignments of $\bm{Z}_{1}^{\setminus (1,i)}$ and 
$\bm{Z}_{2}$. 
However, this is an intractable discrete combinatorial computation: 
there are $K_{1}^{N_{1}-1} \times K_{2}^{N_{2}}$ possible combinations. 
CVB inference approximates these expectations by Taylor expansion. 
Let us denote the expectation of predicate $x$ as $a = \mathbb{E}[x]$. 
Then we have: 
\begin{equation}
 f\left(x\right) 
  \approx 
  f\left(a\right) 
  +
  f^{\prime}\left(a\right) \left(x - a\right) 
  +
  \frac{1}{2}
  f^{\prime \prime}\left(a\right) \left(x - a\right)^{2} 
  \, .
  \label{eq:Taylor_1}
\end{equation}
Taking the expectations of both sides of 
\myeqref{eq:Taylor_1}, we obtain the following equation: 
\begin{alignat}{2}
 \mathbb{E}[f\left(x\right)] 
 &\approx
 \mathbb{E}[f\left(a\right)] 
 +
 \mathbb{E}[f^{\prime}\left(a\right) \left(x - a\right)] 
 +
 \frac{1}{2}
 \mathbb{E}[f^{\prime \prime}\left(a\right) \left(x - a\right)^{2}] 
 \notag 
 \\
 &=  
 f\left(a\right) 
 +
 \frac{1}{2}
 \mathbb{E}[f^{\prime \prime}\left(a\right) \left(x - a\right)^{2}]
 \notag 
 \\
 &=  
 f\left(\mathbb{E}[x]\right) 
 +
 \frac{1}{2}
 f^{\prime \prime}\left(\mathbb{E}[x]\right) 
 \mathbb{V}[x]
 \, . 
 \label{eq:Taylor_2}
\end{alignat}
The 0th order term is constant. 
The 1st order term is canceled because 
$x - a$ becomes zero by taking the expectation. 
$\mathbb{V}$ denotes the posterior variance. 

There are two types of approximations in CVB studies. 
The original \textbf{CVB} such as~\citep{Teh07NIPS} employs 2nd-order 
Taylor approximation, and considers the variance as in
\myeqref{eq:Taylor_2}. 
\citep{Asuncion09} reveals that the 0th-order Taylor
approximation performs quite well in practice for LDA. 
This is called the \textbf{CVB0} solution, and it approximates the 
posterior expectation by 
\begin{equation}
 \mathbb{E}[f(x)] \approx f( \mathbb{E}[x] ). 
 \label{eq:Taylor_0}
\end{equation}
Obviously, the CVB0 solution is much simpler than 
that of the 2nd-order approximation, and 
is often superior in terms of the perplexity of the learnt
model~\citep{Asuncion09,Sato_Nakagawa12,Sato12}. 

Now we apply the Taylor approximation to
two terms in the r.h.s. of 
\myeqref{eq:IRM_2dom_CVB_lowerbound_z_1i_posterior}. 
%For example, 
%the likelihood (first) term of the r.h.s. of 
%\myeqref{eq:IRM_2dom_CVB_lowerbound_z_1i_posterior} 
%requires expectations of log-gamma function. 
We pick one term from 
\myeqref{eq:IRM_2dom_CVB_z_1_likelihood} and show how we can
approximate the expectation.  
\begin{alignat}{2}
 \mathbb{E}_{\bm{Z}_{1}^{\setminus (1,i)}, \bm{Z}_{2}} \left[
 \log \Gamma\left( a_{k,l} + n_{k,l}^{\setminus (1,i)} \right)
 \right]
 &\approx
 \log \Gamma\left( 
 \mathbb{E}_{\bm{Z}_{1}^{\setminus (1,i)}, \bm{Z}_{2}} \left[
 a_{k,l} + n_{k,l}^{\setminus (1,i)} 
 \right] 
 \right) 
 \notag 
 \\
 &+ 
 \frac{1}{2} 
 \Psi \left( 
 \mathbb{E}_{\bm{Z}_{1}^{\setminus (1,i)}, \bm{Z}_{2}} \left[
 a_{k,l} + n_{k,l}^{\setminus (1,i)} 
 \right] 
 \right) 
 \mathbb{E}_{\bm{Z}_{1}^{\setminus (1,i)}, \bm{Z}_{2}} \left[
 \left(
 a_{k,l} + n_{k,l}^{\setminus (1,i)} 
 -
 \mathbb{E}_{\bm{Z}_{1}^{\setminus (1,i)}, \bm{Z}_{2}} \left[
 a_{k,l} + n_{k,l}^{\setminus (1,i)} \right] 
 \right)^{2}
 \right] 
 \notag 
 \\
 &=
 \log \Gamma\left( 
 a_{k,l} + 
 \mathbb{E}_{\bm{Z}_{1}^{\setminus (1,i)}, \bm{Z}_{2}} \left[
 n_{k,l}^{\setminus (1,i)} \right] 
 \right) 
 \notag 
 \\
 &+ 
 \frac{1}{2} 
 \Psi \left( 
 a_{k,l} + 
 \mathbb{E}_{\bm{Z}_{1}^{\setminus (1,i)}, \bm{Z}_{2}} \left[
 n_{k,l}^{\setminus (1,i)} \right] 
 \right) 
 \mathbb{E}_{\bm{Z}_{1}^{\setminus (1,i)}, \bm{Z}_{2}} \left[
 \left(
 n_{k,l}^{\setminus (1,i)} 
 -
 \mathbb{E}_{\bm{Z}_{1}^{\setminus (1,i)}, \bm{Z}_{2}} \left[
 n_{k,l}^{\setminus (1,i)} \right] 
 \right)^{2}
 \right] \, , 
 \notag 
 \\
 &=
 \log \Gamma\left( 
 a_{k,l} + 
 \mathbb{E}_{\bm{Z}_{1}^{\setminus (1,i)}, \bm{Z}_{2}} \left[
 n_{k,l}^{\setminus (1,i)} \right] 
 \right) 
 \notag 
 \\
 &+ 
 \frac{1}{2}
 \Psi \left( 
 a_{k,l} + 
 \mathbb{E}_{\bm{Z}_{1}^{\setminus (1,i)}, \bm{Z}_{2}} \left[
 n_{k,l}^{\setminus (1,i)} \right] 
 \right) 
 \mathbb{V}_{\bm{Z}_{1}^{\setminus (1,i)}, \bm{Z}_{2}} \left[
 n_{k,l}^{\setminus (1,i)} \right] \, , 
 \, \textbf{(CVB)} 
 \label{eq:IRM_2dom_CVB_approx_example_likelihood}
 \\
 &\approx
 \log \Gamma\left( 
 a_{k,l} + 
 \mathbb{E}_{\bm{Z}_{1}^{\setminus (1,i)}, \bm{Z}_{2}} \left[
 n_{k,l}^{\setminus (1,i)} \right] 
 \right) \, , 
 \, \textbf{(CVB0)} 
 \label{eq:IRM_2dom_CVB_approx_example_likelihood_0}
\end{alignat}
where $\Psi$ is the trigamma function. 
%We omit $(\bm{Z}_{1}^{\setminus(1,i)}, \bm{Z}_{2})$ for presentation clarity. 
%
For \myeqref{eq:IRM_2dom_CVB_z_1_prior}, 
the approximation is much simpler: 
\begin{equation}
 \mathbb{E}_{\bm{Z}_{1}^{\setminus (1,i)}, \bm{Z}_{2}} \left[
 m_{1,k}^{\setminus (1,i)} + 1
 \right]
 =
 \mathbb{E}_{\bm{Z}_{1}^{\setminus (1,i)}, \bm{Z}_{2}} \left[
 m_{1,k}^{\setminus (1,i)} 
 \right] 
 + 1 \, \textbf{(CVB,CVB0)}.
 \label{eq:IRM_2dom_CVB_approx_example_prior}
\end{equation}

From the above examples, 
we see that the expectation computations in 
\myeqref{eq:IRM_2dom_CVB_lowerbound_z_1i_posterior} 
are achieved by  
replacing counting statistics $n, N, m, M$ with 
their expectations and variances. 

The expectations and variances of the counting statistics in 
Eqs. (\ref{eq:IRM_2dom_CVB_m_1k_2l}-\ref{eq:IRM_2dom_CVB_n_N_kl})
are computed as follows: 
\begin{alignat}{2}
 \mathbb{E} [ m_{1,k} ]
 &= 
 \sum_{i=1}^{N_{1}} q( z_{1,i,k} )
 \, , \quad
 \mathbb{E} [ m_{2,l} ]
 = 
 \sum_{j=1}^{N_{2}} q( z_{2,j,l} )
 \, ,
 \label{eq:IRM_2dom_CVB_m_1k_2l_E}
 \\
 \mathbb{E} [ M_{1,k} ] 
 &= 
 \sum_{k^{\prime}=k+1}^{K_{1}} \mathbb{E}[ m_{1,k^{\prime}} ]
 \, , \quad 
 \mathbb{E} [ M_{2,l} ]
 = 
 \sum_{l^{\prime}=l+1}^{K_{2}} \mathbb{E} [ m_{2,l^{\prime}} ]
 \, ,
 \label{eq:IRM_2dom_CVB_M_1k_2l_E}
 \\
 \mathbb{E} [ n_{k, l} ]
 &= 
 \sum_{i=1}^{N_{1}} \sum_{j=1}^{N_{2}} q( z_{1,i,k} ) q( z_{2,j,l} ) x_{i,j}
 \, , \quad
 \mathbb{E} [ N_{k, l} ]
 = 
 \sum_{i=1}^{N_{1}} \sum_{j=1}^{N_{2}} q( z_{1,i,k} ) q( z_{2,j,l} ) (1 - x_{i,j})
 \, ,
 \label{eq:IRM_2dom_CVB_n_N_kl_E}
 \\
  \mathbb{V} [ n_{k, l} ]
  &= 
  \sum_{i=1}^{N_{1}} \sum_{j=1}^{N_{2}} 
  q( z_{1,i,k} ) \left(1 - q( z_{1,i,k} ) \right) q( z_{2,j,l} ) \left(1 - q( z_{2,j,l} )\right) x_{i,j}^{2}
  \, , 
  \label{eq:IRM_2dom_CVB_n_kl_V}
  \\
  \mathbb{V} [ N_{k, l} ]
  &= 
  \sum_{i=1}^{N_{1}} \sum_{j=1}^{N_{2}} 
  q( z_{1,i,k} ) \left(1 - q( z_{1,i,k} ) \right) q( z_{2,j,l} ) \left(1 - q( z_{2,j,l} )\right) x_{i,j}^{2} (1 - x_{i,j})^{2}
  \, .
  \label{eq:IRM_2dom_CVB_N_kl_V}
\end{alignat}
All expectations and variances are computed on 
$q\left(\bm{Z}_{1}\right)$ and $q\left(\bm{Z}_{2}\right)$. 

Based on 
Eqs. (\ref{eq:IRM_2dom_CVB_m_1k_2l_E}-\ref{eq:IRM_2dom_CVB_N_kl_V}), 
we can easily derive the expectations and variances for the first domain updates: 
\begin{alignat}{2}
 \mathbb{E}[m_{1,k}^{\setminus (1,i)}] 
  &= 
  \sum_{i^{\prime} \neq i} q\left(z_{1,i^{\prime},k}\right) 
  =
  \mathbb{E}[m_{1,k}] - q\left(z_{1,i,k}\right) 
  \, , \quad 
 \mathbb{E}[M_{1,k}^{\setminus (1,i)}]  
  = 
  \sum_{k^{\prime}=k+1}^{K} \mathbb{E}[m_{1,k^{\prime}}^{\setminus (1,i)}]
 \label{eq:IRM_2dom_CVB_mM_1k_approx}
  \\
 \mathbb{E}[n_{k, l}^{+(1,i,k)}]  
 &=  
 \sum_{j=1}^{N_{2}}  
 q\left(z_{2,j,l}\right) x_{i,j}
 \, , \quad
  \mathbb{E}[N_{k, l}^{+(1,i,k)}]  
  =  
  \sum_{j=1}^{N_{2}}  
  q\left(z_{2,j,l}\right) \left( 1 - x_{i,j} \right)
  \, , 
 \label{eq:IRM_2dom_CVB_nN_1kl_approx_plus}
 \\
  \mathbb{E}[n_{k, l}^{\setminus (1,i)}]  
  &= 
  \mathbb{E}[n_{k, l}]
  -
  q\left(z_{1,i,k}\right) \mathbb{E}[n_{k, l}^{+(1,i,k)}] 
  \, , 
 \label{eq:IRM_2dom_CVB_n_1kl_approx} 
 \\
 \mathbb{E}[N_{k, l}^{\setminus (1,i)}] 
 &= 
 \mathbb{E}[N_{k, l}]
 -
 q\left(z_{1,i,k}\right) \mathbb{E}[N_{k, l}^{+(1,i,k)}] 
 \, , 
 \label{eq:IRM_2dom_CVB_N_1kl_approx} 
\\
 \mathbb{V}[n_{k, l}^{+(1,i,k)}]  
 &= 
 \sum_{j=1}^{N_{2}} 
 q\left(z_{2,j,l}\right)
 \left(1 - q\left(z_{2,j,l}\right)\right)
 x_{i,j}^{2} 
 \, , \, 
 \mathbb{V}[N_{k, l}^{+(1,i,k)}] 
 = 
 \sum_{j=1}^{N_{2}}  
 q\left(z_{2,j,l}\right)
 \left(1 - q\left(z_{2,j,l}\right)\right) 
 \left( 1 - x_{i,j} \right)^{2}
 \, , 
 \label{eq:IRM_2dom_CVB_nN_1kl_approx_var_plus}
 \\
  \mathbb{V}[n_{k, l}^{\setminus (1,i)}]  
  &= 
 \mathbb{V}[n_{k, l}]
  -
  q\left(z_{1,i,k}\right) \left(1 - q\left(z_{1,i,k}\right)\right) 
  \mathbb{V}[n_{k, l}^{+(1,i,k)}] 
  \, , 
  \label{eq:IRM_2dom_CVB_n_1kl_approx_var}
  \\
 \mathbb{V}[N_{k, l}^{\setminus (1,i)}] 
 &= 
 \mathbb{V}[N_{k, l}]
  -
  q\left(z_{1,i,k}\right) \left(1 - q\left(z_{1,i,k}\right)\right) 
  \mathbb{V}[N_{k, l}^{+(1,i,k)}] 
 \, .
 \label{eq:IRM_2dom_CVB_N_1kl_approx_var}   
\end{alignat}
All expectations and variances are computed on 
$q\left(\bm{Z}_{1}^{\setminus (1,i)}\right)$ and $q\left(\bm{Z}_{2}\right)$. 
By plugging Eqs. (\ref{eq:IRM_2dom_CVB_mM_1k_approx}-\ref{eq:IRM_2dom_CVB_N_1kl_approx_var}) 
into 
\myeqref{eq:IRM_2dom_CVB_z_1_prior} and 
\myeqref{eq:IRM_2dom_CVB_z_1_likelihood}, 
we can evaluate $q(z_{1,i} = k)$ for CVB and CVB0 solutions. 

For the second domain updates, we have a completely symmetric story. 
Approximated expectations are: 
\begin{alignat}{2}
 \mathbb{E}[m_{2,l}^{\setminus (2,j)}] 
 &= 
 \sum_{j^{\prime} \neq j} q\left(z_{2,j^{\prime},l}\right)
 \, ,
 \quad 
 \mathbb{E}[M_{2,l}^{\setminus (2,j)}]  
 = 
 \sum_{l^{\prime}=l+1}^{K_{2}} 
 \mathbb{E}[m_{2,{l^{\prime}}}^{\setminus (2,j)}]
 \, ,
 \label{eq:IRM_2dom_CVB_mM_2l_approx}
 \\
 \mathbb{E}[n_{k, l}^{+(2,j,l)}]  
  &= 
  \sum_{i=1}^{N_{1}} 
  q\left(z_{1,i,k}\right) x_{i,j}
  \, , \quad 
  \mathbb{E}[N_{k, l}^{+(2,j,l)}] 
  = 
  \sum_{i=1}^{N_{1}}  
  q\left(z_{1,i,k}\right) \left( 1 - x_{i,j} \right)
  \, , 
  \label{eq:IRM_2dom_CVB_nN_2kl_approx_plus}
  \\
 \mathbb{E}[n_{k, l}^{\setminus (2,j)}]  
 &= 
 \mathbb{E}[n_{k, l}] 
 -
 q\left(z_{2,j,l}\right) \mathbb{E}[n_{k, l}^{+(2,j,l)}] 
 \, ,  
 \label{eq:IRM_2dom_CVB_n_2kl_approx}
 \\
 \mathbb{E}[N_{k, l}^{\setminus (2,j)}] 
 &= 
  \mathbb{E}[N_{k, l}] 
  -
  q\left(z_{2,j,l}\right) \mathbb{E}[N_{k, l}^{+(2,j,l)}] 
 \, , 
 \label{eq:IRM_2dom_CVB_N_2kl_approx}
 \\
  \mathbb{V}[n_{k, l}^{+(2,j,l)}]  
  &= 
  \sum_{i=1}^{N_{1}} 
  q\left(z_{1,i,k}\right)
  \left(1 - q\left(z_{1,i,k}\right)\right)
  x_{i,j}^{2}
  \, ,  
   \mathbb{V}[N_{k, l}^{+(2,j,l)}] 
   = 
   \sum_{i=1}^{N_{1}}  
   q\left(z_{1,i,k}\right) 
   \left(1 - q\left(z_{1,i,k}\right) \right)
   \left( 1 - x_{i,j} \right)^{2}
   \, , 
  \label{eq:IRM_2dom_CVB_n_2kl_approx_var_plus}
  \\
 \mathbb{V}[n_{k, l}^{\setminus (2,j)}]  
 &= 
 \mathbb{V}[n_{k, l}] 
  -
  q\left(z_{2,j,l}\right) \left(1 - q\left(z_{2,j,l}\right) \right)
  \mathbb{V}[n_{k, l}^{+(2,j,l)}] 
 \, , 
 \label{eq:IRM_2dom_CVB_n_2kl_approx_var}
 \\
 \mathbb{V}[N_{k, l}^{\setminus (2,j)}] 
 &= 
 \mathbb{V}[N_{k, l}] 
 -
 q\left(z_{2,j,l}\right) \left(1 - q\left(z_{2,j,l}\right) \right)
 \mathbb{V}[N_{k, l}^{+(2,j,l)}] 
 \, . 
 \label{eq:IRM_2dom_CVB_N_2kl_approx_var}
\end{alignat}
All expectations and variances are computed on 
$q\left(\bm{Z}_{1}\right)$ and $q\left(\bm{Z}_{2}^{\setminus (2,j)}\right)$. 
Plugging the above equations into 
\myeqref{eq:IRM_2dom_CVB_z_2_prior} and 
\myeqref{eq:IRM_2dom_CVB_z_2_likelihood}, 
we can then evaluate $q(z_{2,j} = l)$ for CVB and CVB0 solutions. 

By iterating the variational posterior updates for all $i, j, k, l$
in an interleaving manner, we obtain the local optimal solutions of 
$q(\bm{Z}_{1})$ and $q(\bm{Z}_{2})$. 
The overall procedure for CVB inference of IRM is 
described as pseudo codes in \myfigref{fig:CVB_pseudocode}. 

\begin{figure*}
 \begin{center}
  \includegraphics[width=100mm]{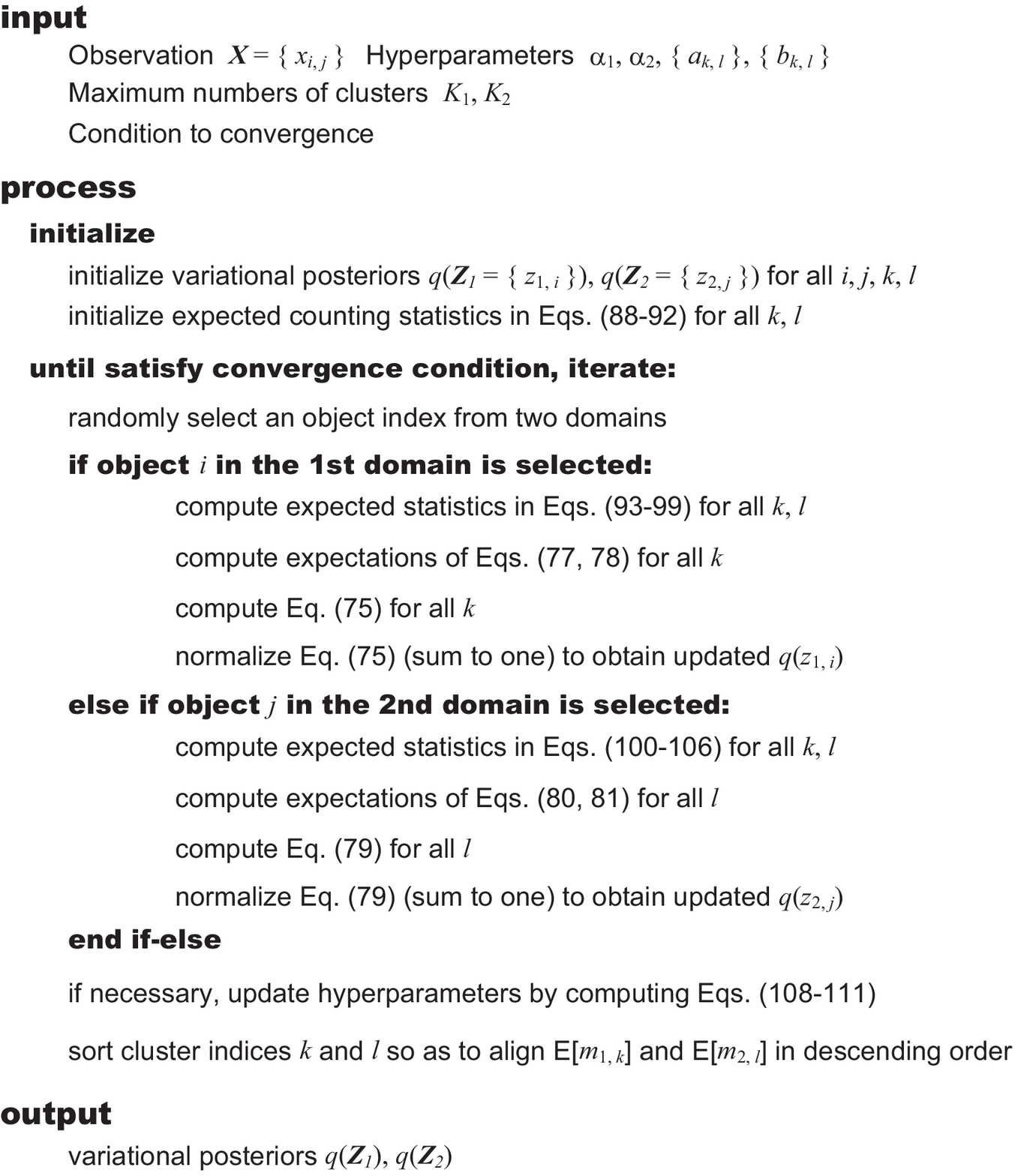}
 \end{center}
 \caption{Pseudo code of CVB inference for IRM. }
\label{fig:CVB_pseudocode}
\end{figure*}

\subsection{Optimizing hyperparameters}
We can also derive CVB0-based update rules of hyperparameters 
by taking derivatives of 
\myeqref{eq:IRM_2dom_CVB_lowerbound_def} w.r.t. 
the hyperparameters. 
In particular, 
optimizing the concentration parameters of DP, $\alpha_{1}$ and
$\alpha_{2}$ are important. 
They play an important role in nonparametric Bayes modeling, 
but their update rule for CVB has never been studied. 

For the hyperparameters, we use another representation of 
the lower bound. 
\begin{alignat}{2}
 \mathscr{L} \left(
 \alpha_{1}, \alpha_{2}, a, b 
 \right)
 &= 
 \sum_{\bm{Z}_{1}, \bm{Z}_{2} } 
 \left[
 q\left(\bm{Z}_{1}\right)
 q\left(\bm{Z}_{2}\right)
 \log 
 \frac{
 p\left( 
 \bm{X}, \bm{Z}_{1}, \bm{Z}_{2}, 
 \alpha_{1}, \alpha_{2}, a, b 
 \right)
 }{
 q\left(\bm{Z}_{1}\right)
 q\left(\bm{Z}_{1}\right)
 }
 \right] 
 \notag 
 \\
 &=
 \mathbb{E}_{\bm{Z}_{1}, \bm{Z}_{2}} \left[
 \log  
 p\left( 
 \bm{X} | \bm{Z}_{1}, \bm{Z}_{2}, a, b 
 \right)
 +
 \log  
 p\left( 
 \bm{Z}_{1} | \alpha_{1}
 \right)
 +
 \log  
 p\left( 
 \bm{Z}_{2} | \alpha_{2}
 \right)
 -
 \log q\left(\bm{Z}_{1}\right)
 -
 \log q\left(\bm{Z}_{2}\right)
 \right] 
 \notag 
 \\
 &=
 \sum_{k} \sum_{l} 
 \mathbb{E}_{\bm{Z}_{1}, \bm{Z}_{2}} \left[
 \log 
 \frac{
 \Gamma \left( a_{k,l} + b_{k,l} \right)
 }{
 \Gamma \left( a_{k,l} \right)
 \Gamma \left( b_{k,l} \right)
 }
 \frac{
 \Gamma \left( a_{k,l} + n_{k,l} \right)
 \Gamma \left( b_{k,l} + N_{k,l} \right)
 }{
 \Gamma \left( a_{k,l} + b_{k,l} + n_{k,l} + N_{k,l} \right)
 }
 \right]
 \notag 
 \\
 &+
 K_{1} \log \alpha_{1} 
 +
 \sum_{k}
 \mathbb{E}_{\bm{Z}_{1}, \bm{Z}_{2}} \left[
 \log \Gamma \left(M_{1,k} + \alpha_{1}\right)
 -
 \log \Gamma \left(m_{1,k} + M_{1,k} + \alpha_{1} + 1\right)
 \right]
 \notag 
 \\
 &+
 K_{2} \log \alpha_{2} 
 +
 \sum_{l}
 \mathbb{E}_{\bm{Z}_{2}, \bm{Z}_{2}} \left[
 \log \Gamma \left(M_{2,l} + \alpha_{2}\right)
 -
 \log \Gamma \left(m_{2,l} + M_{2,l} + \alpha_{2} + 1\right)
 \right]
 \notag 
 \\
 &-
 \mathbb{E}_{\bm{Z}_{1}, \bm{Z}_{2}} \left[
 \log q\left(\bm{Z}_{1}\right)
 \right]
 -
 \mathbb{E}_{\bm{Z}_{1}, \bm{Z}_{2}} \left[
 \log q\left(\bm{Z}_{2}\right)
 \right] \, .
 \label{eq:IRM_2dom_CVB_lowerbound_hyprm}
\end{alignat}
The last line is irrelevant to the hyperparameters. 

To derive the update rules, 
we use the fixed point iteration technique used in 
\citep{Iwata12,Minka00,Wallach08}. 
For the concentration parameter $\alpha_{1}$, 
we have the following inequality between 
the current values of $\alpha_{1}$ and the 
new value $\hat{\alpha}_{1}$: 
\begin{multline}
 \log \Gamma \left( 
 M_{1,k} + \hat{\alpha}_{1} \right)
 -
 \log \Gamma \left(
 m_{1,k} +  M_{1,k} + \hat{\alpha}_{1} + 1 \right)
 \\
 \geq
 \log \Gamma \left( 
 M_{1,k} + \alpha_{1} \right)
 -
 \log \Gamma \left(
 m_{1,k} +  M_{1,k} + \alpha_{1} + 1 \right) 
 \\
 + 
 \left( \alpha_{1} - \hat{\alpha}_{1} \right) 
 \left\{
 \Psi \left( 
 m_{1,k} +  M_{1,k} + \alpha_{1} + 1 \right) 
 -
 \Psi \left( 
 M_{1,k} + \alpha \right) 
 \right\}
 \, .
 \label{eq:IRM_2dom_CVB_lowerbound_alpha_1_inequality}
\end{multline}
We insert the above inequality into the lower bound \myeqref{eq:IRM_2dom_CVB_lowerbound_hyprm}, 
take derivative w.r.t. $\hat{\alpha}_{1}$ with 
CVB0 approximation and equate to zero. 
Then, we obtain the following update rule for $\alpha_{1}$:
\begin{equation}
 \hat{\alpha}_{1} = 
  \frac{
  K_{1}
  }{
  \sum_{k}
 \left\{
 \Psi \left( 
 \mathbb{E}_{\bm{Z}_{1}, \bm{Z}_{2}} \left[ m_{1,k} \right] 
 +  \mathbb{E}_{\bm{Z}_{1}, \bm{Z}_{2}} \left[ M_{1,k} \right] 
 + \alpha_{1} + 1 \right) 
 -
 \Psi \left( 
 \mathbb{E}_{\bm{Z}_{1}, \bm{Z}_{2}} \left[ M_{1,k} \right] 
 + \alpha_{1} \right) 
 \right\} 
  } \, .
\label{eq:IRM_2dom_CVB_alpha_1_update}
\end{equation}
For $\alpha_{2}$, 
\begin{equation}
 \hat{\alpha}_{2} = 
  \frac{
  K_{2}
  }{
  \sum_{l}
 \left\{
 \Psi \left( 
 \mathbb{E}_{\bm{Z}_{1}, \bm{Z}_{2}} \left[ m_{2,l} \right] 
 +  \mathbb{E}_{\bm{Z}_{1}, \bm{Z}_{2}} \left[ M_{2,l} \right] 
 + \alpha_{2} + 1 \right) 
 -
 \Psi \left( 
 \mathbb{E}_{\bm{Z}_{1}, \bm{Z}_{2}} \left[ M_{2,l} \right] 
 + \alpha_{2} \right) 
 \right\} 
  } \, .
\label{eq:IRM_2dom_CVB_alpha_2_update}
\end{equation}

In the same manner, we have the update rules for 
the observation hyperparameters $a_{k,l}$ and $b_{k,l}$. 
\begin{equation}
 \hat{a}_{k,l} 
  =
  a_{k,l}
  \frac{
  \Psi \left( 
		a_{k,l} + 
		\mathbb{E}_{\bm{Z}_{1}, \bm{Z}_{2}} \left[ n_{k,l} \right] 
	   \right)
  -
  \Psi \left( a_{k,l} \right)
  }{
  \Psi \left( 
		a_{k,l} + b_{k,l} 
		+ \mathbb{E}_{\bm{Z}_{1}, \bm{Z}_{2}} \left[ n_{k,l} \right] 
		+ \mathbb{E}_{\bm{Z}_{1}, \bm{Z}_{2}} \left[ N_{k,l} \right] 
	   \right) 
  -
  \Psi \left( a_{k,l} + b_{k,l} \right) 
  } \, .
  \label{eq:IRM_2dom_CVB_lowerbound_a_kl_update}  
\end{equation}
\begin{equation}
 \hat{b}_{k,l} 
  =
  b_{k,l}
  \frac{
  \Psi \left( 
		b_{k,l} + 
		\mathbb{E}_{\bm{Z}_{1}, \bm{Z}_{2}} \left[ N_{k,l} \right] 
	   \right)
  -
  \Psi \left( b_{k,l} \right)
  }{
  \Psi \left( 
		a_{k,l} + b_{k,l} 
		+ \mathbb{E}_{\bm{Z}_{1}, \bm{Z}_{2}} \left[ n_{k,l} \right] 
		+ \mathbb{E}_{\bm{Z}_{1}, \bm{Z}_{2}} \left[ N_{k,l} \right] 
	   \right) 
  -
  \Psi \left( a_{k,l} + b_{k,l} \right) 
  } \, .
  \label{eq:IRM_2dom_CVB_lowerbound_b_kl_update}  
\end{equation}

As evident, the update rules for CVB are very similar to those of 
VB (Eqs. (\ref{eq:IRM_2dom_VB_posterior_lowerbound_alpha_1_update}-\ref{eq:IRM_2dom_VB_posterior_lowerbound_b_kl_update})).
The only difference is that 
the CVB update rules incorporate the (approximated) counts, while 
the VB update rules use the VB posterior parameters instead. 

%%%%%%%%%%%%%%%%%%%%%%
%% convergence
%%%%%%%%%%%%%%%%%%%%%%
\section{Convergence Detection of CVB}
It is theoretically guaranteed that 
each iteration of VB monotonically increases the
variational lower bound~\citep{Attias00,Bishop06}, 
and VB eventually converges to its local optimal solutions in finite iterations. 
Thus, the VB inference yields easy detection of convergence; 
the algorithm automatically detects (technically sound) convergence by
computing and monitoring the variational lower bound 
(\myeqref{eq:VB_lowerbound_def}) for each iteration. 

Unfortunately, no theoretical guarantee of CVB convergence has been provided so far. 
This is due to the fact that we cannot correctly evaluate 
the posterior expectations over $\bm{Z}$ as we have seen. 
What we try to find in the CVB solutions is a stationary point of a
Taylor-approximated CVB lower bound; 
thus, we are not sure that the procedure actually
monotonically improves the true lower bound. 
Convergence analysis of CVB inferences remains an important open problem
in the machine learning field. 
However, the problem of CVB convergence is not so much discussed in the
literature since CVB inference yields better posterior estimations in many cases. 

Convergence analysis of CVB remains an important but difficult problem. 
Instead of tackling this problem directly, 
we study two aspects of CVB convergence in this paper. 
First, we empirically study the convergence behaviors of CVB (in IRM)
by monitoring a couple of quantities: a naive VB lower bound and the
pseudo leave-one-out log likelihood. 
We found that these two quantities are potentially useful for CVB
convergence detection. 
Second, we propose a simple and effective technique to 
assure the automatic detection of CVB inference convergence called 
Averaged CVB (ACVB). 
We also prove that the ACVB reaches the stationary point of
the true CVB lower bound, if it exists. 

From the non-expert user's viewpoint, 
it is highly preferable if we can devise an easy convergence
detection algorithm for CVB in general (not restricted to IRM). 
For many practitioners, it is not easy to manually determine the convergence 
of the inference algorithms. 
This might be a part of reasons why Maximum Likelihood estimators and EM-based algorithms
are preferred. 
Therefore, convergence-guaranteed ACVB would allow users to use CVB inference,  
which is more precise than naive VB in theory, 
with automatic computation termination at guaranteed convergence. 

To the best of our knowledge, \citep{Foulds13} is the only work that proposes 
convergence-assured CVB inference so far. 
This model is based on the Robbins and Monro stochastic approximation~\citep{Robbins_Monro51} 
and is only valid for LDA-CVB0. 
More precisely, the solution presented in \citep{Foulds13} is a MAP solution, leveraging the fact that 
the MAP solution is very similar to the CVB0 solution in the case of LDA. 
On the contrary, ACVB proposed in this paper is valid for any probabilistic model, and for both CVB and CVB0. 
The \citep{Foulds13} approach is no longer valid for IRM because the MAP solution and the CVB0 solution are different.  

% Quantities to be monitored 
\subsection{Assessing Candidate Quantities for CVB Convergence Detection}
We cannot correctly compute the true CVB lower bound in 
~\myeqref{eq:IRM_2dom_CVB_lowerbound_def}. 
Therefore, our first approach would be to find some quantities that 
serve as the proxies of the true CVB lower bound. 

For that purpose, we examine two quantities. 
The first candidate is the naive VB lower bound in~\myeqref{eq:VB_lowerbound_def}. 
The VB lower bound is a lower bound of the true CVB lower bound. 
The variational posteriors of parameters are computed using
$q(\bm{Z})$, which is obtained by the CVB inference. 
Using these variational posteriors, 
we may compute the VB lower bound as a proxy of the true CVB lower bound. 

The second one is the pseudo leave-one-out log likelihood 
of the training data set.  
The CVB solutions (and the Gibbs sampler) compute the predictive
distribution of an object, say, $z_{1,i}$, in a leave-one-out manner. 
Therefore, we might be able to detect the convergence of the CVB
inference by watching these predictive distributions. 
More precisely, 
we normalize \myeqref{eq:IRM_2dom_CVB_lowerbound_z_1i_posterior} sum to
one, in computing $q(z_{1,i})$. 
The normalize constant, namely 
$C_{1,i} = \sum_{k} q\left(z_{1,i,k}\right)$ serves as a pseudo
leave-one-out log likelihood of the object $(1,i)$ given the model. 
Then, the whole sum of this log likelihood, 
$C = \sum_{i} C_{1,i} + \sum_{j} C_{2,j}$, 
is the pseudo log likelihood of the training data set. 

Figure \ref{fig:convergence_CVB} and \ref{fig:convergence_CVB0}
present the evolutions of these two quantities in 
CVB (\myfigref{fig:convergence_CVB}) 
and CVB0 (\myfigref{fig:convergence_CVB0}), respectively. 
Hyperparameters are set to the best setup found in our experimental
validations (see the Experiment section). 
As evident from the cites, both of the quantities converge
within a few iterations. 

% naive VB is interesting, but very heavy. not recomended
It is interesting to see that the naive VB lower bound (presented in solid
lines) increases as the CVB inference proceeds. 
The CVB inference does not increase the VB lower bound directly, which
is a looser bound than that of CVB. However, the learned model
actually decreases the discrepancy between the variational posteriors and
the true posteriors, resulting in increasing VB lower bound. 
Unfortunately, the computational load is considerably heavy compared to
the CVB updates. 
% LOO is more or like evidence. No additional computational cost for
% CVBs. Good. 
In contrast, pseudo leave-one-out training log likelihood (presented in
dashed lines) costs no extra computational loads from the original CVB updates. 
Statistically, the property of the quantity is more or less similar to the 
model evidence; thus the quantity would be a good choice for convergence
detection. 

\begin{figure*}[t]
 \begin{center}
  \includegraphics[width=70mm]{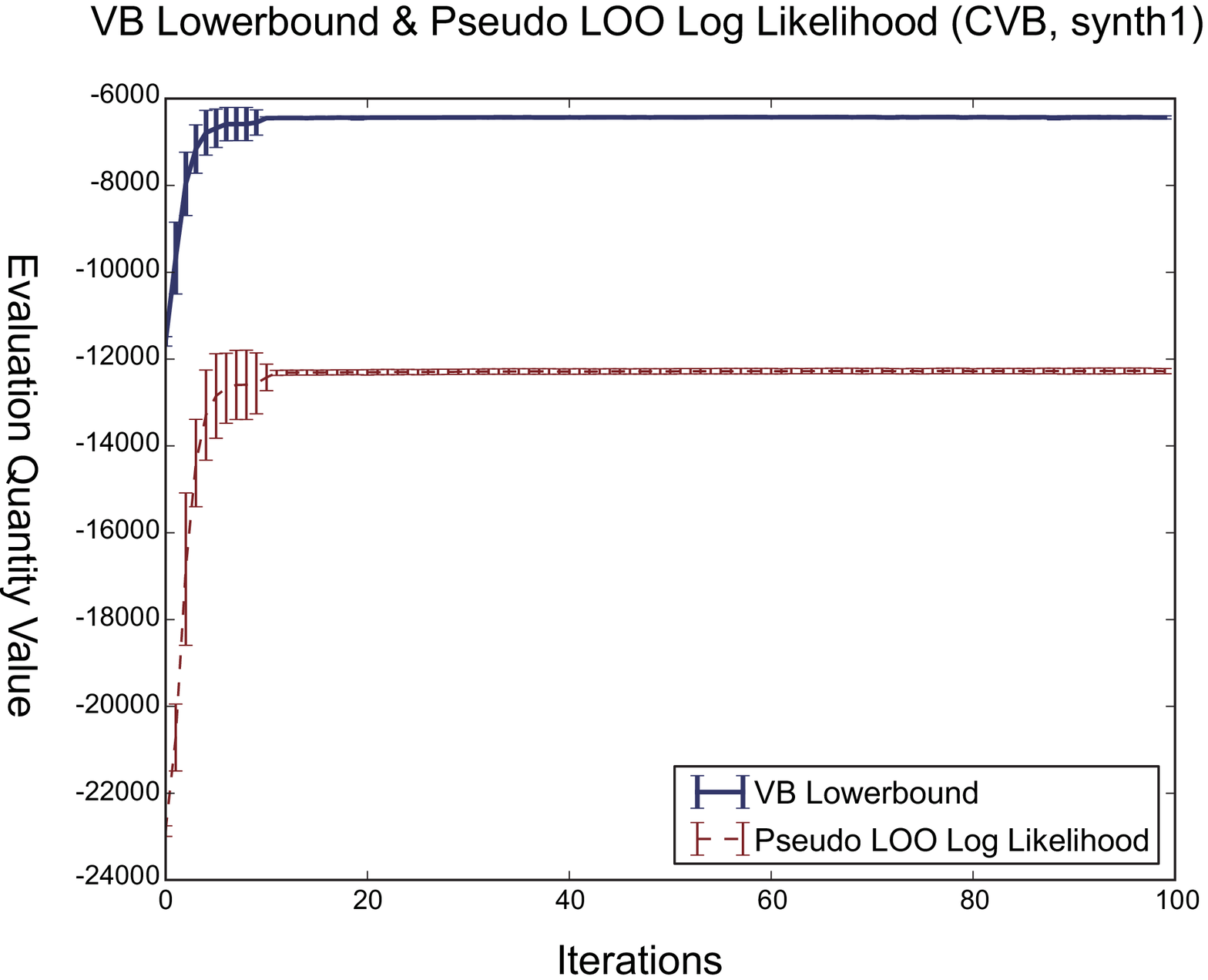}
  \includegraphics[width=70mm]{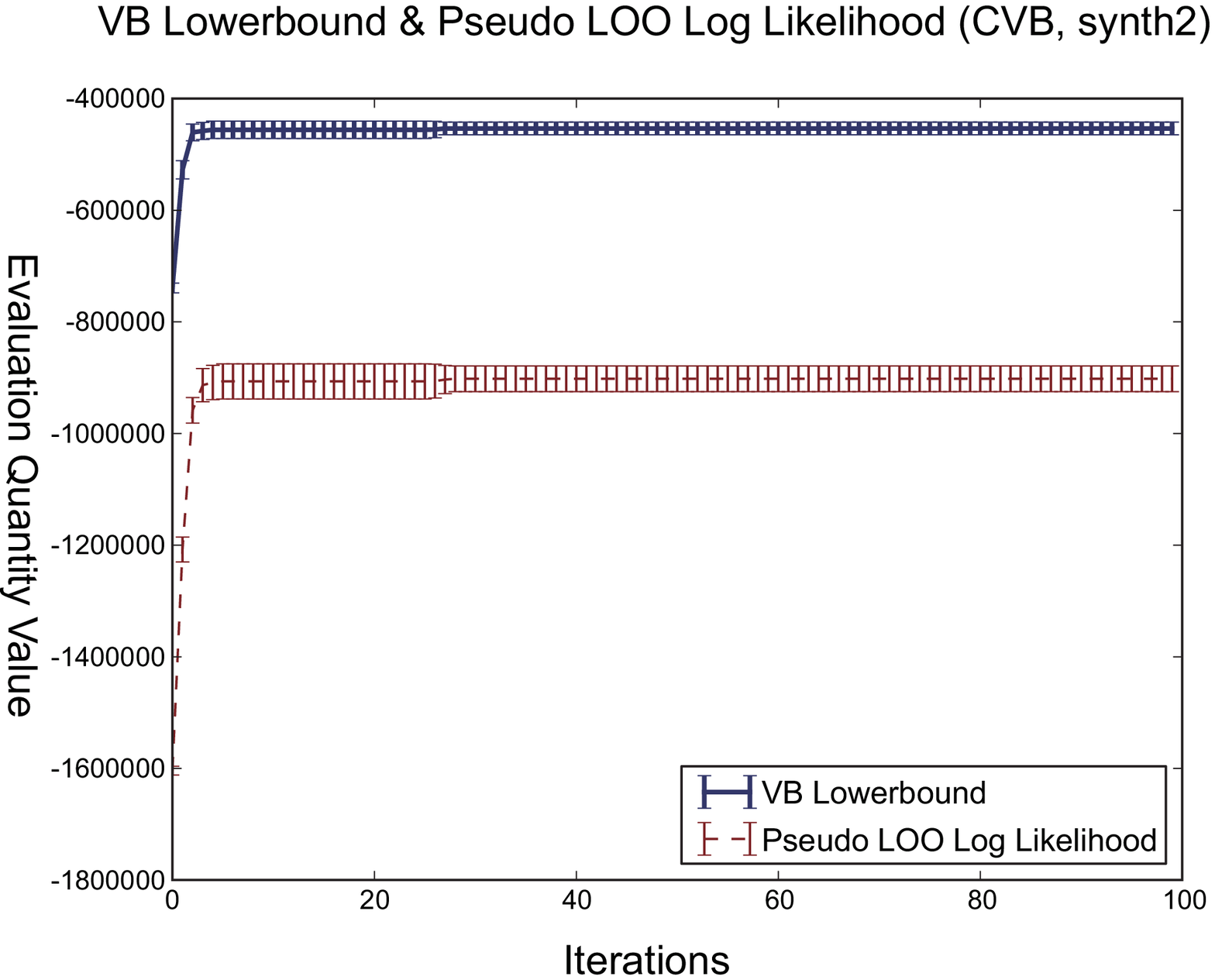}
 \end{center}
 \caption{Evolutions of two quantities over CVB iterations. Solid lines
 indicate the evolutions of naive VB lower bound. Dashed lines indicate
 the evolution of pseudo leave-one-out log likelihood on training
 data. Error bars denote the standard deviations. Left:
 computed on Synth1 dataset. Right: computed on Synth2 dataset. For
 details of the datasets, see the experiment section. }
 \label{fig:convergence_CVB}
\end{figure*}

\begin{figure*}[t]
 \begin{center}
  \includegraphics[width=70mm]{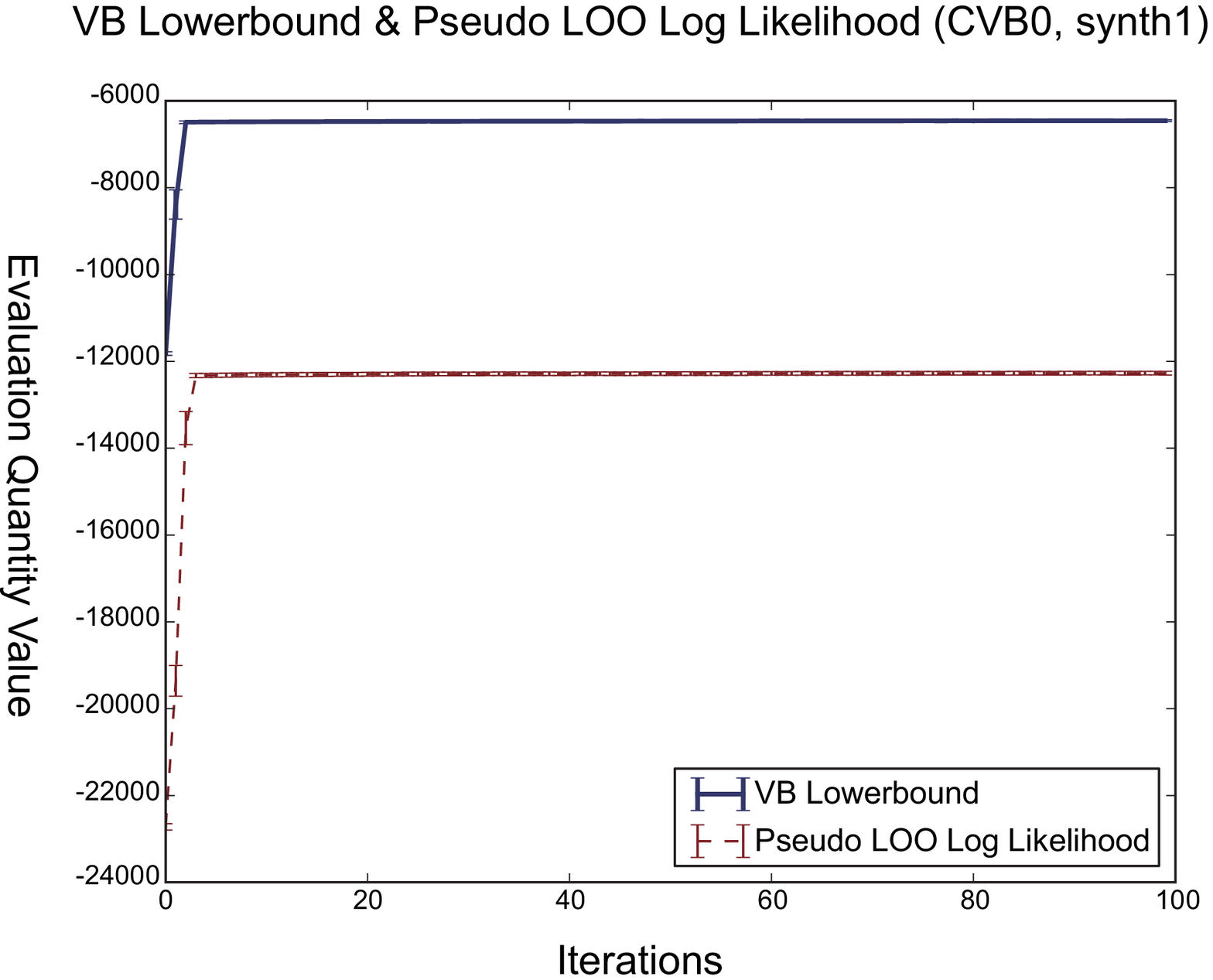}
  \includegraphics[width=70mm]{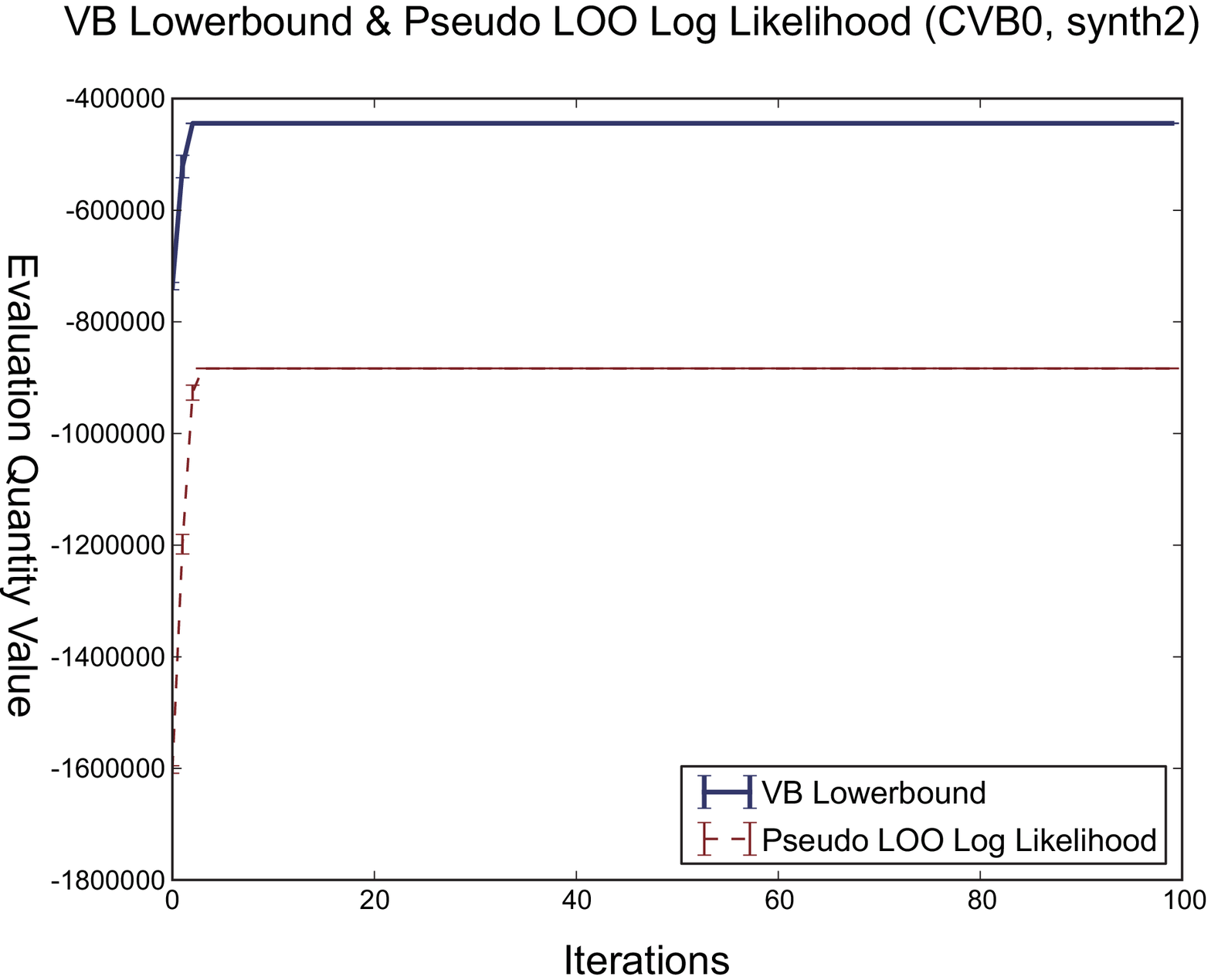}
 \end{center}
 \caption{Evolutions of two quantities over CVB0 iterations. Solid lines
 indicate the evolutions of naive VB lower bound. Dashed lines indicate
 the evolution of pseudo leave-one-out log likelihood on training
 data. Error bars denote the standard deviations. Left:
 computed on Synth1 dataset. Right: computed on Synth2 dataset. For
 details of the datasets, see the experiment section. }
 \label{fig:convergence_CVB0}
\end{figure*}

%%%
% A-CVB start from here
%%%
\subsection{Averaged CVB: Convergence technique for general CVB}
Next, we propose a more direct and convergence-guaranteed 
technique for general CVB inferences. 
The technique is based on monitoring the changes of $q(\bm{Z})$. 
The rationale is simple: 
it is reasonable to monitor $q(\bm{Z})$ since 
the CVB solutions try to obtain the stationary point of the Taylor
approximated lower bound with respect to $q(\bm{Z})$. 

Then, we develop a simple annealing technique, 
called Averaged CVB (ACVB) to assure the convergence of CVB solutions.  
We would like to emphasize that the discussion of ACVB is not limited to the IRM: 
this technique is applicable to CVB inference on any model. 
Also, ACVB is valid for CVB (2nd order) and CVB0 equally. 

%Let $q(z_i)$ be the variational posterior for latent variable $z_i$ corresponding to data $x_i$ in the CVB inference.
%For simplicity, $q^{(s)}$ denotes $q^{(s)}(z_i)$ which is the $s$-th step update of $q(z_i)$ by using the CVB inference.
After a certain number of iterations for ``burn-in'', 
we gradually decrease the portion of variational posterior changes 
using the following equation: 
\begin{equation}
 \bar q^{(s+1)}
  =
  \left(1-\frac{1}{s+1} \right) \bar q^{(s)} 
  + 
  \frac{1}{s+1} q^{(s+1)} \, ,
\, \, \, \text{or} \, \, \,\ 
\bar q^{(S)}=\frac{1}{S}\sum_{s=1}^{S} q^{(s)},
\label{eq:ACVB}
\end{equation}
where 
$s$ denotes the iterations after completion of the ``burn-in'' period, 
$\bar{q}^{(s)}$ denotes the ``annealed'' variational posterior at the 
$s$th iteration, 
$q^{(s)}$ denotes the variational posterior by CVB inference at
the $s$th iteration, 
and $S$ is the total number of iterations. 
After the ``burn-in'' period, we monitor the ratio of changes of $\bar{q}$ and 
detect convergence when the ratio falls below a predefined
threshold. As the final result, 
we do not finally use $q^{(s)}$ but $\bar q^{(s)}$. 
During the burn-in period, we monitor the changes of $q$: in most cases,
$q$ quickly converges before entering the annealing process. 

Concerning the convergence of ACVB, there are two points to note. 
The first point is rather evident but makes the ACVB useful for practical CVB inference. 
ACVB updates assure convergence, 
and we can easily detect the convergence by taking the difference of $\bar{q}$ 
in successive iterations. 
\begin{theorem}
\label{theorem:acvb:convergence_obvious}
The averaged variational posterior $\bar q^{(s)}$ is convergence-assured: 
$\forall \epsilon >0,~\exists S_0,~\text{s.t.}~\forall S>S_0 \Rightarrow \frac{1}{N}\sum_{i=1}^{N}\left|\bar{q}_i^{(S)}-\bar{q}_i^{(S-1)} \right|<\epsilon$. 
\end{theorem}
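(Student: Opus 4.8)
The plan is to exploit the fact that $\bar q^{(S)}$ is simply the running (Ces\`aro) average $\frac{1}{S}\sum_{s=1}^{S} q^{(s)}$ of the raw CVB posteriors, so that the increment between successive averaged iterates must shrink like $1/S$ \emph{regardless} of how the underlying sequence $q^{(s)}$ behaves. No assumption about convergence of the raw CVB updates is needed, which is precisely why this statement is the ``evident but useful'' half of the convergence discussion.

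First I would rewrite the per-iteration difference using the recursive form of \myeqref{eq:ACVB}. Substituting the update directly yields
\[
\bar q_i^{(S)} - \bar q_i^{(S-1)}
= \left(1-\tfrac{1}{S}\right)\bar q_i^{(S-1)} + \tfrac{1}{S} q_i^{(S)} - \bar q_i^{(S-1)}
= \frac{1}{S}\left( q_i^{(S)} - \bar q_i^{(S-1)} \right),
\]
so the increment is exactly $1/S$ times the gap between the current raw posterior and the previous average.

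Second, I would invoke boundedness of all quantities involved. Each $q_i^{(s)}$ is a component of a normalized multinomial variational posterior (recall $q(z_{1,i})$ and $q(z_{2,j})$ are normalized to sum to one over cluster indices), hence lies in $[0,1]$; the average $\bar q_i^{(S-1)}$ is a convex combination of such components and therefore also lies in $[0,1]$. Consequently $\left| q_i^{(S)} - \bar q_i^{(S-1)} \right| \le 1$, and combining with the identity above gives $\left| \bar q_i^{(S)} - \bar q_i^{(S-1)} \right| \le 1/S$ for every coordinate $i$. Averaging over $i$ preserves the bound, so $\frac{1}{N}\sum_{i=1}^{N}\left| \bar q_i^{(S)} - \bar q_i^{(S-1)} \right| \le 1/S$. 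The claim then follows by taking $S_0 = \lceil 1/\epsilon \rceil$: for all $S > S_0$ the left-hand side is at most $1/S < \epsilon$.

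The argument has no serious obstacle; the single point requiring care is confirming that the iterates stay in $[0,1]$ so that the gap $\left| q_i^{(S)} - \bar q_i^{(S-1)} \right|$ is \emph{uniformly} bounded. This uniform bound is exactly what converts the vanishing step size $1/S$ into genuine convergence of $\bar q$, rather than merely establishing that the annealing weights decay. I would therefore state the $[0,1]$ membership explicitly as the one structural fact the proof relies on.
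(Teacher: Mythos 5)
Your proposal is correct and follows essentially the same route as the paper: both arguments reduce the increment $\bar q^{(S)}-\bar q^{(S-1)}$ to a $1/S$-weighted difference of quantities bounded by $1$ (the paper bounds the two terms separately via the triangle inequality to get $2/S$, while your use of $q_i^{(S)},\bar q_i^{(S-1)}\in[0,1]$ gives the marginally tighter $1/S$), and then choose $S_0$ accordingly. Your explicit remark that the $[0,1]$ membership of the normalized posteriors is the one structural fact being used is a worthwhile clarification of what the paper leaves implicit.
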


\begin{proof}
Since
\begin{align*}
\frac{1}{S}\sum_{s=1}^{S}q^{(s)}=\left(1-\frac{1}{S}\right) \frac{1}{S-1}\sum_{s=1}^{S-1}q^{(s)}+\frac{1}{S} q^{(S)},
\end{align*}
we have
\begin{align*}
\left|\frac{1}{S}\sum_{s=1}^{S}q^{(s)}- \frac{1}{S-1}\sum_{s=1}^{S-1}q^{(s)}\right|
&=\left|-\frac{1}{S} \frac{1}{S-1}\sum_{s=1}^{S-1}q^{(s)}+\frac{1}{S} q^{(S)}\right| \\
&\leq \frac{1}{S} \frac{1}{S-1}\sum_{s=1}^{S-1}|q^{(s)}|+\frac{1}{S} |q^{(S)}|\\
&\leq \frac{1}{S} \frac{1}{S-1}(S-1)+\frac{1}{S}=\frac{2}{S}.
\end{align*}
Thus,
\begin{align*}
\frac{1}{N}\sum_{i=1}^{N}\left|\frac{1}{S}\sum_{s=1}^{S}q_i^{(s)}-\frac{1}{S-1} \sum_{s=1}^{S-1}q_i^{(s)} \right|\leq \frac{2}{S}.
\end{align*}
If we set $S_0=\frac{2}{\epsilon}$, then $\forall S>S_0$,
\begin{align*}
\frac{1}{N}\sum_{i=1}^{N}\left|\frac{1}{S}\sum_{s=1}^{S}q_i^{(s)}-\frac{1}{S-1} \sum_{s=1}^{S-1}q_i^{(s)} \right|\leq \frac{2}{S} < \frac{2}{S_0}=\epsilon.
\end{align*}
This means 
\begin{align*}
\frac{1}{N}\sum_{i=1}^{N}\left|\bar{q}_i^{(S)}-\bar{q}_i^{(S-1)} \right| < \epsilon.
\end{align*}
\end{proof}
Thus, we can automatically stop the ACVB inference by using a stopping rule based on the difference of ACVB posteriors. 

The second point is much noteworthy and validates the use of ACVB in Bayesian inference: 
we can prove that the converged $\bar{q}$ is asymptotically equivalent to the
stationary point of the CVB lower bound, if it exists 
(note that it
is not clear whether the true CVB lower bound has a stationary point in theory). 
\begin{theorem}
\label{theorem:acvb:convergence}
If the variational posterior $q^{(s)}$ converges to a stationary point in the CVB lower bound, then the averaged variational posterior $\bar q^{(s)}$ also converges to a stationary point in the CVB lower bound.
\end{theorem}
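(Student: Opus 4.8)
The plan is to recognize that Theorem~\ref{theorem:acvb:convergence} is, at its core, the statement that the Cesàro average of a convergent sequence converges to the same limit. I would write the hypothesis as $q^{(s)} \to q^{\star}$ as $s\to\infty$, where $q^{\star}$ is the assumed stationary point of the CVB lower bound, and recall from \myeqref{eq:ACVB} that $\bar q^{(S)} = \frac{1}{S}\sum_{s=1}^{S} q^{(s)}$. The goal is then to show $\bar q^{(S)} \to q^{\star}$; since the limit of the averaged posteriors is literally the same point $q^{\star}$, and $q^{\star}$ is a stationary point by assumption, the conclusion follows immediately once convergence of the average is established.

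First I would fix an arbitrary component $i$ (each $q^{(s)}$ is a finite-dimensional probability vector, so it suffices to argue componentwise) and an arbitrary $\epsilon>0$. By the hypothesis, choose an index $M$ such that $|q_i^{(s)} - q_i^{\star}| < \epsilon/2$ for all $s>M$. Then I would split the averaging sum at $M$:
\begin{align*}
\left| \frac{1}{S}\sum_{s=1}^{S} q_i^{(s)} - q_i^{\star} \right|
\le \frac{1}{S}\sum_{s=1}^{M} \left| q_i^{(s)} - q_i^{\star} \right|
+ \frac{1}{S}\sum_{s=M+1}^{S} \left| q_i^{(s)} - q_i^{\star} \right| .
\end{align*}
The second term is bounded by $\epsilon/2$ by the choice of $M$. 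The first term has a fixed numerator (a finite sum not depending on $S$) divided by $S$, so it tends to $0$; hence it lies below $\epsilon/2$ for all $S$ larger than some $S_1$. For $S > S_1$ the whole expression is below $\epsilon$, giving $\bar q_i^{(S)} \to q_i^{\star}$. Boundedness of each $q_i^{(s)}$ in $[0,1]$ keeps every term finite and makes the tail estimate uniform across components, so the componentwise limits assemble into $\bar q^{(S)} \to q^{\star}$.

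Since $\bar q^{(S)}$ converges to $q^{\star}$, the very point to which $q^{(s)}$ converges, and $q^{\star}$ is a stationary point of the CVB lower bound by assumption, the averaged posterior converges to a stationary point, which is the assertion. I do not expect a genuine technical obstacle here: the argument is exactly the classical Cesàro-mean theorem, and the only thing to be careful about is conceptual rather than computational, namely that the averaging operation in \myeqref{eq:ACVB} does not displace the limit, so that the property of ``converging to a stationary point'' is inherited verbatim from $q^{(s)}$ by $\bar q^{(s)}$ rather than merely to some nearby point. The boundedness of the probability vectors is what removes any concern that the early-index partial sum could spoil the estimate.
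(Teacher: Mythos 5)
Your proof is correct and is essentially the same argument as the paper's: both are the classical Ces\`{a}ro-mean theorem, splitting the average at an index beyond which $|q^{(s)}-q^{\star}|<\epsilon/2$, bounding the tail by $\epsilon/2$, and observing that the fixed head contribution divided by $S$ vanishes. The only cosmetic difference is that you argue componentwise and bound the head by a sum of absolute values, whereas the paper keeps the head as a single quantity $M=\bigl|\sum_{s=1}^{s_0}(q^{(s)}-q^{\star})\bigr|$; these are interchangeable.
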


\begin{proof}
Let $q^{*}$ be a stationary point in the CVB lower bound.
By using this assumption,
\begin{align*}
\lim_{s \rightarrow \infty} q^{(s)}=q^{*} \Leftrightarrow \forall \epsilon > 0, \exists s_0~\text{s.t.}~\forall s>s_0 \Rightarrow |q^{(s)}-q^{*}|<\epsilon/2.
\end{align*}
Here, we define
\begin{align*}
\left|\sum_{s=1}^{s_0} (q^{(s)}-q^{*})\right|=M>0,
\end{align*}
and thus,
\begin{align*}
\lim_{s\rightarrow\infty}\frac{M}{s}=0 \Leftrightarrow \forall \epsilon > 0, \exists s_0'~\text{s.t.}~\forall s>s_0' \Rightarrow \frac{M}{s}<\epsilon/2.
\end{align*}

When $S_0=\max \{s_0,s_0'\}$,  we have
\begin{align*}
 \forall S > S_0, |\bar q^{(S)}-q^{*}|&=\left|
 \sum_{s=1}^{S}\frac{1}{S}(q^{(s)}-q^{*})
 \right| \\
 &< \frac{M}{S}+\sum_{s=S_0+1}^{S} 
 \left| \frac{1}{S}(q^{(s)}-q^{*}) \right| 
 \\
&\leq \epsilon/2+\left|\frac{S-S_0}{S}\right|\epsilon/2 
 \leq \epsilon/2+\epsilon/2=\epsilon.
\end{align*}
Therefore,
\begin{align*}
\lim_{s \rightarrow \infty} \bar q^{(s)}=q^{*}.
\end{align*}
\end{proof}
We want to stress that it remains unknown in the literature as to whether 
CVB inference has a stationary point. 
However, we can still safely use ACVB because it assures convergence of the 
inference process and ACVB will find the "true" solution if CVB has a stationary point. 
Such solutions for the convergence of CVB have been never studied, to the best of our knowledge. 

Hereafter, we denote the (naive) CVB solution and the CVB0 solution, both with ACVB, as the \textbf{ACVB} solution and the \textbf{ACVB0} solution, respectively.

\section{Speeding up CVB inferences of IRM}
Convergence assurance by ACVB is beneficial for users because it enables 
automatic and easy detection of inference convergence. 
However, the computational speed is also an important factor for practical uses. 
In this section, we introduce two possible speed-up techniques for IRM-(A)CVB solutions. 

\subsection{Cluster shrinkage}
One drawback of CVB (and VB) inference is the computational cost per iteration. 
The Gibbs sampler dynamically shrinks and expands the cardinality of hidden clusters during inference.  
Thus, the computational cost of the Gibbs sampler is a function of the complexity of the hidden clusters. 
However, (C)VB solutions maintain all $K$ clusters throughout the inference. Thus, there is no shrinkage of clusters in CVB, 
which results in heavy computational costs, larger than the
intrinsic complexity of the data. 

A simple solution is to ignore small clusters. 
As described in~\citep{Kurihara07}, (C)VB inferences need to 
maintain the order of clusters so that the memberships of clusters are
aligned in descending order for better performance, in accordance with
the stick-breaking process. 
Therefore, it is easy to implement a heuristic that 
i) excludes small clusters from variational posterior updates, 
and 
ii) avoids evaluating the contributions of small clusters in the updates. 

For example, we can evaluate how small the cluster $k$ is by 
$\frac{m_{k}}{\sum_{l} m_{l}}$. 
In practice, the small clusters become truly negligible in the sense of
memberships: for example, $\frac{m_{k}}{\sum_{l} m_{l}} \sim 1.0 \times 10^{-5}$ or smaller. 
Setting a very small value for the threshold, 
the size of {\it effective} clusters automatically shrinks to the
size of the intrinsic data complexity. 
Then, the computational cost is reduced proportional to the size of
effective clusters. This dramatically speeds up the inference while
barely harming the inference performance. 
In all experiments, we implemented this heuristic to eliminate unnecessary
computational costs for VB, CVB and CVB0 solutions. 

\subsection{Linear time inference for (A)CVB0}
A naive implementation of IRM-CVB inference requires 
$O\left( N_{1} N_{2} K_{1} K_{2} \right)$ for 
one full sweep of hidden variable $\bm{Z}$. 
We can see this instantly from 
the expectations $\mathbb{E}$ and variances $\mathbb{V}$ of 
$n_{k,l}^{\setminus (1,i)}$ and $N_{k,l}^{\setminus (1,i)}$ in 
Eqs. (\ref{eq:IRM_2dom_CVB_nN_1kl_approx_plus}, \ref{eq:IRM_2dom_CVB_nN_1kl_approx_var_plus}). 
For update of $q\left(z_{1,i,k}\right)$ on specific $i$ and $k$, 
we need to evaluate these expectations and evaluations for $K_{2}$ times. 
This requires $O\left( N_{2} K_{2} \right)$ computations; 
thus the full sweep for $N_{1}$ objects on $K_{1}$ clusters requires 
$O\left( N_{1} N_{2} K_{1} K_{2} \right)$.  
The same holds for $q(\bm{Z}_{2})$. 

This prohibits applying IRM to larger data. 
However, 
%for CVB0 and Collapsed Gibbs sampler, 
for the (A)CVB0 solution, 
we can reduce to
%the computational cost to 
$O\left( L (N_{1} + N_{2}) K_{1} K_{2} \right)$ where $L$ denotes the average degree ``1'' links of objects, 
without any approximation. 
This is remarkable: we can solve IRM linear to the number of objects. 
Further, many real-world relational data are very sparse: $L$ is small. 
This makes (A)CVB0 even more efficient. 
This is almost just an implementational issue, but we believe it is very beneficial for 
readers that are interested in IRM for the first time. 

To obtain this, we rewrite the variational expectation terms in 
\myeqref{eq:IRM_2dom_CVB_nN_1kl_approx_plus} in the following way: 
\begin{alignat}{2}
\mathbb{E}[n_{k, l}^{+(1,i,k)}] 
 &= 
 \sum_{j=1}^{N_{2}}  
 q\left(z_{2,j,l}\right) x_{i,j} 
 = 
 \sum_{j \in J^{+}} q\left(z_{2,j,l}\right) \, , 
 \label{eq:IRM_2dom_CVB_LinearTime_n}
 \\
 \mathbb{E}[N_{k, l}^{+(1,i,k)}] 
  &= 
  \sum_{j=1}^{N_{2}}  
  q\left(z_{2,j,l}\right) \left( 1 - x_{i,j} \right)  
  = 
  \sum_{j \in J^{-}} q\left(z_{2,j,l}\right) \, .
  \label{eq:IRM_2dom_CVB_LinearTime_N}
\end{alignat}
where $J^{+} = \left\{ j: x_{i,j} = 1 \right\}$ 
and $J^{+} = \left\{ j: x_{i,j} = 0 \right\}$.  
It is evident that $J^{+} \cup J^{-} = \{1, 2, \dots, N_{2}\}$. 
The key observation is: 
\begin{equation}
\mathbb{E}[n_{k, l}^{+(1,i,k)}] 
+
\mathbb{E}[N_{k, l}^{+(1,i,k)}] 
=
\sum_{j \in J^{+}} q\left(z_{2,j,l}\right)
+
\sum_{j \in J^{-}} q\left(z_{2,j,l}\right)
=
\sum_{j=1}^{N_{2}} q\left(z_{2,j,l}\right)
=
\mathbb{E}[m_{2, l}] \, .
\label{eq:IRM_2dom_CVB_LinearTime_m}
\end{equation}
The right-most term is a membership count of clusters in the second domain, 
defined in \myeqref{eq:IRM_2dom_CVB_m_1k_2l_E}, 
which can be cached during inference. 
To compute \myeqref{eq:IRM_2dom_CVB_LinearTime_n}, we only need to 
take $L$ (the average degree) computations,  
which is much smaller than $N_{2}$ for many real-world relational data. 
Combined with \myeqref{eq:IRM_2dom_CVB_LinearTime_m}, 
we can evaluate \myeqref{eq:IRM_2dom_CVB_nN_1kl_approx_plus},  
namely Eqs. (\ref{eq:IRM_2dom_CVB_LinearTime_n}, \ref{eq:IRM_2dom_CVB_LinearTime_N}) 
by $O\left(L\right)$, instead of $N_{2}$. 
Thus, one full sweep of $q(\bm{Z})$ computation is reduced to 
$O\left(L (N_{1} + N_{2}) K_{1} K_{2}\right)$. 

This linear-time inference algorithm has two limitations. 
One is that this does not allow missing entries within $\bm{X}$. 
More precisely, we can run the algorithm, but the resulting inference 
is not accurate. 
This is because the above linear-time inference cannot correctly evaluate the existence of missing data. 
If we encounter the relational data with missing entries, we need some preprocessing 
to impute the missing entries. 
Another limitation is that this algorithm is not applicable to the CVB solution: 
the posterior variation $\mathbb{V}[n]$ requires square terms of $q(z)$, 
for which we cannot use the trick of \myeqref{eq:IRM_2dom_CVB_LinearTime_m}. 

In the case of collapsed Gibbs, we can implement it in a similar way 
by replacing $q(z)$ with $\mathbb{I}(z)$ on the current sample of $\bm{Z}$. 
For VB, the update algorithm is very different from the others, but 
in general, VB allows massive parallelization on the updates of $q(z)$. 

\citep{Hansen11,Albers13}, which employed collapsed Gibbs, 
presented a different representation to avoid direct computations of negative counts ($N$ in this paper). 
However, they did not analyze the order of linear inference computations, 
nor did not emphasize the usefulness of the sparsity. 

%%%%%%%%%%%%%
% Experiments
%%%%%%%%%%%%%
\section{Experiments}
In this section, we present the experimental validations. 
In summary, we confirmed the following facts. 
\begin{enumerate}
\item ACVB inferences achieved better modeling performances than the naive VB in large relational data sets. No significant differences in smaller relational data. 
\item The ACVB0 solution is the fastest among deterministic inferences. 
\item ACVB0 with linear time computation scales very well agasint large relational data. 
\end{enumerate}

\subsection{Procedure}
We compare the performance of 
proposed Averaged CVB solutions (\textbf{ACVB}, \textbf{ACVB0}) 
with a naive variational Bayes (\textbf{VB}), 
which is a baseline deterministic inference. 
As a reference, we also include comparisons with the collapsed Gibbs samplers (\textbf{Gibbs}) with 
very small number of iterations. 

Initializations and hyperparameter choices are important for 
fair comparisons of inference methods. 
We employ hyperparameter updates for 
all solutions: fixed point iterations for VB, ACVB, and ACVB0 and 
hyper-prior sampling for Gibbs. 
We test several initial hyperparameter values, and 
report the results computed on the best hyperparameter setting. 
All hidden variables are initialized in a completely random manner: 
we use the uniform distribution to assign soft values of $p(z_{i} =k)$.  
In the case of Gibbs, we perform hard assignments of $z_{i} = k$ to the 
most weighted cluster. 
For VB, ACVB, and ACVB0 solutions, we normalized the assigned weights 
sum to one over clusters. 
All solutions without Gibbs 
require the number of truncated clusters a priori. 
To assess the effect of truncation level, the experiments examined 
$K_{1} = K_{2} = K \in \{20, 40, 60\}$. 
In practice, we just need to prepare a sufficient number of $K$ 
to explain data complexity. 

% marginal?
Data modeling performance is evaluated by averaged 
test data marginal log likelihood. 
Given a relational data matrix, 
we exclude a part of the relational observations (roughly 
$10\%$ of matrix entries) from the inference as held-out test data. 
After the inference is finished, we compute marginal log likelihoods 
of these test data. 
The number of test data, and chosen data entries are randomized for
every evaluation run. Thus, we average the log likelihoods by the number
of test data entries. 
A per-test data entry log likelihood is computed for 20 runs with
different initializations and hyperparameter settings. 

To compare the inference solutions in terms of the computational cost, 
we also monitored the convergence behaviors and the computational time
of the solutions. 
For the VB solution, we monitored the VB lower bound.
For ACVB and ACVB0 solutions, we use the annealed posteriors.  
We determined solutions' convergence by the relative changes of the monitored quantity: 
if the changes were smaller than 
$0.001 \%$ 
of the current value of the quantity, we assumed that the
algorithm had converged. 
As stated in the speeding-up section, 
we employed a cluster shrinkage technique for VB, ACVB and ACVB0. 
We did not utilize the linear-time inference because of the existence of test data that must be kept missing during inferences.  
%We want to evaluate the test data log likelihood precisely for comparison of inference solutions. 
%we must keep the test data entries "missing" during the inference, 
%and no imputations are allowed to conduct linear time inference. 

For the reference Gibbs sampler, we iterated the sampling procedure 
$3,000$ times; the first $1,500$ iterations were discarded as the burn-in
period. As repeated explained, 
the collapsed Gibbs would require tremendous amount of iterations (some millions) 
to obtain better modeling results~\citep{Albers13}. 
However, we can not afford such computational resources for the collapsed Gibbs, 
which has no easy way to detect convergence, not preferable for practitioners. 

\subsection{Datasets}
We prepared several synthetic and real-world datasets for the experiments; 
they allow us to assess the inferences in several scales and densities.  

%%%
% synthetic data
%%%
We generated two synthetic relation datasets. 
The size and true numbers of clusters of these datasets were: 
$N_{1} = 100, N_{2} = 200, K_{1} = 4, K_{2} = 5$ (\textbf{synth 1}), 
and 
$N_{1} = 1000, N_{2} = 1500, K_{1} = 7, K_{2} = 6$ (\textbf{synth 2}). 

%%%
% real data
%%%

% Enron 151 by 151
The first real-world relational dataset is 
the \textbf{Enron} e-mail dataset~\citep{Klimt_Yang04}. 
This is a famous relational dataset 
used in many studies~\citep{Tang08,Fu09,NIPS10,AISTATS12}. 
We extracted monthly e-mail transactions for 2001. 
The dataset contained $N=N_{1}=N_{2}=151$ company members of Enron. 
$x_{i,j} = 1 (0)$ if there is (not) an e-mail sent from 
member $i$ to member $j$. 
Out of twelve months, we selected the transactions of 
June (\textbf{Enron Jun.}), 
August (\textbf{Enron Aug.}), 
October (\textbf{Enron Oct.}), 
and December (\textbf{Enron Dec.}). 

% Last FM, User by User 1892 by 1892
The second real-world relational dataset is 
the \textbf{Lastfm} 
dataset.\footnote{provided by HetRec2011. http://ir.ii.uam.es/hetrec2011/} 
This dataset contains several records for the Last.fm music service, 
including lists of users' most listened-to musicians, 
tag assignments for artists, and friend relations between users. 
We employ the friend relations between $N=N_{1}=N_{2}=1892$ users (\textbf{Lastfm UserXUser}). 
$x_{i,j} = 1 (0)$ if there is (not) a friend relation from 
a user $i$ to a user $j$. 
This dataset is 10 times larger than the \textbf{Enron} dataset in the
number of objects, and 100 times larger in the number of matrix
entries. 
% Last FM, Artist by Tags,  6099 by 1088
We also employ the artist-tag relations between $17,632$ artists and 
$11,946$ tags. 
Since the relation matrix is too large for inference of Gibbs and naive
VB, we truncate the number of artists and tags. 
The original observations are the co-occurrence counts of (artist name,
tag) pairs. 
We binarize the observations as to whether the (artist name, tag) pair counts
is greater than 1 or not: that is,  we ignore one single occasional
co-occurrence of (artist name, tag). 
If the counts are greater than 1, then the observation entries are set
to 1; otherwise, set to 0. 
Then, all rows (artists) and columns (tags) that have no ``1'' entries
are removed. 
The resulting binary matrix consists of 
$N_{1} = 6099$ artists and $N_{2} = 1088$ tags 
(\textbf{Lastfm ArtistXTag}). 
$x_{i,j} = 1 (0)$ if the artist $i$ is (not) attached with 
the tag word $j$ more than once. 

The data sizes and densities are summarized in 
\mytabref{tab:datasize}. 
The data sizes are rather small compared to CVB research in LDA~\citep{Asuncion09,Sato_Nakagawa12,Sato12}. 
One reason is that IRM deals with two hidden variables ($z_{1,i}, z_{2,j}$) for 
one observation ($x_{i,j}$), while LDA requires one hidden variable for one observation (word). 
This makes the inference difficult and hinders the scale up of the problem. 
In fact, existing IRM studies work in a somewhat similar volume of datasets~\citep{Kemp06,NIPS10,AISTATS12}. 
Also, we cannot implement linear time inference when we have missing data (test data) for 
model evaluations. 

\begin{table}[t]
 \caption{Data sizes used in our experiments. }
 \label{tab:datasize}
 \begin{center}  
  \begin{tabularx}{110mm}{X||c|c|c|c}
   Dataset & $N_{1}$ & $N_{2}$ & \# of ``1'' entries & Density \\
   \hline \hline
   synth1 & 100 & 200 & 7608 & 38.0\% \\
   synth2 & 1000 & 1500 & 578802 & 38.6\% \\ 
   \hline
   Enron Jun. & 151 & 151 & 257 & 1.13\% \\
   Enron Aug. & 151 & 151 & 439 & 1.93\% \\
   Enron Oct. & 151 & 151 & 707 & 3.10\% \\
   Enron Dec. & 151 & 151 & 377 & 1.66\% \\
   \hline
   Lastfm UserXUser & 1892 & 1892 & 21512 & 0.60\% \\
   Lastfm ArtistXTag & 6099 & 1088 & 23253 & 0.35\% 
  \end{tabularx}
 \end{center}
\end{table}

\subsection{Results}

\subsubsection{Numerical Performance}
The modeling performances of the solutions are presented in 
\mytabref{tab:results_loglk_K20} ($K = 20$), 
\mytabref{tab:results_loglk_K40} ($K = 40$), 
and \mytabref{tab:results_loglk_K60} ($K = 60$).  
They show the averages of 
test data marginal log likelihood after convergence. 
Results of the best setup are presented for each solution. 
% statistic tests 
In addition, we conducted statistical significance tests using 
$t$-tests. 
%Based on the tables and the $t$-tests, 
%we found that the ACVB0 solution is significantly better 
%(in $p < 0.05$) than other methods in most datasets and $K$ choices. 
%In some cases, the naive ACVB showed the best results, 
%and collapsed Gibbs was the best only on \textbf{Lastfm ArtisrtXTag} at $K=60$. 

\begin{table}[t]
\caption{Marginal test data log likelihood per test data
 entry ($K=20$, $10\%$ test data). Parenthesized numbers indicate standard deviations. 
 Larger values are better. 
 Boldfaces indicate the best method, which is significantly better 
 than the method(s) marked with $*$ (by $t$-test, $p=0.05$). 
 }
\label{tab:results_loglk_K20}
 \begin{center}
  \begin{tabularx}{155mm}{X||c|c|c|c}
   Dataset & Gibbs & VB & ACVB & ACVB0 \\ \hline \hline
   Synth1 & -0.3696* (0.0282) & \textbf{-0.3260} (0.0155) & -0.3337 (0.0120) & -0.3372* (0.0163)\\ \hline 
   Synth2 & -0.3721* (0.0191) & -0.3737* (0.0090) & -0.3348* (0.0107)  & \textbf{-0.3261} (0.0016)\\ \hline 
   Enron Jun. & -0.0585 (0.0107) & -0.0547 (0.0087) & -0.0559 (0.0086)  & \textbf{-0.0540} (0.0064)\\ \hline 
   Enron Aug. & -0.0830* (0.0109) & -0.0789 (0.0076) & -0.0766 (0.0095)  & \textbf{-0.0763} (0.0081)\\ \hline 
   Enron Oct. & -0.1268* (0.0103) & -0.1164* (0.0091) & -0.1098 (0.0095)  & \textbf{-0.1099} (0.0107)\\ \hline 
   Enron Dec. & -0.0740 (0.0119) & -0.0693 (0.0068) & -0.0686 (0.0107)  & \textbf{-0.0685} (0.0088)\\ \hline
   Lastfm (UserXUser) & -0.0283* (0.0006) & -0.0287* (0.0005) & -0.0271* (0.0005) & \textbf{-0.0267} (0.0005)\\ \hline 
   Lastfm (ArtistXTag) & -0.0160* (0.0003) & -0.0165* (0.0003) & -0.0161* (0.0002) & \textbf{-0.0158} (0.0003) 
  \end{tabularx}
 \end{center}
\end{table}

\begin{table}[t]
\caption{Marginal test data log likelihood per test data
 entry ($K=40$, $10\%$ test data). Parenthesized numbers indicate standard deviations. 
 Larger values are better. 
 Boldfaces indicate the best method which is significantly better 
 than the method(s) marked with $*$ (by $t$-test, $p=0.05$). 
 }
\label{tab:results_loglk_K40}
 \begin{center}
  \begin{tabularx}{155mm}{X||c|c|c|c}
   Dataset & Gibbs & VB & ACVB & ACVB0 \\ \hline \hline
   Synth1 & -0.3657* (0.0224) & \textbf{-0.3246} (0.0141) & -0.3430* (0.0197) & -0.3380* (0.0146) \\ \hline 
   Synth2 & -0.3712* (0.0122) & -0.3743 (0.0108)* & -0.3285* (0.0050) & \textbf{-0.3254} (0.0015) \\ \hline 
   Enron Jun. & -0.0595* (0.0110) & -0.0541 (0.0107) & \textbf{-0.0531} (0.0068) & -0.0558 (0.0070) \\ \hline 
   Enron Aug. & -0.0838* (0.0088) & -0.0795 (0.0088) & -0.0770 (0.0084) & \textbf{-0.0766} (0.0072) \\ \hline 
   Enron Oct. & -0.1256 (0.0111) & -0.1143 (0.0112) & \textbf{-0.1141} (0.0125) & -0.1145 (0.0115) \\ \hline 
   Enron Dec. & -0.0750* (0.0095) & \textbf{-0.0672} (0.0062) & -0.0688 (0.0099) & -0.0678 (0.0101) \\ \hline
   Lastfm (UserXUser) & -0.0280 (0.0008)* & -0.0289* (0.0004) & -0.0272* (0.0005) & \textbf{-0.0267} (0.0004) \\ \hline 
   Lastfm (ArtistXTag) & \textbf{-0.0161} (0.0003) & -0.0167* (0.0004) & -0.0162 (0.0003) & -0.0162 (0.0003) 
  \end{tabularx}
 \end{center}
\end{table}

\begin{table}[t]
\caption{Marginal test data log likelihood per test data
 entry ($K=60$, $10\%$ test data). Parenthesized numbers indicate standard deviations. 
 Larger values are better. 
 Boldfaces indicate the best method, which is significantly better 
 than the method(s) marked with $*$ (by $t$-test, $p=0.05$). 
 }
\label{tab:results_loglk_K60}
 \begin{center}
  \begin{tabularx}{155mm}{X||c|c|c|c}
   Dataset & Gibbs & VB & ACVB & ACVB0 \\ \hline \hline
   Synth1 & -0.3668* (0.0155) & \textbf{-0.3281} (0.0127) & -0.3379* (0.0154)  & -0.3452* (0.0113) \\ \hline 
   Synth2 & -0.3624* (0.0190) & -0.3736* (0.0090)  & -0.3261 (0.0015)  & \textbf{-0.3258} (0.0015) \\ \hline 
   Enron Jun. & -0.0573 (0.0087) & -0.0569 (0.0069)  & -0.0572 (0.0103)  & \textbf{-0.0563} (0.0064) \\ \hline 
   Enron Aug. & -0.0862* (0.0102) & -0.0772 (0.0098)  & -0.0781 (0.0107)  & \textbf{-0.0754} (0.0105) \\ \hline 
   Enron Oct. & -0.1281 (0.0148) & -0.1162 (0.0100)  & -0.1145 (0.0115)  & \textbf{-0.1139} (0.0096) \\ \hline 
   Enron Dec. & -0.0794* (0.0120) & \textbf{-0.0682} (0.0098)  & -0.0686 (0.0109)  & -0.0690 (0.0124) \\ \hline
   Lastfm (UserXUser) & -0.0283 (0.0006)* & -0.0287* (0.0005)  & -0.0272 (0.0006)*  & \textbf{-0.0267} (0.0006) \\ \hline 
   Lastfm (ArtistXTag) & \textbf{-0.0160} (0.0003) & -0.0167* (0.0003)  & -0.0163* (0.0002)  & -0.0163* (0.0003) 
  \end{tabularx}
 \end{center}
\end{table}

%%% Discussion of the results
These results reveal characteristics of the solutions in a few aspects. 

First, ACVB inferences are significantly better than VB for larger datasets: 
synth2, and two Lastfm datasets. 
Especially, we confirmed that ACVB0 always performed significantly better than VB, 
and often recorded significantly better results than ACVB for those datasets. 
This indicates that in potential ACVB inferences are superior to the naive VB inference as expected. 

Second, we found no advantages of ACVB inferences over VB for smaller datasets: synth1 and Enron datasets. 
Specifically, the VB performed significantly better than ACVB solutions in synth1 data. 
The data is an artificial, dense and small cross-domain relation data. 
In such cases, the VB still may finds good estimations of the true parameters $\theta$. 
If so, VB may obtain better test data log likelihood since ACVB marginalizes out 
all possibilities of the parameters, including "bad" estimations. 
Anyway, the synth1 data set is very small and dense. 
In general, we don't face such data in our practical data analysis thus 
the results on larger and sparser data cases are more informative for practical uses. 

Third, the $3,000$ iterations of collapsed Gibbs sampler did not work well as we expected. 
Interestingly, in the case of Lastfm ArtixtXTag with $K=60$, the Gibbs sampler performs 
significantly better than others. 
To explain this, we focus on the fact that the ACVB0 with $K=20$ is significantly better 
than the Gibbs with $K=60$. It indicates that the data has much smaller complexity than we expected. 
With greater $K$, the (C)VB inference algorithms may trapped at bad local optimum. 
Contrary, the collapsed Gibbs sampler obtained stable but not good solutions regardless of initial $K$. 
As reported in \citep{Albers13}, the collapsed Gibbs for IRM would require millions of iterations to 
obtain better results. 
Thus it is perfectly possible that the collapsed Gibbs outperforms all 
VB-based techniques provided the sophisticated sampling techniques and much more iterations. 

Figure \ref{fig:result_assignment_synth2} and \ref{fig:result_assignment_lastfmUserXUser} present 
examples of obtained clustering for \textbf{Synth2} and \textbf{Lastfm UserXUser} data in $K=60$. 
All object indices in the cites are sorted so that the objects are grouped into blocks in the cites. 
Horizontal and vertical color lines indicates the borderlines of object clusters for the first domain $i$ and the second domain $j$, 
respectively. 
We show the MAP assignments: we assign an object into the cluster with the highest posterior probability. 
%For relational data with $N > 1000$ objects, it is difficult to instantly derive some knowledge from these visualizations, or to argue differences between solutions. 

\begin{figure*}
\begin{center}
\includegraphics[width=140mm]{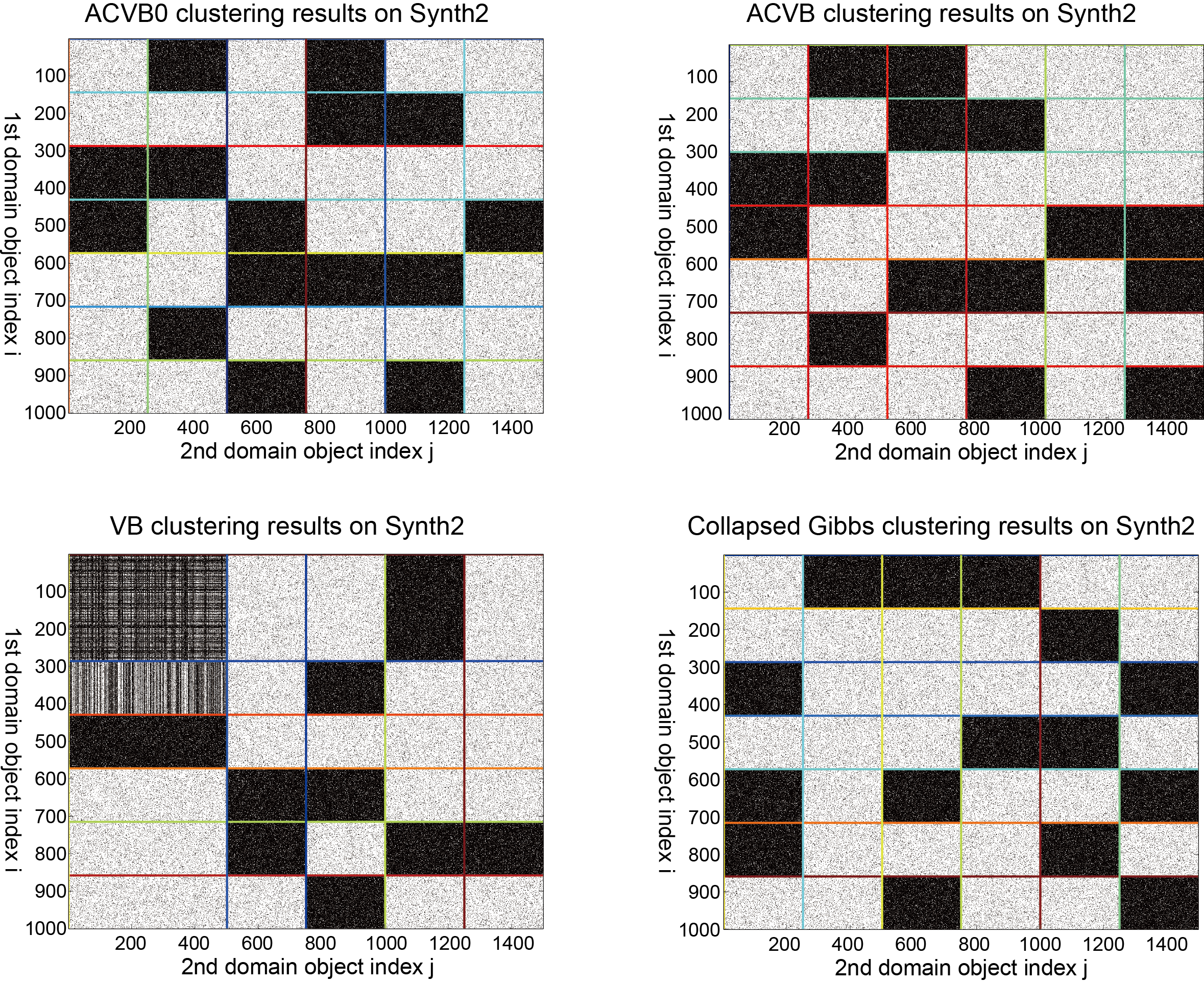}
\end{center}
\caption{MAP clustering assignments of \textbf{Synth2} dataset. All object indices are sorted. }
\label{fig:result_assignment_synth2}
\end{figure*}
\begin{figure*}
\begin{center}
\includegraphics[width=140mm]{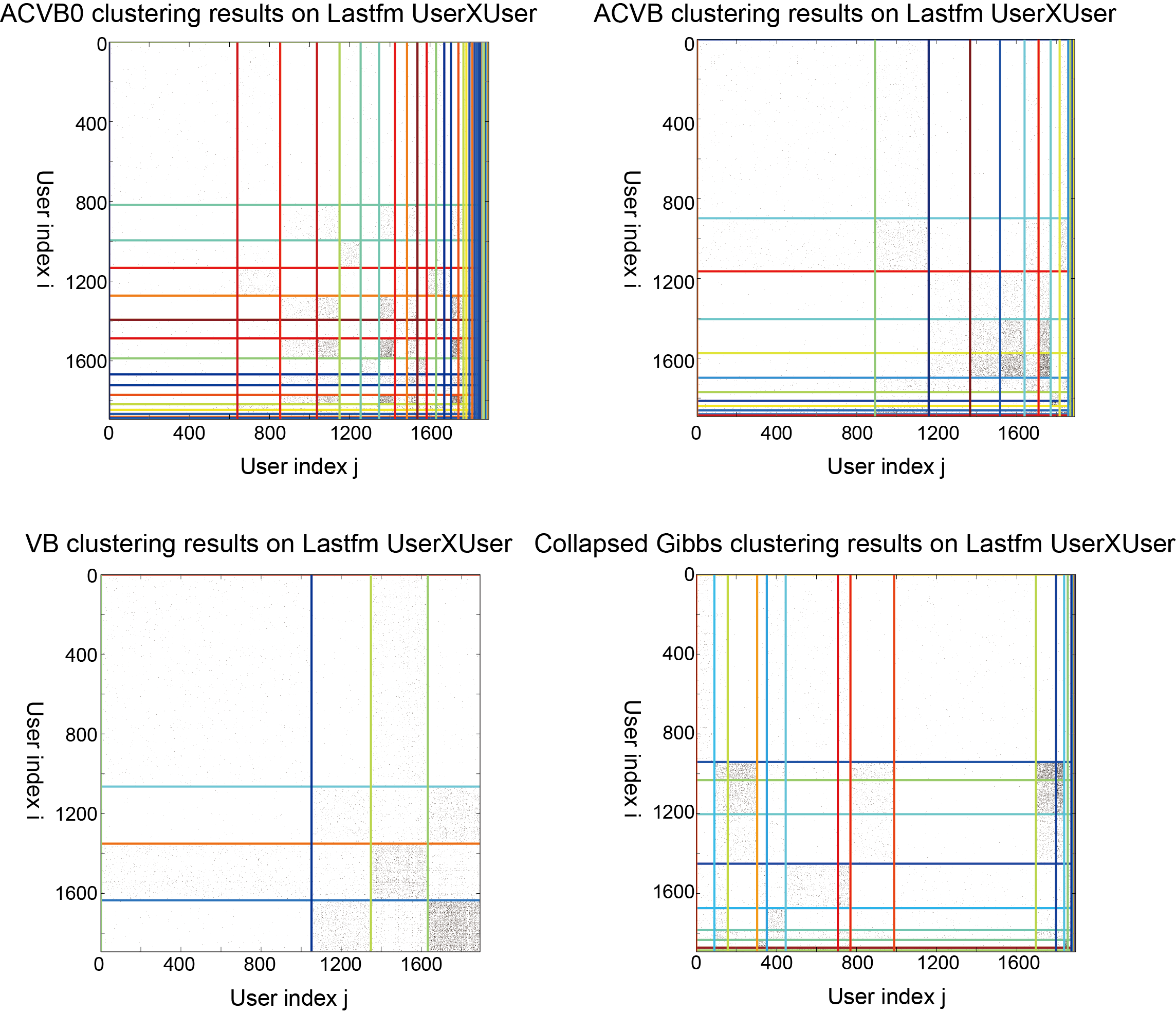}
\end{center}
\caption{MAP clustering assignments of \textbf{Lastfm UserXUser} dataset. All object indices are sorted. }
\label{fig:result_assignment_lastfmUserXUser}
\end{figure*}

%We will discuss later 
%the comparisons between ACVB solutions and the collapsed Gibbs.  

\subsubsection{Computational load and convergence behaviors}
To assess the computational loads of four solutions, 
we have monitored CPU times for convergence. 

First, we report the overall trends in convergence CPU times based on 
the average convergence time presented in 
\mytabref{tab:results_CPUtime_K20} ($K=20$), 
\mytabref{tab:results_CPUtime_K40} ($K=40$), and
\mytabref{tab:results_CPUtime_K60} ($K=60$). 
Aside from the collapsed Gibbs, which has no no definite way to detect convergence of the inference, 
ACVB0 was magnitude-faster than the naive VB and the ACVB (2nd order) for almost all datasets. 
There are several possible reasons. 
First, the update equations of (A)CVB0 is much simpler than that of 2nd-order ACVB. 
Second, ACVB inference has fewer unknown variables to estimate than the VB. 
Third, count statistics maintained in ACVB0 are able to efficiently cache and compute thanks to their simplicity. 
Fourth, the landscape of ACVB0 posteriors may have smoother charactersitics than those of the ACVB and the VB. 
Concerning the ACVB and the VB, the VB was faster when truncated $K$ is small. 
Also, the VB was faster than ACVB for dense synthetic data. In other cases, the ACVB was faster than the VB. 

We also need to note that 
%We will show a part of these monitoring results, but 
%we found that it is difficult to be conclusive about computational time. 
%This is because 
the CPU times for convergence are deeply affected by the convergence threshold. 
In our experiments, we choose the threshold of 
$1.0 \times 10^{-5}$ relative changes of the monitored quantities for 
VB, ACVB and ACVB0 solutions. 
If we change the threshold to $1.0 \times 10^{-4}$, 
convergence times of these solutions become 10 times or more faster. 
%making it difficult to be conclusive about computational loads. 
%Also, collapsed Gibbs samplers have no definite way to detect convergence of the inference. 
%We will obtain better results as we continue MCMC samplings much longer. 
%

Finally, we show a few plots of test data likelihood evolutions over CPU times. 
Figure \ref{fig:results_enron06_K40_vsTime}, 
\ref{fig:results_lastfmUserXUser_K20_vsTime}, 
\ref{fig:results_lastfmArtistXTag_K20_vsTime}, and 
\ref{fig:results_lastfmArtistXTag_K60_vsTime}
respectively illustrate the time evolution of test data likelihood versus CPU time on different datasets. 
End points of the plots indicate the average convergence time of 
(AC)VBs, or 3,000 iterations of collapsed Gibbs sampler. 
For all cases, we observe fast convergence evolutions of ACVB0. 
%The status of plots is very different among cites, but there are three common points to note. 
%\begin{enumerate}
%\item The VB solution converges very fast for larger datasets, at the expense of worse test data likelihood. We can even speed up VB inference because it allows massive parallelization of computations. 
%\item The collapsed Gibbs samplers keep oscillating near the final solution, even in the later part of the plots. 
%\item ACVB0 first rapidly improves the modeling, then slows down to the final convergence. 
%\end{enumerate}

%Combined wi th the automatic convergence detection, 
%we can say that ACVB0 has a good inference performance with very fast computations, 
%which is preferable for practical uses. 

From these experimental results, 
we conclude that ACVB0 solutions are good for practitioners 
who require good enough clustering results (possibly not a global
optimum) with very fast computations and assured convergence. 

\begin{table}[t]
\caption{Average CPU time for convergence of the solutions 
($K=20$, $10\%$ test data). 
 Parenthesized numbers indicate standard deviations. 
 Computational time for convergence detection is excluded. 
 }
\label{tab:results_CPUtime_K20}
 \begin{center}
  \begin{tabularx}{148mm}{X||c|c|c|c}
   Dataset & Gibbs & VB & CVB & CVB0 \\ \hline \hline
   Synth1 & 72.90 (6.617) & 1.750 (1.894) & 33.45 (23.10)  & 1.550 (1.071) \\ \hline 
   Synth2 & 3593.45 (22.17) & 42.10 (4.571) & 153.3 (69.34)  & 13.30 (11.20) \\ \hline 
   Enron Jun. & 62.70 (17.33) & 48.95 (37.14) & 159.7 (92.97) & 3.400 (2.130) \\ \hline 
   Enron Aug. & 66.85 (5.480) & 26.25 (32.64) & 39.55 (5.408) & 1.250 (0.433) \\ \hline 
   Enron Oct. & 94.50 (7.710) & 80.15 (38.70) & 113.4 (54.49) & 3.400 (2.437) \\ \hline 
   Enron Dec. & 46.10 (1.136) & 70.70 (43.65) & 58.80 (26.45) & 3.800 (2.482) \\ \hline
   Lastfm (User x User) & 15238 (811.4) & 4627 (8006) & 17668 (10427) & 400.5 (213.4) \\ \hline 
   Lastfm (Artist X Tag) & 35074 (1975) & 11237 (12183) & 73186 (44579) & 1024 (358.9) 
  \end{tabularx}
 \end{center}
\end{table}

\begin{table}[t]
\caption{Average CPU time for convergence of the solutions 
($K=40$, $10\%$ test data). 
 Parenthesized numbers indicate standard deviations. 
 Computational time for convergence detection is excluded. 
 }
\label{tab:results_CPUtime_K40}
 \begin{center}
  \begin{tabularx}{148mm}{X||c|c|c|c}
   Dataset & Gibbs & VB & CVB & CVB0 \\ \hline \hline
   Synth1 & 48.45 (1.396) & 3.750 (2.118) & 74.60 (32.67) & 2.700 (1.269) \\ \hline 
   Synth2 & 3970 (58.21) & 140.2 (12.51) & 474.0 (111.5) & 9.450 (9.227) \\ \hline 
   Enron Jun. & 73.50 (7.533) & 75.15 (69.41) & 39.40 (8.581) & 1.700 (0.557) \\ \hline 
   Enron Aug. & 66.20 (3.789) & 85.45 (92.13) & 30.05 (5.172) & 2.200 (0.400) \\ \hline 
   Enron Oct. & 69.05 (4.685) & 134.2 (79.49) & 36.60 (7.158) & 4.200 (1.470) \\ \hline 
   Enron Dec. & 46.85 (1.424) & 103.8 (76.04) & 25.10 (6.196) & 4.600 (1.685) \\ \hline
   Lastfm (User x User) & 15172 (592.7) & 19802 (18438) & 13809 (5017)  & 492.3 (238.7) \\ \hline 
   Lastfm (Artist X Tag) & 21509 (1131) & 14406 (12184) & 69993 (13541) & 2271 (287.7) 
  \end{tabularx}
 \end{center}
\end{table}

\begin{table}[t]
\caption{Average CPU time for convergence of the solutions 
($K=60$, $10\%$ test data). 
 Parenthesized numbers indicate standard deviations. 
 Computational time for convergence detection is excluded. 
 }
\label{tab:results_CPUtime_K60}
 \begin{center}
  \begin{tabularx}{148mm}{X||c|c|c|c}
   Dataset & Gibbs & VB & CVB & CVB0 \\ \hline \hline
   Synth1 & 72.20 (4.976) & 9.800 (4.045) & 62.75 (28.98) & 3.850 (1.236) \\ \hline 
   Synth2 & 4147 (65.96) & 318.7 (6.034) & 908.9 (64.66) & 11.55 (3.694) \\ \hline 
   Enron Jun. & 79.50 (24.52) & 202.1 (151.0) & 58.45 (12.61) & 1.000 (0.000) \\ \hline 
   Enron Aug. & 75.25 (7.203) & 287.4 (166.7) & 42.45 (9.516) & 2.100 (0.740) \\ \hline 
   Enron Oct. & 65.05 (6.087) & 291.3 (126.6) & 89.60 (34.02) & 2.200 (0.400) \\ \hline 
   Enron Dec. & 59.95 (5.643) & 225.6 (150.2) & 27.40 (5.054) & 3.550 (0.9206) \\ \hline
   Lastfm (User x User) & 14730 (785.9) & 21341 (26698) & 17357 (6121)  & 450.0 (182.8) \\ \hline 
   Lastfm (Artist X Tag) & 33704 (2391) & 33567 (32485)  & 64431 (8146) & 1870 (178.4) 
  \end{tabularx}
 \end{center}
\end{table}

% enron June K40
\begin{figure*}
\begin{center}	
\includegraphics[width=70mm]{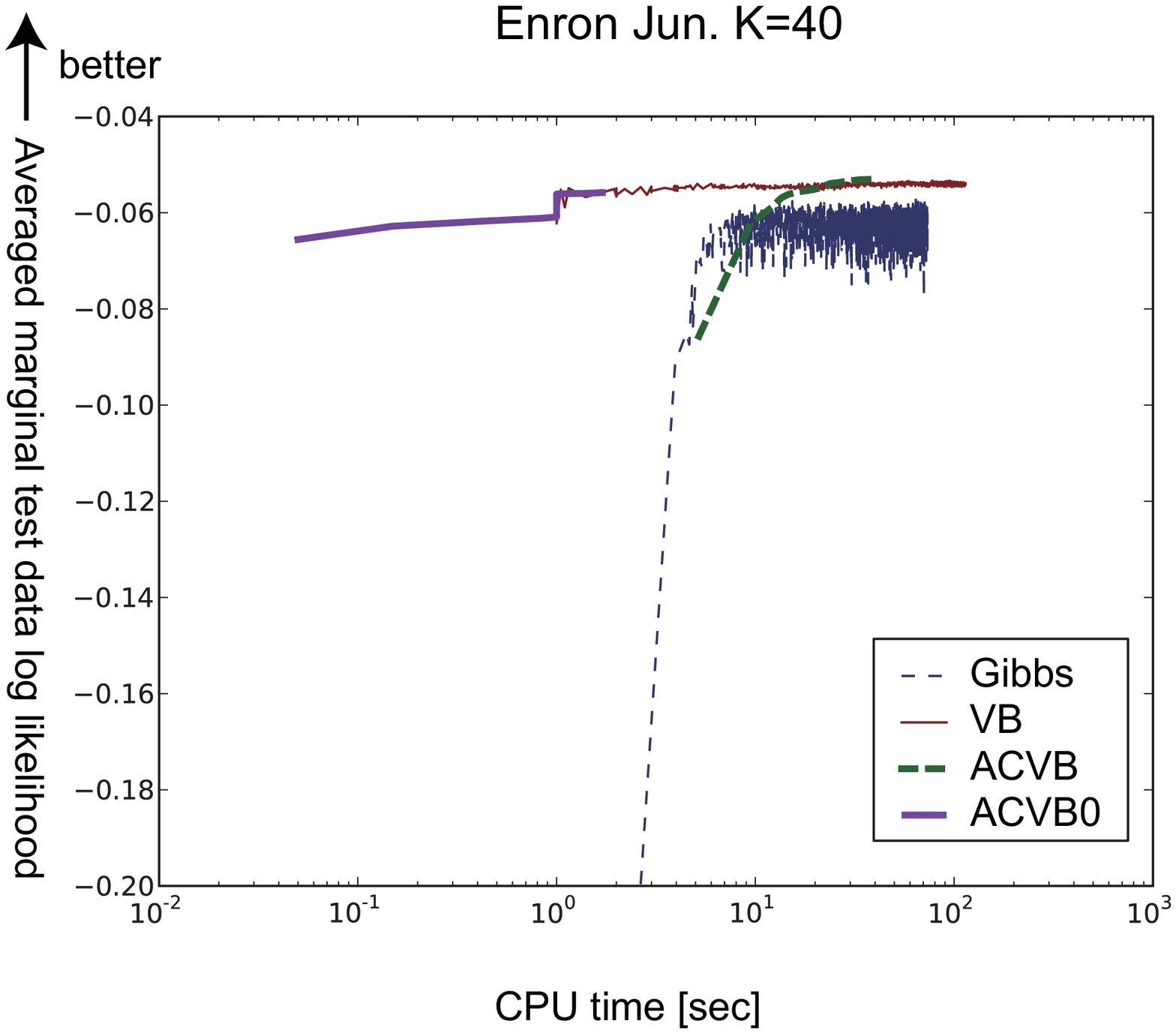}
\end{center}
\caption{Averaged test data marginal log likelihoods vs. inference
 CPU time on Enron Jun. data, $K$=40. 
 The horizontal axis denotes CPU time [sec], and the
 vertical axis denotes average test data marginal log likelihoods per relation entry. 
 Presented Gibbs results are those of sampled assignments, not of averaged posteriors. }
 \label{fig:results_enron06_K40_vsTime}
\end{figure*}

% Lastfm UserXUser K20
\begin{figure*}
\begin{center}	
\includegraphics[width=70mm]{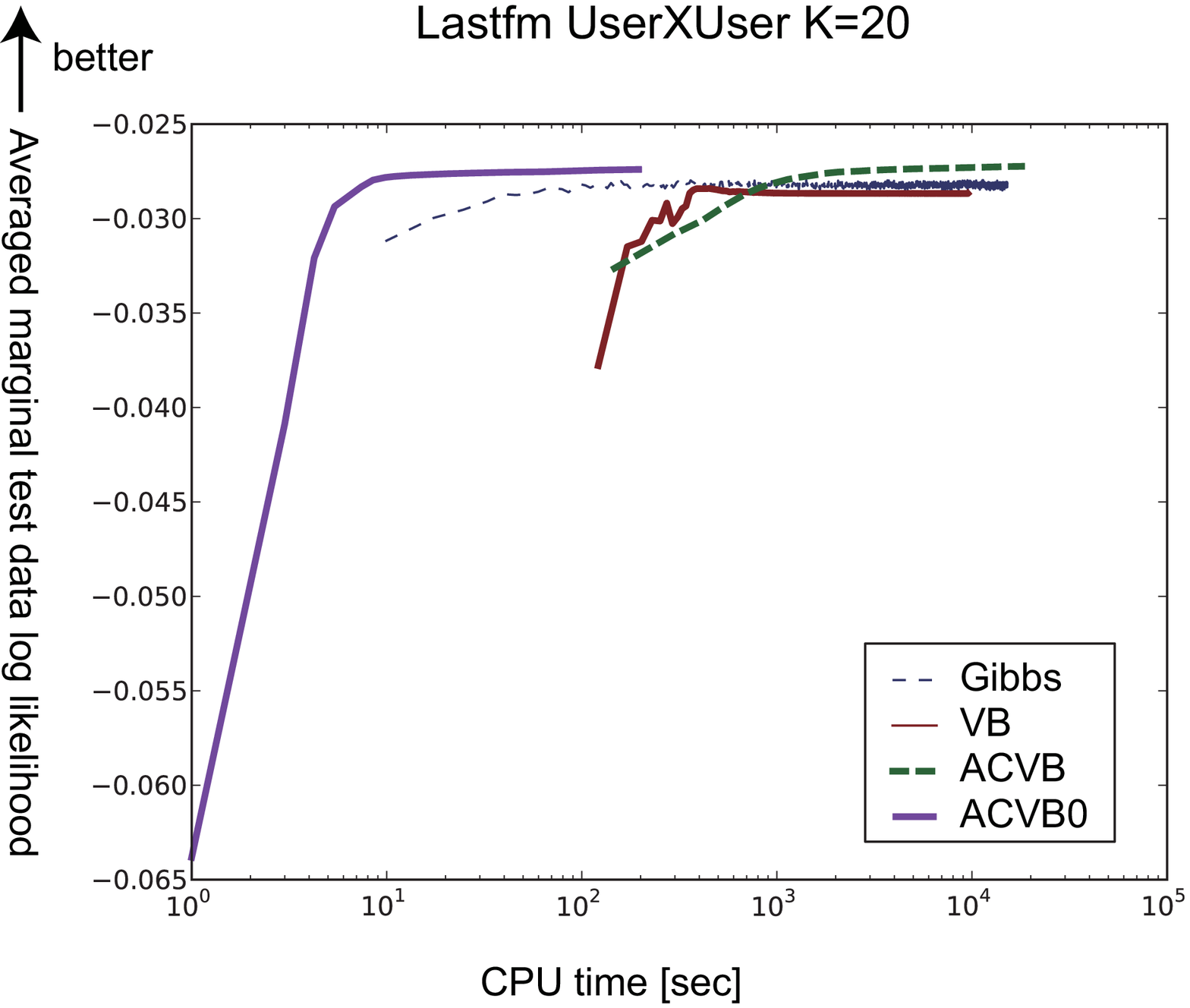}
\end{center}
\caption{Averaged test data marginal log likelihoods vs. inference
 CPU time on Lastfm UserXUser data, $K$=20. 
 The horizontal axis denotes CPU time [sec], and the
 vertical axis denotes average test data marginal log likelihoods per relation entry. 
 Presented Gibbs results are those of sampled assignments, not of averaged posteriors. }
 \label{fig:results_lastfmUserXUser_K20_vsTime}
\end{figure*}

% Lastfm ArtistXTag K20
\begin{figure*}
\begin{center}	
\includegraphics[width=70mm]{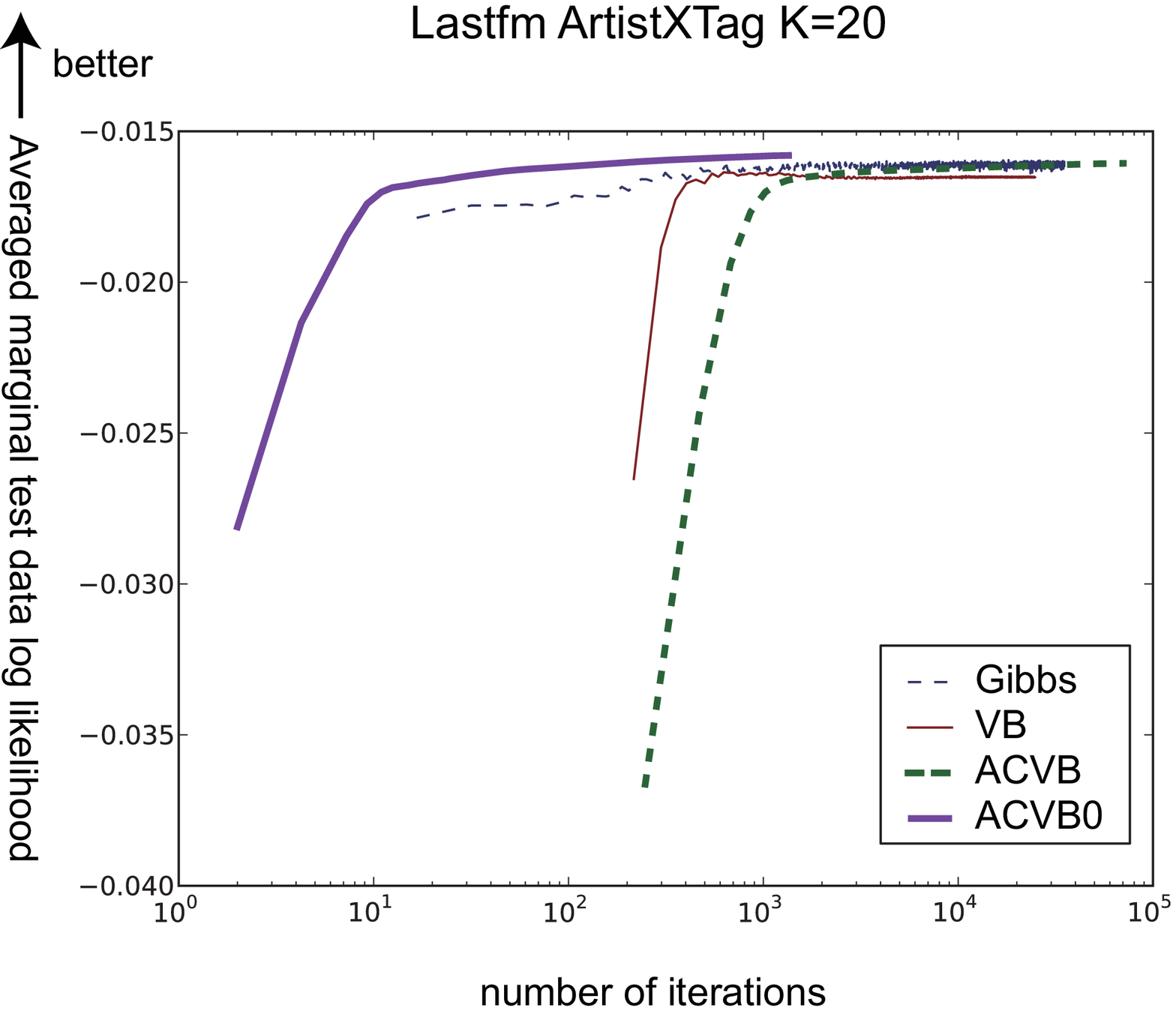}
\end{center}
\caption{Averaged test data marginal log likelihoods vs. inference
 CPU time on Lastfm ArtistXTag data, $K$=20. 
 The horizontal axis denotes CPU time [sec], and the
 vertical axis denotes average test data marginal log likelihoods per relation entry. 
 Presented Gibbs results are those of sampled assignments, not of averaged posteriors. }
 \label{fig:results_lastfmArtistXTag_K20_vsTime}
\end{figure*}

% Lastfm ArtistXTag K60
\begin{figure*}
\begin{center}	
\includegraphics[width=70mm]{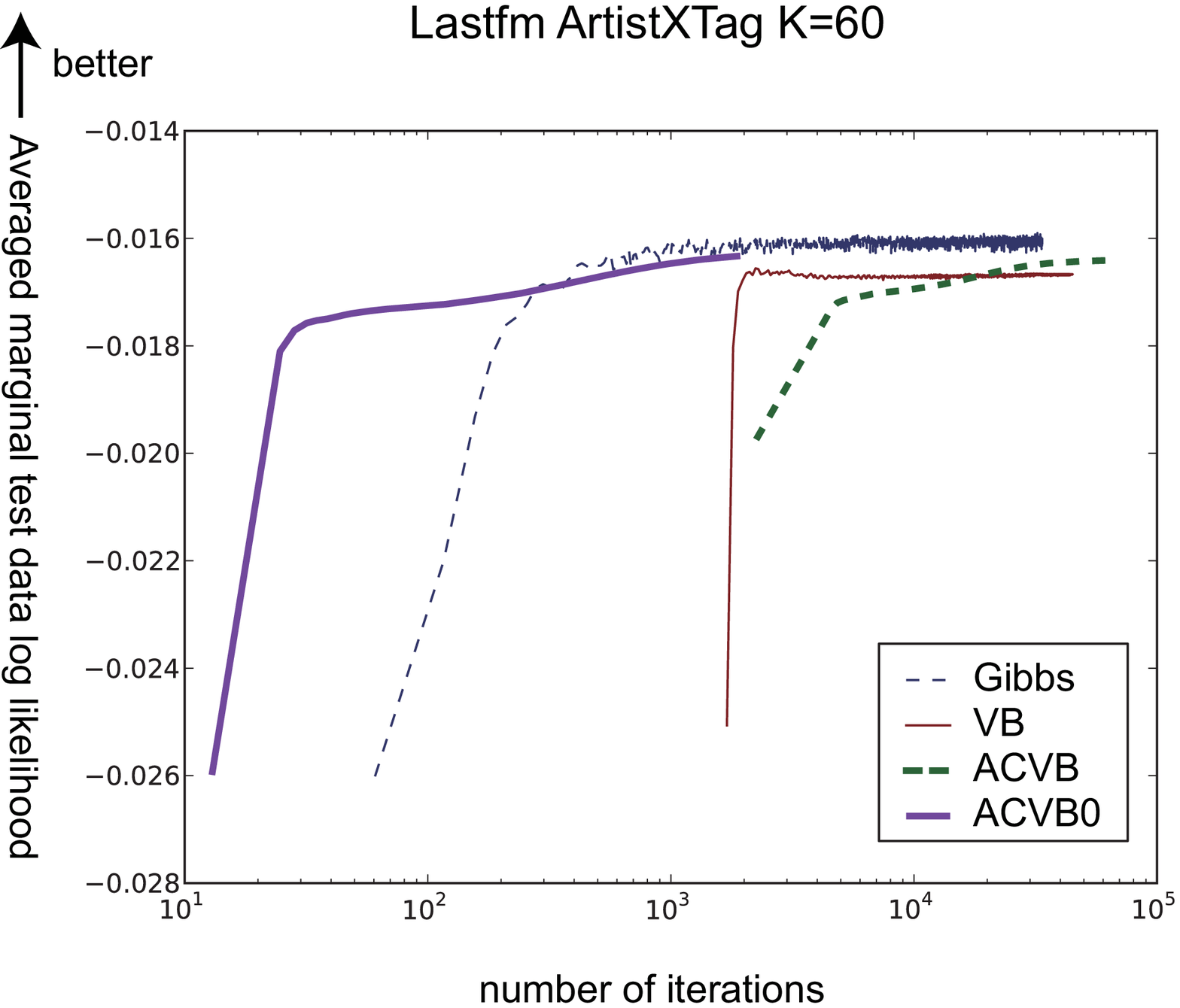}
\end{center}
\caption{Averaged test data marginal log likelihoods vs. inference
 CPU time on Lastfm ArtistXTag data, $K$=60. 
 The horizontal axis denotes CPU time [sec], and the
 vertical axis denotes average test data marginal log likelihoods per relation entry. 
 Presented Gibbs results are those of sampled assignments, not of averaged posteriors. }
 \label{fig:results_lastfmArtistXTag_K60_vsTime}
\end{figure*}

\subsection{Large data clustering experiment}
To further demonstrate the usefulness of IRM with ACVB0, 
we conduct further experiments on clustering of a larger dataset. 

Our scenario is a typical situation of practical relational data analysis. 
Our goal is to perform clustering of relational data, 
hoping to extract some knowledge from the data. 
We do not need to evaluate the test generalization performance; 
thus we assume no missing entries within the relational matrix $\bm{X}$ (or 
impute missing entries in preprocessing). 
Therefore, we can use a linear time ACVB0 inference. 
We cannot evaluate the data modeling performance by test data, 
thus we only show the computational time until convergence, 
with different $N$ and different truncation level $K$. 

We employ five relational data for clustering experiments. 
% Last FM, User by User 1892 by 1892
% Last FM, Artist by Tags,  6099 by 1088
First, we borrow the two largest datasets from the previous experiments: 
\textbf{Lastfm UserXUser} with the size of $N_{1}=N_{2}=1892$ users 
and \textbf{Lastfm ArtistXTag} with $N_{1} = 6,099$ artists by $N_{2} = 1,088$ tags. 
%
% Movielens-10M, 10681 movies by 69878 users, 6 ratings
The third data is the \textbf{Movielens-10M} dataset. 
The data consists of ratings on $N_{1} = 10,681$ unique movies by $N_{2} = 69,878$ unique users. 
As the name indicates, there are about 10 million ratings. 
We treat all rated entries (regardless of the rating points) as 
positive relations. 
%
% Netflix
Also, we prepare the fourth and fifth largest datasets from the \textbf{Netflix} data. 
The data consists of ratings by $N_{1} = 480,189$ unique users 
on $N_{2} = 17,770$ unique movies.  
% netflix rate_1 480189 users by 17770 movies, 4617990 NNZ
% netflix rate_5 480189 users by 17770 movies, 23168232 NNZ
We can use the full dataset, but 
we prepare two subsets of the Netflix data to measure the impact of 
the number of non-zero elements, which would affect the average degrees. 
\textbf{Netflix-rate1} data consists of rating entries that have ``1'' (worst) values. 
There are about 4 million ``1'' entries, and we treat them as the positive relation between users and movies. 
\textbf{Netflix-rate5} data consists of those with ``5'' (best) values. 
There are about 23 million ``5'' entries that are assumed as positive relations. 

\begin{table*}
\caption{Data sizes and ACVB0 CPU times [sec] in large data clustering experiments.  }
 \label{tab:large_data_result}
 \begin{center}  
 \small
  \begin{tabularx}{156mm}{X||c|c|c||c|c|c|c|c|c}   
   \multirow{2}{*}{Dataset} & \multirow{2}{*}{$N_{1}$} & \multirow{2}{*}{$N_{2}$} & \multirow{2}{*}{Density} & \multicolumn{3}{c|}{CPU times (linear)} & \multicolumn{3}{c}{CPU times (naive)}\\
    & & & & $K$=20 & $K$=40 & $K$=60 & $K$=20 & $K$=40 & $K$=60 \\
   \hline \hline
   Lastfm U.XU. & 1892 & 1892 & 0.60\% & 183.2 & 236.6 & 194.6 & 400.5 & 492.3 & 450.0 \\
   Lastfm A.XT. & 6099 & 1088 & 0.35\% & 414.3 & 621.9 & 826.6 & 1024 & 2271 & 1870 \\
   Movielens-10M & 10681 & 69878 & 1.34\% & 2087 & 9569 & 22466 & NA & NA & NA \\
   Netflix-rate1 & 480189 & 17770 & 0.05\% & 24973 & 112116 & 126482 & NA & NA & NA \\
   Netflix-rate5 & 480189 & 17770 & 0.27\% & 13440 & 104397 & 208604 & NA & NA & NA
  \end{tabularx}
 \end{center}
\end{table*}

\mytabref{tab:large_data_result} presents the CPU times for convergence. 
We tested on several hyperparameter setups, and report the average CPU times 
of the setup of the best training data log likelihood. 
As evident from the table, the linear inference of ACVB0 enables clustering computations on 
large relational data. 

We can observe that the computational times are affected by several factors, 
but not as predicted from the theory. 
The computational order of IRM-ACVB0 is $O(L \times (N_{1} + N_{2}) \times K_{1} \times K_{2})$. 
Thus, computational time would grows linear to the number of objects and the density, 
and square to $K$. 
%For example, the computational order of IRM-ACVB0 is $O(L(N_{1} + N_{2}))$ in the case of linear inferences. 
In general, datasets with large $N_{1}$ , $N_{2}$ took more CPU time for convergence but 
CPU time is not proportional to $N$. 
%
%We see this holds for two \textbf{Lastfm} datasets. 
%The number of objects of \textbf{Lastfm ArtistXTag} is approximately double 
%that of \textbf{Lastfm UserXUser}. The CPU times result in a similar proportion. 
For the effect of data density, please consult the rows of \textbf{Netflix} data. 
We see that the density does not necessarily governs the CPU times for convergence.  
%
%The factor of model complexity is also notable. 
The CPU time does not grow larger as expected from the model complexity as well. 
We expect to have four and nine times larger CPU times at $K=40$ models and $K=60$ models, 
compared to the $K=20$ models. 
However, this does not hold for all datasets excepting \textbf{Movielens-10M}. 
This is because the ACVB0 model shrinks as the model discovers lesser 
numbers of latent clusters from the given data, thanks to the cluster shrinkage technique 
introduced in the Speeding-Up section. 
%In other words, the computational time of 
%the ACVB0 inference is affected by the complexity of the hidden cluster structure. 

Also note the convergence CPU time is deeply affected by the threshold of convergence detection: 
if we loosen the threshold from $0.001\%$ relative changes to $0.01\%$, 
convergence typically becomes 10 times (or more) faster. 

We argue that the convergence-guaranteed ACVB0 is 
especially beneficial for large data analysis.  
We can solve collapsed Gibbs in linear time as well, but 
several millions of iterations are not enough to obtain good posterior estimations~\citep{Albers13}. 
Also the collapsed Gibbs requires to monitor 
the inference process because we have no measure to detect convergence. 
In large data analysis, this is costly and painful. % to manually monitor experiments. 
In contrast, ACVB0 does not require such elaboration because 
it can detect the assured convergence easily. 
Combining the test data modeling results and very fast computation times, 
the proposed ACVB0 solution is a good practical choice for IRM, even for large relational data.

\section{Conclusion}
In this paper, we proposed Averaged collapsed variational Bayes (ACVB) inference of 
the Infinite Relational Model (IRM), 
which is a 
convergence-guaranteed and practically useful deterministic inference algorithm 
to replace naive VB. 

First, we formulated a CVB lower bound for IRM based on the standard procedure, 
which is intractable to evaluate exactly. 
For this problem, we used Taylor approximations 
as in CVB research on topic models, 
and derived the full formulations and 
the inference procedure for two types of CVB solutions. 
We also provided the CVB0-based update rules of hyperparameters, including the
concentration parameter of the Dirichlet Process, which has been never reported in the literature. 

To make the CVB inference more practically useful, we studied the CVB inference in two aspects. 
First is the convergence issue, which is an open problem for the CVB inference. 
We started by examining two possible quantities to assess the convergence of CVB solutions of IRM. 
After that, we proposed a simple and effective
annealing technique, Averaged CVB (ACVB), to assure the convergence of CVB solutions. 
ACVB posterior update offers assured convergence thanks to its simple annealing mechanism. 
Moreover, the stationary point of the CVB lower bound is equivalent to the converged solution of ACVB, 
if the lower bound has a stationary point (an issue unresolved in the literature). 
ACVB is applicable to any model, and is equally valid for CVB and CVB0. 

The second aspect is the computational speed of CVB. 
We proposed a cluster shrinkage technique and a linear-time inference implementation. 
These techniques make the IRM inference more scalable against the data size, 
and open the door to larger and more complex relational data analysis applications. 

The resulting CVB solutions offer more precise inference than naive VB in experiments. 
At the same time, the annealing ACVB technique allows us to 
automatically detect convergence and yields short computational time. 
We also confirmed that the linear time inference of (A)CVB0 allows us to analyze large 
two-place relational data. 

As future work, we will further enhance inference speed. 
One possible solution is to stochastically approximate the sample size 
as in SGD. Recently, \citep{Foulds13} proposed such approximation for LDA. 
Another way is to parallelize the inference procedure, 
as 
\citep{Hansen11,Albers13} have examined the parallelization of collapsed Gibbs samplers on IRM. 
%The fixed number truncation of CVB simplifies the parallelization compared to 
%collapsed Gibbs samplers.  
It is also important to explore efficient CVB algorithms for more advanced models 
such as MMSB and its followers~\citep{Airoldi08, Miller09, Griffiths_Ghahramani11}. 
Aside from the representation of multiple cluster assignments, 
a few studies have headed toward to other issues. 
For example, \citep{Fu09,NIPS10} focused on dynamics of network evolution in the context of 
stochastic blockmodels (MMSB and IRM). 
Subset IRM~\citep{AISTATS12} is another extension of IRM that automatically "filters out" 
nodes from the clustering that are not so informative to group. 
Applying CVB for these models may make it easier for practitioners to examine the depth of various relational data.

\vskip 0.2in
\bibliography{myref}

\begin{thebibliography}{43}
\providecommand{\natexlab}[1]{#1}
\providecommand{\url}[1]{\texttt{#1}}
\expandafter\ifx\csname urlstyle\endcsname\relax
  \providecommand{\doi}[1]{doi: #1}\else
  \providecommand{\doi}{doi: \begingroup \urlstyle{rm}\Url}\fi

\bibitem[Airoldi et~al.(2008)Airoldi, Blei, Fienberg, and Xing]{Airoldi08}
Edoardo~M Airoldi, David~M Blei, Stephen~E Fienberg, and Eric~P Xing.
\newblock {Mixed Membership Stochastic Blockmodels.}
\newblock \emph{Journal of Machine Learning Research}, 9:\penalty0 1981--2014,
  2008.

\bibitem[Albers et~al.(2013)Albers, Moth, M\o~rup, and Schmidt]{Albers13}
Kristoffer~Jon Albers, Andreas Leon~Aagard Moth, Morten M\o~rup, and Mikkel N~.
  Schmidt.
\newblock {Large Scale Inference in the Infinite Relational Model: Gibbs
  Sampling is not Enough}.
\newblock In \emph{Proceedings of the IEEE International Workshop on Machine
  Learning for Signal Processing (MLSP)}, 2013.

\bibitem[Asuncion et~al.(2009)Asuncion, Welling, Smyth, and Teh]{Asuncion09}
Arthur Asuncion, Max Welling, Padhraic Smyth, and Yee~Whye Teh.
\newblock {On Smoothing and Inference for Topic Models}.
\newblock In \emph{Proceedings of the 25th Conference on Uncertainty in
  Artificial Intelligence (UAI)}, 2009.

\bibitem[Attias(2000)]{Attias00}
Hagai Attias.
\newblock {A Variational Bayesian Framework for Graphical Models}.
\newblock In \emph{Advances in Neural Information Processing Systems 12
  (Proceedings of NIPS)}, pages 209--215, 2000.

\bibitem[Bishop(2006)]{Bishop06}
Christopher~M. Bishop.
\newblock \emph{{Pattern Recognition and Machine Learning}}.
\newblock Springer-Verlag New York, 2006.

\bibitem[Blackwell and MacQueen(1973)]{Blackwell_MacQueen73}
D.~Blackwell and J.~B. MacQueen.
\newblock {Ferguson Distributions via Polya urn schemes}.
\newblock \emph{The Annals of Statistics}, 1\penalty0 (2):\penalty0 353--355,
  1973.

\bibitem[Blei et~al.(2010)Blei, Griffiths, and Jordan]{Blei10}
David~M. Blei, Thomas~L. Griffiths, and Michael~I. Jordan.
\newblock {The nested chinese restaurant process and bayesian nonparametric
  inference of topic hierarchies}.
\newblock \emph{Journal of the ACM}, 57\penalty0 (2):\penalty0 7:1--30, 2010.

\bibitem[Clauset et~al.(2008)Clauset, Moore, and Newman]{Clauset08}
A.~Clauset, C.~Moore, and M.~E.~J. Newman.
\newblock {Hierarchical Structure and the Prediction of Missing Links in
  Networks}.
\newblock \emph{Nature}, 453:\penalty0 98--101, 2008.

\bibitem[Cowles and Carlin(1996)]{Cowles_Carlin96}
Mary~Kathryn Cowles and Bradley~P. Carlin.
\newblock {Markov Chain Monte Carlo Convergence Diagnostics : A Comparative
  Review}.
\newblock \emph{Journal of the American Statistical Association (JASA)},
  91\penalty0 (434):\penalty0 883--904, 1996.

\bibitem[Erosheva et~al.(2004)Erosheva, Fienberg, and Lafferty]{Erosheva04}
E.~Erosheva, S.~Fienberg, and J.~Lafferty.
\newblock {Mixed-membership Models of Scientific Publications}.
\newblock \emph{Proceedings of the National Academy of Sciences of the United
  States of America (PNAS)}, 101\penalty0 (Suppl 1):\penalty0 5220--5227, 2004.

\bibitem[Foulds et~al.(2013)Foulds, Boyles, DuBois, Symyth, and
  Welling]{Foulds13}
James Foulds, Levi Boyles, Christopher DuBois, Padhraic Symyth, and Max
  Welling.
\newblock {Stochastic collapsed variational Bayesian inference for latent
  Dirichlet allocation}.
\newblock In \emph{Proceedings of the 19th ACM SIGKDD International Conference
  on Knowledge Discovery and Data Mining (KDD)}, 2013.

\bibitem[Fu et~al.(2009)Fu, Song, and Xing]{Fu09}
Wenjie Fu, Le~Song, and Eric~P. Xing.
\newblock {Dynamic mixed membership blockmodel for evolving networks}.
\newblock In \emph{Proceedings of the 26th Annual International Conference on
  Machine Learning (ICML)}, 2009.
\newblock ISBN 9781605585161.

\bibitem[Griffiths and Ghahramani(2011)]{Griffiths_Ghahramani11}
Thomas~L. Griffiths and Zoubin Ghahramani.
\newblock {The Indian Buffet Process : An Introduction and Review}.
\newblock \emph{Journal of Machine Learning Research}, 12:\penalty0 1185--1224,
  2011.

\bibitem[Hansen et~al.(2011)Hansen, M\o~rup, and Hanse]{Hansen11}
Toke~Jansen Hansen, Morten M\o~rup, and Lars~Kai Hanse.
\newblock {Non-parametric Co-clustering of Large Scale Sparse Bipartite
  Networks on the GPU}.
\newblock In \emph{Proceedings of the IEEE International Workshop on Machine
  Learning for Signal Processing (MLSP)}, 2011.

\bibitem[Hensman et~al.(2012)Hensman, Rattray, and Lawrence]{Hensman12}
James Hensman, Magnus Rattray, and Neil~D Lawrence.
\newblock {Fast Variational Inference in the Conjugate Exponential Family}.
\newblock \emph{arXive}, page 1206.5162v2, 2012.

\bibitem[Ho et~al.(2011)Ho, Parikh, and Xing]{Ho11}
Qirong Ho, Ankur~P Parikh, and Eric~P Xing.
\newblock {Multiscale Community Blockmodel for Network Exploration.}
\newblock In \emph{Proceedings of the 14th International Conference on
  Artificial Intelligence and Statistics (AISTATS)}, 2011.

\bibitem[Ho et~al.(2012)Ho, Yin, and Xing]{Ho12}
Qirong Ho, Junming Yin, and Eric~P. Xing.
\newblock {On Triangular versus Edge Representations - Towards Scalable
  Modeling of Networks}.
\newblock In \emph{Advances in Neural Information Processing Systems 25
  (Proceedings of NIPS)}, 2012.

\bibitem[Ishiguro et~al.(2010)Ishiguro, Iwata, Ueda, and Tenenbaum]{NIPS10}
Katsuhiko Ishiguro, Tomoharu Iwata, Naonori Ueda, and Joshua Tenenbaum.
\newblock {Dynamic Infinite Relational Model for Time-varying Relational Data
  Analysis}.
\newblock In \emph{Advances in Neural Information Processing Systems 23
  (Proceedings of NIPS)}, 2010.

\bibitem[Ishiguro et~al.(2012)Ishiguro, Ueda, and Sawada]{AISTATS12}
Katushiko Ishiguro, Naonori Ueda, and Hiroshi Sawada.
\newblock {Subset Infinite Relational Models}.
\newblock In \emph{Proceedings of the 15th International Conference on
  Artificial Intelligence and Statistics (AISTATS)}, 2012.

\bibitem[Iwata et~al.(2012)Iwata, Yamada, Sakurai, and Ueda]{Iwata12}
Tomoharu Iwata, Takeshi Yamada, Yasushi Sakurai, and Naonori Ueda.
\newblock {Sequential Modeling of Topic Dynamics with Multiple Timescales}.
\newblock \emph{ACM Transactions on Knowledge Discovery from Data}, 5\penalty0
  (4):\penalty0 19:1--19:27, 2012.

\bibitem[Kemp et~al.(2006)Kemp, Tenenbaum, Griffiths, Yamada, and Ueda]{Kemp06}
Charles Kemp, Joshua~B Tenenbaum, Thomas~L Griffiths, Takeshi Yamada, and
  Naonori Ueda.
\newblock {Learning Systems of Concepts with an Infinite Relational Model}.
\newblock In \emph{Proceedings of the 21st National Conference on Artificial
  Intelligence (AAAI)}, 2006.

\bibitem[Klimt and Yang(2004)]{Klimt_Yang04}
Bryan Klimt and Yiming Yang.
\newblock {The Enron Corpus : A New Dataset for Email Classification Research}.
\newblock In \emph{Proceedings of the European Conference on Machine Learning
  (ECML)}, 2004.

\bibitem[Kurihara et~al.(2007)Kurihara, Welling, and Teh]{Kurihara07}
Kenichi Kurihara, Max Welling, and Yee~Whye Teh.
\newblock {Collapsed Variational Dirichlet Process Mixture Models}.
\newblock In \emph{Proceedings of the 20th International Joint Conference on
  Artificial Intelligence (IJCAI)}, 2007.

\bibitem[Liben-Nowell and Kleinberg(2003)]{LibenNowell_Kleinberg03}
David Liben-Nowell and Jon Kleinberg.
\newblock {The Link Prediction Problem for Social Networks}.
\newblock In \emph{Proceedings of the Twelfth Annual ACM International
  Conference on Information and Knowledge Management (CIKM)}, 2003.

\bibitem[Miller et~al.(2009)Miller, Griffiths, and Jordan]{Miller09}
Kurt~T Miller, Thomas~L Griffiths, and Michael~I Jordan.
\newblock {Nonparametric Latent Feature Models for Link Prediction}.
\newblock In \emph{Advances in Neural Information Processing Systems 22
  (Proceedings of NIPS)}, 2009.

\bibitem[Minka(2000)]{Minka00}
Thomas~P. Minka.
\newblock {Estimating a Dirichlet distribution}.
\newblock
  http://research.microsoft.com/en-us/um/people/minka/papers/dirichlet/, 2000.

\bibitem[M\o~rup et~al.(2010)M\o~rup, Madsen, Dogonowski, Siebner, and
  Hansen]{Morup10}
Morten M\o~rup, Kristoffer~Hougaard Madsen, Anne~Marie Dogonowski, Hartwig
  Siebner, and Lars~Kai Hansen.
\newblock {Infinite Relational Modeling of Functional Connectivity in Resting
  State fMRI}.
\newblock In \emph{Advances in Neural Information Processing Systems 23
  (Proceedings of NIPS)}, 2010.

\bibitem[Palla et~al.(2012)Palla, Knowles, and Ghahramani]{Palla12ICML}
Konstantina Palla, David~A Knowles, and Zoubin Ghahramani.
\newblock {An Infinite Latent Attribute Model for Network Data}.
\newblock In \emph{Proceedings of the 29th International Conference on Machine
  Learning (ICML)}, 2012.

\bibitem[Robbins and Monro(1951)]{Robbins_Monro51}
H.~Robbins and S.~Monro.
\newblock {A stochastic approximation method}.
\newblock \emph{The Annals of Mathematical Statistics}, pages 400--407, 1951.

\bibitem[Sato and Nakagawa(2012)]{Sato_Nakagawa12}
Issei Sato and Hiroshi Nakagawa.
\newblock {Rethinking Collapsed Variational Bayes Inference for LDA}.
\newblock In \emph{Proceedings of the 29th International Conference on Machine
  Learning (ICML)}, 2012.

\bibitem[Sato et~al.(2012)Sato, Kurihara, and Nakagawa]{Sato12}
Issei Sato, Kenichi Kurihara, and Hiroshi Nakagawa.
\newblock {Practical collapsed variational bayes inference for hierarchical
  dirichlet process}.
\newblock In \emph{Proceedings of the 18th ACM SIGKDD international conference
  on Knowledge discovery and data mining (KDD)}, 2012.

\bibitem[Sethuraman(1994)]{Sethuraman94}
J.~Sethuraman.
\newblock {A Constructive Definition of Dirichlet Process}.
\newblock \emph{Statistica Sinica}, 4:\penalty0 639--650, 1994.

\bibitem[Tang et~al.(2008)Tang, Liu, Zhang, and Nazeri]{Tang08}
Lei Tang, Huan Liu, Jianping Zhang, and Zohreh Nazeri.
\newblock {Community evolution in dynamic multi-mode networks}.
\newblock In \emph{Proceeding of the 14th ACM SIGKDD international conference
  on Knowledge discovery and data mining (KDD)}, 2008.

\bibitem[Teh et~al.(2007)Teh, Newman, Welling, and Neaman]{Teh07NIPS}
Yee~Whye Teh, David Newman, Max Welling, and D~Neaman.
\newblock {A Collapsed Variational Bayesian Inference Algorithm for Latent
  Dirichlet Allocation}.
\newblock In \emph{Advances in Neural Information Processing Systems 19
  (Proceedings of NIPS)}, 2007.

\bibitem[Teh et~al.(2008)Teh, Kurihara, and Welling]{Teh08NIPS_HDP}
Yee~Whye Teh, Kenichi Kurihara, and Max Welling.
\newblock {Collapsed Variational Inference for HDP}.
\newblock In \emph{Advances in Neural Information Processing Systems 20
  (Proceedings of NIPS)}, 2008.

\bibitem[Wallach(2008)]{Wallach08}
Hanna~M Wallach.
\newblock \emph{{Structured Topic Models for Language}}.
\newblock PhD thesis, University of Cambridge, 2008.

\bibitem[Wang and Blunsom(2013{\natexlab{a}})]{Wang_Blunsom13}
Pengyu Wang and Phil Blunsom.
\newblock {Collapsed Variational Bayesian Inference for Hidden Markov Models}.
\newblock In \emph{Proceedings of the 16th International Conference on
  Artificial Intelligence and Statistics (AISTATS)}, 2013{\natexlab{a}}.

\bibitem[Wang and Blunsom(2013{\natexlab{b}})]{Wang_Blunsom13ACL}
Pengyu Wang and Phil Blunsom.
\newblock {Collapsed Variational Bayesian Inference for PCFGs}.
\newblock In \emph{Proceedings of the Seventeenth Conference on Computational
  Natural Language Learning (ACL)}, 2013{\natexlab{b}}.

\bibitem[Williamson et~al.(2013)Williamson, Dubey, and Xing]{Williamson13}
Sinead~A Williamson, Avinava Dubey, and Eric~P Xing.
\newblock {Parallel Markov Chain Monte Carlo for Nonparametric Mixture Models}.
\newblock In \emph{Proceedings of the 30th International Conference on Machine
  Learning (ICML)}, volume~28, 2013.

\bibitem[Yang and Leskovec(2013)]{Yang_Leskovec13}
Jaewon Yang and Jure Leskovec.
\newblock {Overlapping Community Detection at Scale : A Nonnegative Matrix
  Factorization Approach}.
\newblock In \emph{Proceedings of the 6th ACM International Conference on Web
  Search and Data Mining (WSDM)}, 2013.

\bibitem[Yang et~al.(2013)Yang, Mcauley, and Leskovec]{Yang13}
Jaewon Yang, Julian Mcauley, and Jure Leskovec.
\newblock {Community Detection in Networks with Node Attributes}.
\newblock In \emph{Proceedings of the IEEE Conference on Data Mining (ICDM)},
  2013.

\bibitem[Yin et~al.(2013)Yin, Ho, and Xing]{Yin13}
Junming Yin, Qirong Ho, and Eric~P Xing.
\newblock {A Scalable Approach to Probabilistic Latent Space Inference of
  Large-Scale Networks}.
\newblock In \emph{Advances in Neural Information Processing Systems 26
  (Proceedings of NIPS)}, 2013.

\bibitem[Zhu et~al.(2009)Zhu, Yu, and Gong]{Zhu09}
Shenghuo Zhu, Kai Yu, and Yihong Gong.
\newblock {Stochastic Relational Models for Large-scale Dyadic Data using
  MCMC}.
\newblock In \emph{Advances in Neural Information Processing Systems 21
  (Proceedings of NIPS)}, 2009.

\end{thebibliography}

\end{document}